\renewcommand{\zeta}{\overline{\rho}}
\renewcommand{\omega}{\underline{\rho}}
\DeclareMathOperator{\polylog}{\rm polylog}
\title{\huge Benign Overfitting in Two-layer Convolutional\\ Neural Networks}
\author
{
    Yuan Cao\thanks{Equal contribution} \thanks{Department of Statistics and Actuarial Science and Department of Mathematics, The University of Hong Kong, Hong Kong; e-mail:  {\tt yuancao@hku.hk}} 
    ~~~and~~~
	Zixiang Chen\footnotemark[1] \thanks{Department of Computer Science, University of California, Los Angeles, CA 90095, USA; e-mail: {\tt chenzx19@cs.ucla.edu}} 
	~~~and~~~
	Mikhail Belkin\thanks{Halicioğlu Data Science Institute, University of California, San Diego, La Jolla, CA 92093, USA; e-mail: {\tt mbelkin@ucsd.edu}} 
	~~~and~~~
	Quanquan Gu\thanks{Department of Computer Science, University of California, Los Angeles, CA 90095, USA; e-mail: {\tt qgu@cs.ucla.edu}}
}
\date{}
\newcommand{\la}{\langle}
\newcommand{\ra}{\rangle}
\newtheorem{condition}[theorem]{Condition}
\def \poly {\mathrm{poly}}
\begin{document}

\maketitle

\begin{abstract}%
  Modern neural networks often have great expressive power and can be trained to overfit the training data, while still achieving a good test performance. This phenomenon is referred to as ``benign overfitting''. Recently, there emerges a line of works studying ``benign overfitting'' from the theoretical perspective. However, they are limited to linear models or kernel/random feature models, and there is still a lack of  theoretical understanding about when and how benign overfitting occurs in neural networks. In this paper, we study the benign overfitting phenomenon in training a two-layer convolutional neural network (CNN). We show that when the signal-to-noise ratio satisfies a certain condition, a two-layer CNN trained by gradient descent can achieve arbitrarily small training and test loss. On the other hand, when this condition does not hold, overfitting becomes harmful and the obtained CNN can only achieve constant level test loss. These together demonstrate a sharp phase transition between benign overfitting and harmful overfitting, driven by the signal-to-noise ratio. To the best of our knowledge, this is the first work that precisely characterizes the conditions under which benign overfitting can occur in training convolutional neural networks. 
\end{abstract}


\section{Introduction}
Modern deep learning models often consist of a huge number of model parameters, which is more than the number of training data points and therefore over-parameterized. These over-parameterized models can be trained to overfit the training data (achieving a close to $100\%$ training accuracy), while still making accurate prediction on the unseen test data. This phenomenon has been observed in a number of prior works \citep{zhang2016understanding,neyshabur2018role}, and
is often referred to as \textit{benign overfitting} \citep{bartlett2020benign}. 
It revolutionizes the  
the classical understanding about the bias-variance trade-off in statistical learning theory, and has drawn great attention from the community  \citep{belkin2018understand,belkin2019reconciling,belkin2019two,hastie2019surprises}. 




There exist a number of works towards understanding the benign overfitting phenomenon. While they offered important insights into the benign overfitting phenomenon, most of them are limited to the settings of linear models \citep{belkin2019two,bartlett2020benign,hastie2019surprises,wu2020optimal,chatterji2020finite,zou2021benign,cao2021risk} and kernel/random features models \citep{belkin2018understand,liang2020just,montanari2020interpolation}, and cannot be applied to neural network models that are of greater interest. The only notable exceptions are \citep{adlam2020neural,li2021towards}, which attempted to understand benign overfitting in neural network models. However, they are still limited to the ``neural tagent kernel regime'' \citep{jacot2018neural} where the neural network learning problem is essentially equivalent to kernel regression. Thus, it remains a largely open problem to show how and when benign overfitting can occur in neural networks. 


Clearly, understanding benign overfitting in neural networks is much more challenging than that in linear models, kernel methods or random feature models. The foremost challenge stems from nonconvexity: previous works on linear models and kernel methods/random features are all in the convex setting, while neural network training is a highly nonconvex optimization problem. 
Therefore, while most of the previous works can study the minimum norm interpolators/maximum margin classifiers according to the \textit{implicit bias} \citep{soudry2017implicit} results for the corresponding models, existing implicit bias results for neural networks (e.g., \citet{lyu2019gradient}) are not sufficient and a new analysis of the neural network learning process is in demand. 



In this work, we provide one such algorithmic analysis for learning two-layer convolutional neural networks (CNNs) with the second layer parameters being fixed as $+1$'s and $-1$'s and  polynomial ReLU activation function: $\sigma(z) = \max\{0,z\}^q$, where $q > 2$ is a hyperparameter. 
We consider a setting where the input data consist of \textit{label dependent signals} and \textit{label independent noises}, and utilize a \textit{signal-noise decomposition} of the CNN filters to precisely characterize the signal learning and noise memorization processes during neural network training. Our result not only demonstrates that benign overfitting can occur in learning two-layer neural networks, but also gives precise conditions under which the overfitted CNN trained by gradient descent can achieve small population loss. Our paper makes the following major contributions:
\begin{itemize}[leftmargin = *]
    \item We establish population loss bounds of overfitted CNN models trained by gradient descent, and theoretically demonstrate that benign overfitting can occur in learning over-parameterized neural networks. We show that under certain conditions on the signal-to-noise ratio, CNN models trained by gradient descent will prioritize learning the signal over memorizing the noise, and thus achieving both small training and test losses. To the best of our knowledge, this is the first result on the benign overfitting of neural networks that is beyond the neural tangent kernel regime. 
    \item We also establish a negative result showing that when the conditions on the signal-to-noise ratio do not hold, then the overfitted CNN model will achieve at least a constant population loss. This result, together with our upper bound result,  reveals an interesting phase transition between benign overfitting and harmful overfitting. 
    \item Our analysis is based on a new proof technique namely \textit{signal-noise decomposition}, which decomposes the convolutional filters into a linear combination of initial filters, the signal vectors and the noise vectors.  
    We convert the neural network learning into a discrete dynamical system of the coefficients from the decomposition, and perform a two-stage analysis that decouples the complicated relation among the coefficients. This enables us to analyze the non-convex  optimization problem, and bound the population loss of the CNN trained by gradient descent.
    We believe our proof technique is of independent interest and can potentially be applied to deep neural networks.
\end{itemize}


We note that a concurrent work \citep{frei2022benign} studies learning log-Concave mixture data with label flip noise using fully-connected two-layer neural networks with smoothed leaky ReLU activation. Notably, their risk bound matches the risk bound for linear models given in \citet{cao2021risk} when the label flip noise is zero. However, their analysis only focuses on upper bounding the risk, and cannot demonstrate the phase transition between benign and harmful overfitting. Compared with \citep{frei2022benign}, we focus on CNNs, and consider a different data model to better capture the nature of image classification problems. Moreover, we present both positive and negative results under different SNR regimes, and demonstrate a sharp phase transition between benign and harmful overfitting.
\paragraph{Notation.} 

Given two sequences $\{x_n\}$ and $\{y_n\}$, we denote $x_n = O(y_n)$ if there exist some absolute constant $C_1 > 0$ and $N > 0$ such that $|x_n|\le C_1 |y_n|$ for all $n \geq N$. Similarly, we denote $x_n = \Omega(y_n)$ if there exist $C_2 >0$ and $N > 0$ such that $|x_n|\ge C_2 |y_n|$ for all $n > N$. We say $x_n = \Theta(y_n)$ if $x_n = O(y_n)$ and $x_n = \Omega(y_n)$ both holds. We use $\tilde O(\cdot)$, $\tilde \Omega(\cdot)$, and $\tilde \Theta(\cdot)$ to hide logarithmic factors in these notations respectively. Moreover, we denote $x_n=\poly(y_n)$ if $x_n=O( y_n^{D})$ for some positive constant $D$, and $x_n = \polylog(y_n)$ if $x_n= \poly( \log (y_n))$. Finally, for two scalars $a$ and $b$, we denote $a \vee b = \max\{a, b\}$.

\section{Related Work} 
A line of recent works have attempted to understand why overfitted predictors can still achieve a good test performance. 
\citet{belkin2019reconciling} first empirically demonstrated that in many machine learning models such as random Fourier features, decision trees and ensemble methods , the population risk curve has a \textit{double descent} shape with respect to the number of model parameters. 
\citet{belkin2019two} further studied two specific data models, namely the Gaussian model and Fourier series model, and theoretically demonstrated the double descent risk curve in linear regression.
\citet{bartlett2020benign} studied over-parameterized linear regression to fit data produced by a linear model with additive noises, and established matching upper and lower bounds of the risk achieved by the minimum norm interpolator on the training dataset. It is shown that under certain conditions on the spectrum of the data covariance matrix, the population risk of the interpolator can be asymptotically optimal. \citet{hastie2019surprises,wu2020optimal} studied linear regression in the setting where both the dimension and sample size grow together with a fixed ratio, and showed double descent of the risk with respect to this ratio.
\citet{chatterji2020finite} studied the population risk bounds of over-parameterized linear logistic regression on sub-Gaussian mixture models with label flipping noises, and showed how gradient descent can train over-parameterized linear models to achieve nearly optimal population risk. \citet{cao2021risk} tightened the upper bound given by  \citet{chatterji2020finite} in the case without the label flipping noises, and established a matching lower bound of the risk achieved by over-parameterized maximum margin interpolators. \citet{shamir2022implicit} proposed a generic data model for benign overfitting of linear predictors, and studied different problem settings under which benign overfitting can or cannot occur.

Besides the studies on linear models, several recent works also studied the benign overfitting and double descent phenomena in kernel methods or random feature models. \citet{zhang2016understanding} first pointed out that overfitting kernel predictors can sometimes still achieve good population risk. \citet{liang2020just} studied how interpolating kernel regression with radial basis function (RBF) kernels (and variants) can generalize and how the spectrum of the data covariance matrix affects the population risk of the interpolating kernel predictor. \citet{li2021towards} studied the benign overfitting phenomenon of random feature models defined as two-layer neural networks whose first layer parameters are fixed at random initialization.  \citet{mei2019generalization,liao2020random} demonstrated the double descent phenomenon for the population risk of interpolating random feature predictors with respect to the ratio between the dimensions of the random feature and the data input. \citet{adlam2020neural} shows that neural tangent kernel \citep{jacot2018neural} based kernel regression has a triple descent risk curve with respect to the total number of trainable parameters.  \citet{montanari2020interpolation} further pointed out an interesting phase transition of the generalization error achieved by neural networks trained in the neural tangent kernel regime.

\section{Problem Setup}
In this section, we introduce the data generation model and the convolutional neural network we consider in this paper. We focus on binary classification, and present our data distribution $\cD$ in the following definition.

\begin{definition}\label{def:data}
Let $\bmu\in \RR^d$ be a fixed vector representing the signal contained in each data point. Then each data point $(\xb,y)$ with $\xb = [\xb_1^\top, \xb_2^\top]^\top\in\RR^{2d}$ and $y\in\{-1,1\}$ is generated from the following distribution $\cD$: 
\begin{enumerate}[leftmargin = *]
    \item The label $y$ is generated as a Rademacher random variable.
    \item A noise vector $\bxi$ is generated from the Gaussian distribution $N(\mathbf{0}, \sigma_p^2 \cdot (\Ib - \bmu \bmu^\top \cdot \| \bmu \|_2^{-2}))$.
    \item One of $\xb_1, \xb_2$ is given as $y\cdot \bmu$, which represents the signal, the other is given by $\bxi$, which represents noises.
\end{enumerate}

\end{definition}

Our data generation model is inspired by image data, where the inputs consist of different patches, and only some of the patches are related to the class label of the image. In detail, the patch assigned as $y\cdot \bmu$ is the signal patch that is correlated to the label of the data, and the patch assigned as $\bxi$ is the noise patch that is independent of the label of the data and therefore is irrelevant for prediction. We assume that the noise patch is generated from the Gaussian distribution $N(\mathbf{0}, \sigma_p^2 \cdot (\Ib - \bmu \bmu^\top \cdot \| \bmu \|_2^{-2}))$ to ensure that the noise vector is orthogonal to the signal vector $\bmu$ for simplicity. Note that when the dimension $d$ is large, $ \| \bxi \|_2 \approx \sigma_p \sqrt{d}$ by standard concentration bounds. Therefore, we can treat $\| \bmu\|_2 / (\sigma_p \sqrt{d}) \approx \| \bmu\|_2 / \| \bxi \|_2$ as the signal-to-noise ratio. For the ease of discussion, we denote  $\mathrm{SNR} = \| \bmu\|_2 / (\sigma_p \sqrt{d})$. Note that the Bayes risk for learning our model is zero. We can also add label flip noise similar to \citet{chatterji2020finite,frei2022benign} to make the Bayes risk equal to the label flip noise and therefore nonzero, but this will not change the key message of our paper.  


Intuitively, if a classifier learns the signal $\bmu$ and utilizes the signal patch of the data to make prediction, it can  perfectly fit a given training data set $\{ (\xb_i, y_i) : i\in [n] \}$ and at the same time have a good performance on the test data. However, when the dimension $d$ is large ($d > n$), a classifier that is a function of the noises $\bxi_i$, $i \in [n]$ can also perfectly fit the training data set, while the prediction will be totally random on the new test data. Therefore, the data generation model given in Definition~\ref{def:data} is a useful model to study the population loss of overfitted classifiers. Similar models have been studied in some recent works by \citet{li2019towards,allen2020feature,allen2020towards,zou2021understanding}.


\noindent\textbf{Two-layer CNNs.} We consider a two-layer convolutional neural network whose filters are applied to the two patches $\xb_1$ and $\xb_2$ separately, 
and the second layer parameters of the network are fixed as $+1/m$ and $-1/m$ respectively. Then
the network can be written as $f(\Wb, \xb) = F_{+1}(\Wb_{+1},\xb) - F_{-1}(\Wb_{-1},\xb)$, 
where $F_{+1}(\Wb_{+1},\xb)$,  $F_{-1}(\Wb_{-1},\xb)$ are defined as: 
\begin{align*}
F_j(\Wb_j,\xb)  &= \frac{1}{m}{\sum_{r=1}^m} \big[\sigma(\la\wb_{j,r},\xb_1\ra) + \sigma(\la\wb_{j,r}, \xb_2\ra)\big]= \frac{1}{m} {\sum_{r=1}^m} \big[\sigma(\la\wb_{j,r},y\cdot\bmu\ra) + \sigma(\la\wb_{j,r}, \bxi\ra)\big],
\end{align*}
for $j\in \{+ 1, -1\}$, 
$m$ is the number of convolutional filters in $F_{+1}$ and $F_{-1}$. Here, $\sigma(z) = (\max\{0,z\})^q$ is the $\mathrm{ReLU}^q$ activation function where $q> 2$, $\wb_{j,r}\in\RR^{d}$ denotes the weight for the $r$-th filter (i.e., neuron), and $\Wb_{j}$ is the collection of model weights associated with $F_j$. We also use $\Wb$ to denote the collection of all model weights. We note that our CNN model can also be viewed as a CNN with average global pooling \citep{lin2013network}.
We train the above CNN model by minimizing the empirical cross-entropy loss function
\begin{align*}
    L_S(\bW) 
    &= \frac{1}{n} {\sum_{i=1}^n} \ell[ y_i \cdot f(\Wb,\xb_i) ],
\end{align*}
where $\ell(z) = \log(1 + \exp(-z))$, and $S = \{(\xb_i,y_i)\}_{i=1}^n$ is the training data set. We further define the true loss (test loss) $L_{\cD}(\Wb) := \EE_{(\xb,y )\sim \cD} \ell[ y \cdot f(\Wb,\xb) ]$.

We consider gradient descent starting from Gaussian initialization, where each entry of $\Wb_{+1}$ and $\Wb_{-1}$ is sampled from a Gaussian distribution $N(0 , \sigma_0^2)$, and $\sigma_0^2$ is the variance. The gradient descent update of the filters in the CNN can be written as
\begin{align}
    \wb_{j,r}^{(t+1)} &= \wb_{j,r}^{(t)} - \eta \cdot \nabla_{\wb_{j,r}} L_S(\bW^{(t)}) \nonumber\\
    &= \wb_{j,r}^{(t)} - \frac{\eta}{nm} \sum_{i=1}^n \ell_i'^{(t)} \cdot  \sigma'(\la\wb_{j,r}^{(t)}, \bxi_{i}\ra)\cdot j y_{i}\bxi_{i} - \frac{\eta}{nm} \sum_{i=1}^n \ell_i'^{(t)} \cdot \sigma'(\la\wb_{j,r}^{(t)}, y_{i} \bmu\ra)\cdot j\bmu \label{eq:gdupdate}
\end{align}
for $j \in \{\pm 1\}$ and $r \in [m]$, 
where we introduce a shorthand notation $\ell_i'^{(t)} = \ell'[ y_i \cdot f(\Wb^{(t)},\xb_i) ] $. 

\section{Main Results}
In this section, we present our main theoretical results. At the core of our analyses and results is a \textit{signal-noise decomposition} of the filters in the CNN trained by gradient descent. By the gradient descent update rule \eqref{eq:gdupdate}, it is clear that the gradient descent iterate $\wb_{j,r}^{(t)}$ is a linear combination of its random initialization $\wb_{j,r}^{(0)}$, the signal vector $\bmu$ and the noise vectors in the training data $\bxi_i$, $i\in [n]$. Motivated by this observation, we introduce the following definition. 
\begin{definition}\label{def:w_decomposition}
Let $\wb_{j,r}^{(t)}$ for $j\in \{\pm 1\}$, $r \in [m]$ be the convolution filters of the CNN at the $t$-th iteration of gradient descent. Then there exist unique coefficients $\gamma_{j,r}^{(t)} \geq 0$ and $\rho_{j,r,i}^{(t)}$ such that 
\begin{align*}
    \wb_{j,r}^{(t)} = \wb_{j,r}^{(0)} + j \cdot \gamma_{j,r}^{(t)} \cdot \| \bmu \|_2^{-2} \cdot \bmu + \sum_{ i = 1}^n \rho_{j,r,i}^{(t) }\cdot \| \bxi_i \|_2^{-2} \cdot \bxi_{i}.
\end{align*}
We further denote $\zeta_{j,r,i}^{(t)} := \rho_{j,r,i}^{(t)}\ind(\rho_{j,r,i}^{(t)} \geq 0)$, $\omega_{j,r,i}^{(t)} := \rho_{j,r,i}^{(t)}\ind(\rho_{j,r,i}^{(t)} \leq 0)$. Then we have that 
\begin{align}\label{eq:w_decomposition}
    \wb_{j,r}^{(t)} = \wb_{j,r}^{(0)} + j \cdot \gamma_{j,r}^{(t)} \cdot \| \bmu \|_2^{-2} \cdot \bmu + \sum_{ i = 1}^n \zeta_{j,r,i}^{(t) }\cdot \| \bxi_i \|_2^{-2} \cdot \bxi_{i} + \sum_{ i = 1}^n \omega_{j,r,i}^{(t) }\cdot \| \bxi_i \|_2^{-2} \cdot \bxi_{i} .
\end{align}
\end{definition}


We refer to \eqref{eq:w_decomposition} as the \textit{signal-noise decomposition} of $\wb_{j,r}^{(t)}$. We add normalization factors $\|\bmu\|_{2}^{-2}, \|\bxi_{i}\|_{2}^{-2}$ in the definition so that  $\gamma_{j,r}^{(t)} \approx \la \wb_{j,r}^{(t)}, \bmu \ra, \rho_{j,r,i}^{(t)}\approx \la \wb_{j,r}^{(t)}, \bxi_{i} \ra$. 
In this decomposition, $\gamma_{j,r}^{(t)}$ characterizes the progress of  learning the signal vector $\bmu$, and $\rho_{j,r,i}^{(t)}$ characterizes the degree of noise memorization by the filter. Evidently, based on this decomposition, for some iteration $t$, 
(i) If some of $\gamma_{j,r}^{(t)}$'s are large enough while $|\rho_{j,r,i}^{(t)}|$ are relatively small, then the CNN will have small training and test losses; (ii) If some $\zeta_{j,r,i}^{(t) }$'s are large and all $\gamma_{j,r}^{(t)}$'s are small, then the CNN will achieve a small training loss, but a large test loss. 
Thus, Definition~\ref{def:w_decomposition} provides a handle for us to study the convergence of the training loss as well as the the population loss of the CNN trained by gradient descent. 


Our results are based on the following conditions on the dimension $d$, sample size $n$, neural network width $m$, learning rate $\eta$, initialization scale $\sigma_0$.
\begin{condition}\label{condition:d_sigma0_eta}
Suppose that 
\begin{enumerate}[leftmargin = *]
    \item Dimension $d$ is sufficiently large:  $d = \tilde{\Omega}(m^{2\vee [4/(q-2)]}n^{4 \vee [(2q-2)/(q-2)]})$.
    \item Training sample size $n$ and neural network width $m$ satisfy $n,m = \Omega(\polylog(d))$.
    \item The learning rate $\eta$ satisfies $\eta \leq  \tilde{O}(\min\{\|\bmu\|_{2}^{-2}, \sigma_{p}^{-2}d^{-1}\})$.
    \item The standard deviation of Gaussian initialization $\sigma_0$ is appropriately chosen such that $\tilde{O}(nd^{-1/2}) \cdot \min\{(\sigma_{p}\sqrt{d})^{-1}, \|\bmu\|_{2}^{-1}\} \leq \sigma_{0} \leq \tilde{O}(m^{-2/(q-2)}n^{-[1/(q-2)]\vee 1})\cdot\min\{(\sigma_{p}\sqrt{d})^{-1}, \|\bmu\|_{2}^{-1}\}$.
\end{enumerate}
\end{condition}
A few remarks on Condition~\ref{condition:d_sigma0_eta} are in order. The condition on $d$ is to ensure that the learning is in a sufficiently over-parameterized setting, and similar conditions have been made in the study of learning over-parameterized linear models \citep{chatterji2020finite,cao2021risk}. For example, if we choose $q=3$, then the condition on $d$ becomes $d = \tilde \Omega(m^4n^4)$. Furthermore, we require the sample size and neural network width to be at least polylogarithmic in the dimension $d$ to ensure some statistical properties of the training data and weight initialization to hold with probability at least $1 - d^{-1}$, which is a mild condition. 
Finally, the conditions on $\sigma_0$ and $\eta$ are to ensure that gradient descent can effectively minimize the training loss, and they depend on the scale of the training data points. When $\sigma_{p} = O(d^{-1/2})$ and $\|\bmu\|_{2} = O(1)$, the step size $\eta$ can be chosen as large as $\tilde{O}(1)$ and the initialization $\sigma_{0}$ can be as large as $\tilde{O}(m^{-2/(q-2)}n^{-[1/(q-2)]\vee 1})$. In our paper, we only require $m,n = \Omega(\text{polylog}(d))$, so our initialization and step-size can be chosen as an almost constant order. 
Based on these conditions, we give our main result on signal learning in the following theorem.

\begin{theorem}\label{thm:signal_learning_main}
For any $\epsilon > 0$, let $T = \tilde{\Theta}( \eta^{-1} m\sigma_0 ^{-(q-2)} \| \bmu \|_2^{-q} +  \eta^{-1}\epsilon^{-1} m^{3}\| \bmu \|_2^{-2})$. Under Condition~\ref{condition:d_sigma0_eta}, if $ n \cdot \mathrm{SNR}^q = \tilde\Omega( 1 )$\footnote{Here the $\tilde\Omega(\cdot)$ hides an $\polylog(\epsilon^{-1}) $ factor. This applies to Theorem~\ref{thm:noise_memorization_main} as well.}, then with probability at least $ 1 - d^{-1}$, there exists $0 \leq t \leq T$ such that:
\begin{enumerate}[leftmargin = *]
    \item The CNN learns the signal: $\max_r\gamma_{j,r}^{(t)} = \Omega(1)$ for $j\in \{\pm 1\}$.
    \item The CNN does not memorize the noises in the training data: 
    $\max_{j,r,i} |\rho_{j,r,i}^{(T)}| = \tilde O( \sigma_0 \sigma_p \sqrt{d} )$.
    \item The training loss converges to $\epsilon$, i.e., $L_S(\Wb^{(t)}) \leq \epsilon$.
    \item The trained CNN achieves a small test loss: $L_{\cD}(\Wb^{(t)})\leq 6\epsilon + \exp(- n^{2})$
\end{enumerate}
\end{theorem}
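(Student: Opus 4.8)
The plan is to track the evolution of the decomposition coefficients $\gamma_{j,r}^{(t)}$, $\zeta_{j,r,i}^{(t)}$, $\omega_{j,r,i}^{(t)}$ of \eqref{eq:w_decomposition} and to run a two-stage argument on the resulting scalar dynamics. First I would project the gradient descent update \eqref{eq:gdupdate} onto $\bmu$ and onto each $\bxi_i$, using $\la\bmu,\bxi_i\ra=0$, to get the exact recursions
\begin{align*}
\gamma_{j,r}^{(t+1)} &= \gamma_{j,r}^{(t)} - \frac{\eta\|\bmu\|_2^2}{nm}\sum_{i=1}^n \ell'\big(y_i f(\Wb^{(t)},\xb_i)\big)\,\sigma'\big(\la\wb_{j,r}^{(t)},y_i\bmu\ra\big),\\
\rho_{j,r,i}^{(t+1)} &= \rho_{j,r,i}^{(t)} - \frac{\eta\|\bxi_i\|_2^2}{nm}\,\ell'\big(y_i f(\Wb^{(t)},\xb_i)\big)\,\sigma'\big(\la\wb_{j,r}^{(t)},\bxi_i\ra\big)\,jy_i,
\end{align*}
so that $\zeta_{j,r,i}^{(t)}$ and $\omega_{j,r,i}^{(t)}$ are the nonnegative and nonpositive parts of $\rho_{j,r,i}^{(t)}$, and $\gamma_{j,r}^{(t)}$ is nondecreasing since $\ell'<0$, $\sigma'\ge 0$. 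I would then record the standard initialization and data concentration estimates, valid with probability at least $1-d^{-1}$ under Condition~\ref{condition:d_sigma0_eta}: $|\la\wb_{j,r}^{(0)},\bmu\ra|=\tilde O(\sigma_0\|\bmu\|_2)$, $|\la\wb_{j,r}^{(0)},\bxi_i\ra|=\tilde O(\sigma_0\sigma_p\sqrt d)$, $\|\bxi_i\|_2^2=\Theta(\sigma_p^2 d)$, $|\la\bxi_i,\bxi_{i'}\ra|=\tilde O(\sigma_p^2\sqrt d)$ for $i\ne i'$, and $|\{i:y_i=j\}|=\Theta(n)$. With the decomposition these give $\la\wb_{j,r}^{(t)},\bmu\ra = j\gamma_{j,r}^{(t)}+\tilde O(\sigma_0\|\bmu\|_2)$ and $\la\wb_{j,r}^{(t)},\bxi_i\ra = \rho_{j,r,i}^{(t)}+\tilde O(\sigma_0\sigma_p\sqrt d)+\tilde O\big(n\max_{j,r,i}|\rho_{j,r,i}^{(t)}|/\sqrt d\big)$, the last term being negligible since $d$ is large.

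The heart of the proof is a single induction over $t\in[0,T]$ maintaining the invariant $0\le\gamma_{j,r}^{(t)}=\tilde O(1)$ and $|\rho_{j,r,i}^{(t)}|\le\alpha$ with $\alpha=\tilde O(\sigma_0\sigma_p\sqrt d)$; the second half of the invariant is exactly item~2 of the theorem. Inside the induction I split $[0,T]$ at the first iteration $T_1$ at which $\max_{j,r}\gamma_{j,r}^{(t)}$ exceeds a fixed constant. \emph{Stage 1 ($t\le T_1$):} here $|y_i f(\Wb^{(t)},\xb_i)|=o(1)$ for every $i$, so $-\ell'(y_i f(\Wb^{(t)},\xb_i))=\Theta(1)$; plugging this into the $\gamma$-recursion and using $\sigma'(z)=qz_+^{q-1}$ and $|\{i:y_i=j\}|=\Theta(n)$ gives, for the best signal-aligned neuron of each class, the tensor-power dynamics $\gamma^{(t+1)}\gtrsim\gamma^{(t)}+\eta m^{-1}\|\bmu\|_2^2(\gamma^{(t)}\vee\tilde\Theta(\sigma_0\|\bmu\|_2))^{q-1}$, whose standard analysis shows $\gamma$ reaches $\Omega(1)$ within $T_1=\tilde\Theta(\eta^{-1}m\sigma_0^{-(q-2)}\|\bmu\|_2^{-q})$ iterations (item~1). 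The same computation bounds the per-step growth of $|\rho_{j,r,i}^{(t)}|$ by the analogous expression with $\|\bmu\|_2^2$ replaced by $\Theta(\sigma_p^2 d)$ and an extra factor $n^{-1}$; multiplying by the length $T_1$ of Stage 1 gives a total growth of $\tilde O\big(\sigma_0\sigma_p\sqrt d/(n\cdot\mathrm{SNR}^q)\big)=\tilde O(\sigma_0\sigma_p\sqrt d)$ \emph{precisely because} $n\cdot\mathrm{SNR}^q=\tilde\Omega(1)$, so $\rho$ never escapes its initialization scale. This is the only place the SNR hypothesis is used, and it is what makes signal learning outpace noise memorization.

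\emph{Stage 2 ($T_1\le t\le T$).} Starting from $\max_r\gamma_{j,r}^{(T_1)}=\Omega(1)$, I run a convex-optimization argument against the comparator $\Wb^*$ obtained from $\Wb^{(0)}$ by adding $\tilde\Theta(1)\cdot\|\bmu\|_2^{-2}\bmu$ to every filter $\wb_{+1,r}$ and $-\tilde\Theta(1)\cdot\|\bmu\|_2^{-2}\bmu$ to every $\wb_{-1,r}$: by the concentration bounds, $\Wb^*$ classifies every training point with margin at least $2\log(2/\epsilon)$ using only the signal patch (its noise contributions are $\tilde O((\sigma_0\sigma_p\sqrt d)^q)$, hence negligible), $\|\Wb^*-\Wb^{(0)}\|_F^2=\tilde O(m\|\bmu\|_2^{-2})$, and by the invariant also $\|\Wb^{(T_1)}-\Wb^*\|_F^2=\tilde O(m\|\bmu\|_2^{-2})$. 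Convexity of $\ell$ and the $q$-homogeneity of $\sigma$ then yield a one-step inequality $\|\Wb^{(t)}-\Wb^*\|_F^2-\|\Wb^{(t+1)}-\Wb^*\|_F^2\ge c\eta\big(L_S(\Wb^{(t)})-\epsilon/2\big)$ up to lower-order terms controlled by the invariant, and telescoping shows some $t\le T=T_1+\tilde\Theta(\eta^{-1}\epsilon^{-1}m^3\|\bmu\|_2^{-2})$ satisfies $L_S(\Wb^{(t)})\le\epsilon$ (item~3); along the way the recursions show $\gamma_{j,r}^{(t)}$ keeps increasing while $|\rho_{j,r,i}^{(t)}|$ stays $\tilde O(\sigma_0\sigma_p\sqrt d)$, using that $\sum_t|\ell'(y_i f(\Wb^{(t)},\xb_i))|$ is controlled by the loss decay, which closes the induction. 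For item~4: at this iterate the decomposition and the invariant show that $f(\Wb^{(t)},\cdot)$ is, up to a negligible noise term, a function of the signal patch alone, so the training margin on class $j$ coincides with the population margin on class $j$; since $|\{i:y_i=j\}|=\Theta(n)$, $L_S(\Wb^{(t)})\le\epsilon$ forces $\ell(\mathrm{margin}_j)=O(\epsilon)$ for both $j$, whence $L_\cD(\Wb^{(t)})\le O(\epsilon)$ up to the contribution of the event that a fresh noise vector $\bxi$ violates the concentration bounds; that event has probability at most $\exp(-\Omega(n^2))$ and $\ell$ is only polynomially large on it (again by Condition~\ref{condition:d_sigma0_eta}), so after absorbing the approximation errors one obtains $L_\cD(\Wb^{(t)})\le 6\epsilon+\exp(-n^2)$.

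The main obstacle is decoupling the coupled nonlinear dynamics of $\gamma$ and $\rho$: in Stage 1 one must quantitatively win the signal-versus-noise race while verifying that the relevant activation patterns stay favorable (the maximizing neuron keeps $\la\wb_{j,r}^{(t)},j\bmu\ra>0$ and the signal never activates the wrong class), and in Stage 2 --- the more delicate part --- one must keep $|\rho_{j,r,i}^{(t)}|$ pinned at the $\tilde O(\sigma_0\sigma_p\sqrt d)$ scale even after the loss derivatives $\ell'(y_i f(\Wb^{(t)},\xb_i))$ become heterogeneous and the training loss is being driven to $\epsilon$; this requires carefully bounding $\sum_t|\ell'(y_i f(\Wb^{(t)},\xb_i))|$ per sample via the loss decay and feeding it back into the $\zeta/\omega$ recursions.
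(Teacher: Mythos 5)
Your proposal follows essentially the same route as the paper's proof: the same signal--noise recursions (Lemma~\ref{lemma:coefficient_iterative}) obtained by projecting the update onto $\bmu$ and the $\bxi_i$, the same boundedness invariant (Proposition~\ref{Prop:main1}), a Stage-1 tensor-power analysis in which the hypothesis $n\cdot\mathrm{SNR}^q=\tilde\Omega(1)$ is used exactly where you use it (to make the time for $\max_r\gamma_{j,r}$ to reach $\Omega(1)$ shorter than the time for $\max|\rho_{j,r,i}|$ to escape the $\sigma_0\sigma_p\sqrt d$ scale), a Stage-2 comparator/convexity/homogeneity argument with $\sum_t|\ell_i'^{(t)}|$ controlled through the telescoped loss, and the same split of the population loss over the concentration event for a fresh noise vector.

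The one concrete gap is the scaling of the comparator. You take $\wb^*_{j,r}=\wb^{(0)}_{j,r}\pm\tilde\Theta(1)\cdot\|\bmu\|_2^{-2}\bmu$ and conclude $\|\Wb^*-\Wb^{(0)}\|_F^2=\tilde O(m\|\bmu\|_2^{-2})$. But the one-step descent inequality does not use the margin of $f(\Wb^*,\cdot)$ itself; it needs $y_i\la\nabla f(\Wb^{(t)},\xb_i),\Wb^*\ra\ge q\log(2q/\epsilon)$, and up to lower-order terms
\[
y_i\la \nabla f(\Wb^{(t)},\xb_i),\Wb^*\ra \approx \frac{c}{m}\sum_{r=1}^m\sigma'\big(\la\wb^{(t)}_{y_i,r},y_i\bmu\ra\big),
\]
where $c$ is the per-filter coefficient of $\|\bmu\|_2^{-2}\bmu$ in $\Wb^*$. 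Stage 1 only guarantees $\max_r\gamma_{y_i,r}^{(t)}\ge 2$, i.e.\ a single neuron per class with $\sigma'=\Omega(1)$; the other $m-1$ neurons contribute nothing you can lower bound. So with $c=\tilde\Theta(1)$ the right-hand side is only $\tilde\Theta(1/m)$, the convexity step compares $\ell(y_if(\Wb^{(t)},\xb_i))$ to $\ell(\tilde\Theta(1/m))=\Theta(1)$ rather than to $\ell(\log(2q/\epsilon))\le\epsilon/(2q)$, and the telescoping only drives the average training loss down to a constant, not to $\epsilon$. The fix is to take $c=\Theta(m\log(2q/\epsilon))$, as the paper does, which gives $\|\Wb^{(T_1)}-\Wb^*\|_F^2=\tilde O(m^3\|\bmu\|_2^{-2})$ --- this is precisely the source of the $m^3$ in the theorem's $T$, which you state correctly but which is inconsistent with your $\tilde O(m\|\bmu\|_2^{-2})$ bound. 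With that correction the rest of your outline goes through.
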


Theorem~\ref{thm:signal_learning_main} characterizes the case of signal learning. It shows that, if $ n \cdot \mathrm{SNR}^q = \tilde\Omega( 1  )$, then at least one CNN filter can learn the signal by achieving $\gamma_{j,r_j^*}^{(t)} \geq  \Omega(1)$, and as a result, the learned neural network can achieve small training and test losses. To demonstrate the sharpness of this condition, we also present the following theorem for the noise memorization by the CNN.

\begin{theorem}\label{thm:noise_memorization_main}
For any $\epsilon > 0$, let  $T = \tilde{\Theta}( \eta^{-1}m \cdot n (\sigma_{p}\sqrt{d})^{-q} \cdot \sigma_0^{-(q-2)}  + \eta^{-1}\epsilon^{-1}nm^{3} d^{-1}\sigma_{p}^{-2})$. Under Condition~\ref{condition:d_sigma0_eta},
if $ n^{-1}\cdot \mathrm{SNR}^{-q} = \tilde \Omega( 1)$, then with probability at least $ 1 - d^{-1}$, there exists $0 \leq t \leq T$ such that:
\begin{enumerate}[leftmargin = *]
    \item The CNN memorizes noises in the training data: $\max_r \zeta_{y_i,r,i}^{(t)} = \Omega(1)$.
    \item The CNN does not sufficiently learn the signal: $\max_{j,r}\gamma_{j,r}^{(t)} \leq \tilde{O}(\sigma_{0}\|\bmu\|_{2})$.
    \item The training loss converges to $\epsilon$, i.e., $L_S(\Wb^{(t)}) \leq \epsilon$.
    \item The trained CNN has a constant order test loss: $L_{\cD}({\Wb}^{(t)}) = \Theta(1)$.
\end{enumerate}
\end{theorem}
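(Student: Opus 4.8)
The plan is to follow the gradient-descent trajectory entirely through the signal-noise decomposition of Definition~\ref{def:w_decomposition}, converting the non-convex dynamics on $\Wb^{(t)}$ into a coupled system of scalar recursions for $\gamma_{j,r}^{(t)}$, $\zeta_{j,r,i}^{(t)}$, $\omega_{j,r,i}^{(t)}$. Projecting the update \eqref{eq:gdupdate} onto $\bmu$ and onto each $\bxi_i$, and using that $\bmu,\bxi_1,\dots,\bxi_n$ are nearly orthogonal when $d$ is large, one gets --- up to lower-order ``contamination'' terms from the $\la\bxi_i,\bxi_{i'}\ra$, $i\neq i'$ --- the recursions $\gamma_{j,r}^{(t+1)}\approx\gamma_{j,r}^{(t)}+\tfrac{\eta}{nm}\|\bmu\|_2^2\sum_i|\ell_i'^{(t)}|\,\sigma'(\la\wb_{j,r}^{(t)},y_i\bmu\ra)$ and $\zeta_{y_i,r,i}^{(t+1)}\approx\zeta_{y_i,r,i}^{(t)}+\tfrac{\eta}{nm}\|\bxi_i\|_2^2\,|\ell_i'^{(t)}|\,\sigma'(\la\wb_{y_i,r}^{(t)},\bxi_i\ra)$, with the negative coefficients $\omega$ never growing in magnitude. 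Since $\la\wb_{j,r}^{(t)},\bmu\ra\approx j\gamma_{j,r}^{(t)}+\la\wb_{j,r}^{(0)},\bmu\ra$ and $\la\wb_{y_i,r}^{(t)},\bxi_i\ra\approx\zeta_{y_i,r,i}^{(t)}+\la\wb_{y_i,r}^{(0)},\bxi_i\ra$, and since only about half of the $n$ summands in the $\gamma$-recursion are active (so the $1/n$ there effectively cancels), both are tensor-power recursions $x^{(t+1)}=x^{(t)}+\nu(x^{(t)})^{q-1}$, for which the time to grow from $x^{(0)}$ to a constant is $\tilde\Theta(\nu^{-1}(x^{(0)})^{-(q-2)})$. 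With $\nu_\zeta\asymp\eta\sigma_p^2 d/(nm)$ and $x^{(0)}\asymp\sigma_0\sigma_p\sqrt d$ this escape time is $T_1=\tilde\Theta(\eta^{-1}mn(\sigma_p\sqrt d)^{-q}\sigma_0^{-(q-2)})$, while for $\gamma$ one has $\nu_\gamma\asymp\eta\|\bmu\|_2^2/m$ and $x^{(0)}\asymp\sigma_0\|\bmu\|_2$, giving escape time $\tilde\Theta(\eta^{-1}m\|\bmu\|_2^{-q}\sigma_0^{-(q-2)})$; the ratio is $\asymp(n\,\mathrm{SNR}^q)^{-1}$, so the hypothesis $n^{-1}\mathrm{SNR}^{-q}=\tilde\Omega(1)$ says precisely that the noise recursion escapes first (the $\poly$-log in $\tilde\Omega$ absorbing the logarithmic slack in the escape times). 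This comparison is the heart of the theorem.

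Concretely, I would first record the standard events that hold with probability $\ge1-d^{-1}$ under Condition~\ref{condition:d_sigma0_eta}: $\|\bxi_i\|_2^2=\Theta(\sigma_p^2 d)$, $|\la\bxi_i,\bxi_{i'}\ra|=\tilde O(\sigma_p^2\sqrt d)$ for $i\neq i'$, $|\la\wb_{j,r}^{(0)},\bxi_i\ra|=\tilde O(\sigma_0\sigma_p\sqrt d)$, $|\la\wb_{j,r}^{(0)},\bmu\ra|=\tilde O(\sigma_0\|\bmu\|_2)$, $\max_r\la\wb_{y_i,r}^{(0)},\bxi_i\ra=\tilde\Theta(\sigma_0\sigma_p\sqrt d)>0$, and balanced labels $|\{i:y_i=1\}|=n/2\pm\tilde O(\sqrt n)$. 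Then I would prove, by one joint induction over $t\in[0,T]$, an invariant package: $0\le\gamma_{j,r}^{(t)}\le\beta$, $0\le\zeta_{j,r,i}^{(t)}\le\alpha$, $-\beta\le\omega_{j,r,i}^{(t)}\le0$ for $\alpha=\tilde O(1)$ and $\beta=\tilde O(\sigma_0\|\bmu\|_2)$, together with the crucial cross-talk bound $\sum_{i'\neq i}|\rho_{j,r,i'}^{(t)}|\,|\la\bxi_{i'},\bxi_i\ra|\,\|\bxi_i\|_2^{-2}=o(\sigma_0\sigma_p\sqrt d)$ (so $\la\wb_{j,r}^{(t)},\bxi_i\ra$ is genuinely controlled by $\zeta_{y_i,r,i}^{(t)}$ alone). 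The over-parameterization requirement $d=\tilde\Omega(\poly(m,n))$ in Condition~\ref{condition:d_sigma0_eta}(1) is exactly what closes this induction; in particular it forces $|f(\Wb^{(t)},\xb_i)|=O(1)$, hence $|\ell_i'^{(t)}|=\Theta(1)$, throughout the first stage.

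Next comes the two-stage argument. In Stage~1 ($t\le T_1$), $|\ell_i'^{(t)}|=\Theta(1)$, so the $\zeta$-recursion is a genuine tensor-power iteration and the escape lemma yields, for every $i$, an $r$ with $\zeta_{y_i,r,i}^{(T_1)}=\Omega(1)$ (item~1); running the same estimate on $\gamma$ with the rate comparison above shows $\max_{j,r}\gamma_{j,r}^{(t)}\le\tilde O(\sigma_0\|\bmu\|_2)$ for all $t\le T_1$, i.e.\ signal learning has not yet had time to occur. For $t\in[T_1,T]$ I would run a standard approximate-convexity/online-optimization argument against a reference network $\Wb^\star$ obtained from $\Wb^{(0)}$ by adding $\tilde\Theta(1)$ along each memorized-noise direction $\bxi_i/\|\bxi_i\|_2^2$, for which $L_S(\Wb^\star)\le\epsilon$: telescoping $\|\Wb^{(t)}-\Wb^\star\|_2^2$ and bounding $\|\nabla L_S(\Wb^{(t)})\|_2^2$ by $O(1)\cdot L_S(\Wb^{(t)})$ via the invariants and the step-size bound gives $\min_{t\le T}L_S(\Wb^{(t)})\le\epsilon$ after an extra $T_2=\tilde O(\eta^{-1}\epsilon^{-1}nm^3d^{-1}\sigma_p^{-2})$ steps, i.e.\ $T=T_1+T_2$ as stated (item~3). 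Moreover $\eta\sum_t\tfrac1n\sum_i|\ell_i'^{(t)}|\le\eta\sum_t L_S(\Wb^{(t)})=\tilde O(\|\Wb^{(0)}-\Wb^\star\|_2^2)$, so the additional growth of $\gamma$ over Stage~2, which is at most $\tfrac{\eta q}{nm}\|\bmu\|_2^2(\tilde O(\sigma_0\|\bmu\|_2))^{q-1}\sum_t\sum_i|\ell_i'^{(t)}|$, stays $\tilde O(\sigma_0\|\bmu\|_2)$ precisely because $n\,\mathrm{SNR}^q=\tilde O(1)$ and because of the upper bound on $\sigma_0$ in Condition~\ref{condition:d_sigma0_eta}(4); this delivers item~2 on all of $[0,T]$.

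For item~4 I would fix the trajectory at the time $t$ above and draw a fresh $(\xb,y)\sim\cD$. By item~2 the signal contribution $\tfrac1m\sum_r[\sigma(\la\wb_{+1,r}^{(t)},y\bmu\ra)-\sigma(\la\wb_{-1,r}^{(t)},y\bmu\ra)]$ to $f(\Wb^{(t)},\xb)$ has absolute value at most $\delta:=\tilde O((\sigma_0\|\bmu\|_2)^q)=o(1)$, while the remaining ``noise part'' $N(\bxi):=\tfrac1m\sum_r[\sigma(\la\wb_{+1,r}^{(t)},\bxi\ra)-\sigma(\la\wb_{-1,r}^{(t)},\bxi\ra)]$ depends only on the noise patch $\bxi$, whose law given $y$ does not depend on $y$ (Definition~\ref{def:data}). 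Conditioning on $y$ and using that the signal part is bounded by $\delta$ in absolute value, $\Pr(y f(\Wb^{(t)},\xb)\le\delta)\ge\tfrac12[\Pr(N(\bxi)\le0)+\Pr(N(\bxi)\ge0)]\ge\tfrac12$, hence $L_{\cD}(\Wb^{(t)})\ge\tfrac12\ell(\delta)=\Omega(1)$; the matching $O(1)$ upper bound follows because $f(\Wb^{(t)},\xb)$ is almost surely bounded and its Gaussian tail is integrable against $\ell$. The main obstacle is the joint induction of the invariant package together with the rate comparison: the growth of each $\gamma_{j,r}$ is coupled to every $\rho_{j,r',i}$ through $\ell_i'^{(t)}=\ell'(y_if(\Wb^{(t)},\xb_i))$, so signal and noise coordinates cannot be treated separately; it is precisely the two-stage split --- treating $|\ell_i'^{(t)}|$ as a $\Theta(1)$ constant before $T_1$ and as a summable sequence afterwards --- that decouples the recursions enough to make the tensor-power comparison rigorous, and forcing the Stage~2 analysis to also yield the \emph{persistence} of item~2 (not merely item~3) requires carefully certifying that the signal coordinates remain starved along the entire trajectory.
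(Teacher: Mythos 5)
Your proposal is correct and follows essentially the same route as the paper: the signal--noise decomposition with monotone coefficient recursions, a first-stage tensor-power escape-time comparison showing the noise coefficients reach $\Theta(1)$ before the signal coefficients leave their initialization scale $\tilde O(\sigma_0\|\bmu\|_2)$ (with the ratio of escape times equal to $n\cdot\mathrm{SNR}^q$ up to logs), and a second-stage online-convexity argument against a reference point $\Wb^*$ built from the noise directions, whose telescoped loss sum simultaneously certifies convergence to $\epsilon$ and the persistence of the starved signal coordinates. The only place you diverge is the constant lower bound on $L_{\cD}$: the paper shows that with probability at least $1/2$ over a fresh example both $F_{+1}$ and $F_{-1}$ are at most $1$ (via a Gaussian tail bound on $\la\wb_{j,r}^{(t)},\bxi\ra$ using a norm bound on the trained filters), so that $yf(\Wb^{(t)},\xb)\le 1$, whereas you argue by symmetry that the noise part $N(\bxi)$ of the output is independent of $y$ and hence $y\,N(\bxi)\le 0$ with probability at least $1/2$; your variant is slightly cleaner in that the lower bound then needs only the smallness of the signal contribution, though you still need the filter-norm/tail estimate for the matching $O(1)$ upper bound.
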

\begin{wrapfigure}{R}{0.45\textwidth}
\vskip -0.2in
     \centering
     \includegraphics[width=0.48\columnwidth]{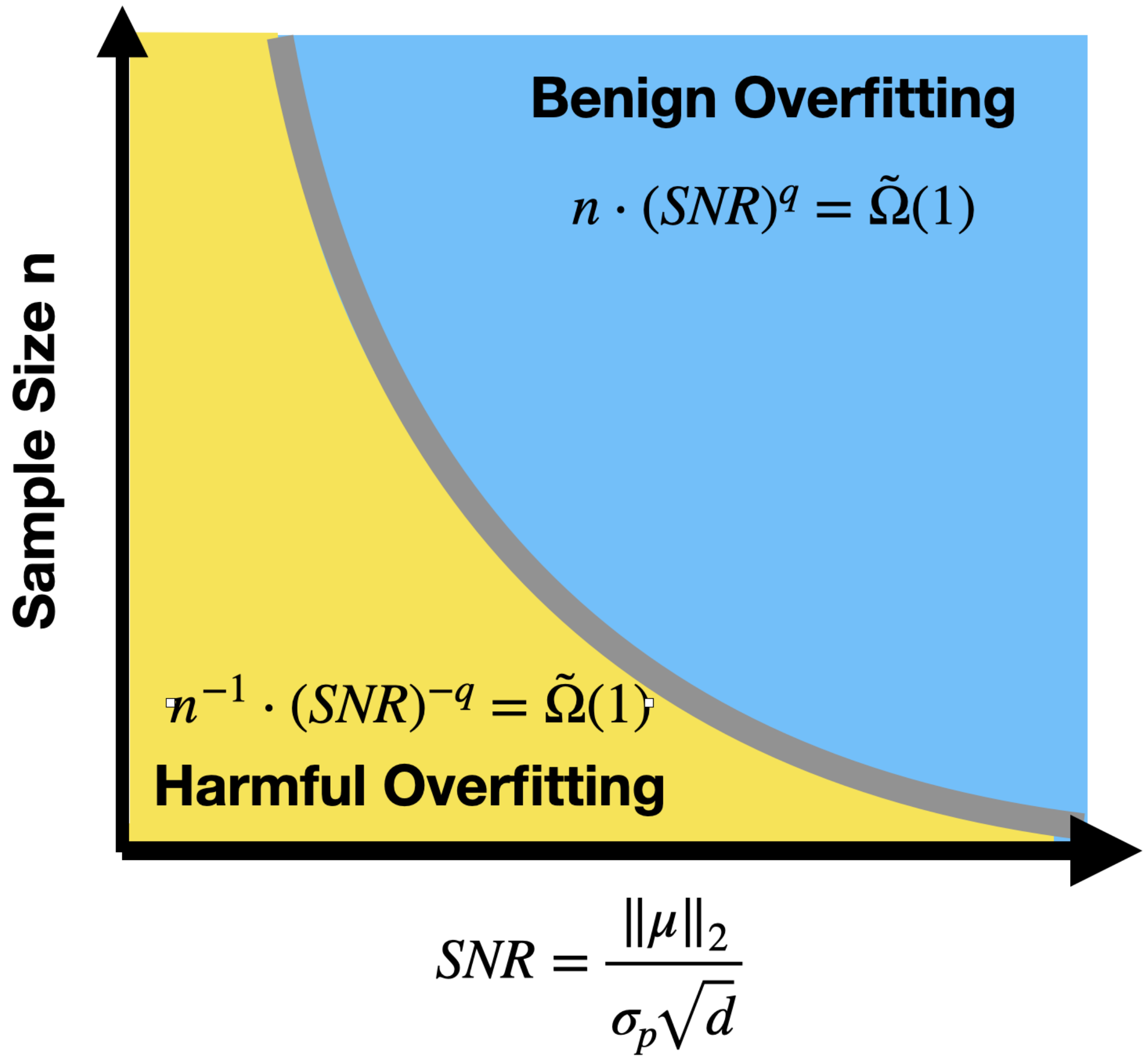}
      \vskip -0.2in
    \caption{Illustration of the phase transition between benign and harmful overfitting. The blue region represents the setting under which the overfitted CNN trained by gradient descent is guaranteed to have small population loss, and the yellow region represents the setting under which the population loss is guaranteed to be of constant order. The slim gray band region is the setting where the population loss is not well characterized.}
    \label{fig1}
    \vskip -0.5in
\end{wrapfigure}
Theorem~\ref{thm:noise_memorization_main} holds under the condition that $n^{-1}\cdot \mathrm{SNR}^{-q} = \tilde \Omega( 1)$. Clearly, this is the opposite regime (up to some logarithmic factors) compared with Theorem~\ref{thm:signal_learning_main}. In this case, the CNN trained by gradient descent mainly memorizes noises in the training data and does not learn enough signal. This, together with the results in Theorem~\ref{thm:signal_learning_main}, reveals a clear phase transition between signal learning and noise memorization in CNN training:
\begin{itemize}[leftmargin = *]
    \item If $ n\cdot \mathrm{SNR}^{q} = \tilde \Omega( 1)$, then the CNN learns the signal and achieves a $O(\epsilon + \exp(- n^{2}))$ test loss. This is the regime of benign overfitting.
    \item If $ n^{-1}\cdot \mathrm{SNR}^{-q} = \tilde \Omega( 1)$ then the CNN can only memorize noises and will have a $\Theta(1)$ test loss. This is the regime of harmful overfitting.
\end{itemize}
The phase transition is illustrated in Figure~\ref{fig1}.
Clearly, when learning a two-layer CNN on the data generated from Definition~\ref{def:data},$ n\cdot \mathrm{SNR}^{q} = \tilde \Omega( 1)$ is the precise condition under which benign overfitting occurs. Remarkably, in this case the population loss decreases \textit{exponentially} with the sample size $n$. Under our condition that $n = \Omega(\polylog(d))$, this term can also be upper bounded by $1 / \mathrm{poly}(d)$, which is small in the high-dimensional setting. Note that when $\| \bmu \|_2 = \Theta(1)$ and $\sigma_p = \Theta(d^{-1/2})$, applying standard uniform convergence based bounds \citep{bartlett2017spectrally,neyshabur2017pac} or stability based bounds \citep{hardt2016train, mou2017generalization, chen2018stability} typically gives a $\tilde O(n^{-1/2})$ bound on the generalization gap, which is vacuous when $n = O(\polylog(d))$. Our bound under the same setting is $O(1/\poly(d))$, which is non-vacuous. This is attributed to our precise analysis of signal learning and noise memorization in Theorems~\ref{thm:signal_learning_main} and \ref{thm:noise_memorization_main}.




\noindent\textbf{Comparison with neural tangent kernel (NTK) results.} 
We want to emphasize that our analysis is beyond the so-called neural tangent kernel regime. In the NTK regime, it has been shown that gradient descent can train an over-parameterized neural network to achieve good training and test accuracies \citep{jacot2018neural,du2018gradient,du2018gradientdeep,allen2018convergence,zou2019gradient,arora2019fine,cao2019generalizationsgd,chen2019much}. However, it is widely believed in literature that the NTK analyses cannot fully explain the success of deep learning, as the neural networks in the NTK regime are almost ``linearized'' \citep{lee2019wide,cao2019generalizationsgd}. Our analysis and results are not in the NTK regime: In the NTK regime, the network parameters stay close to their initialization throughout training, i.e., $\|\mathbf{W}^{(t)} - \mathbf{W}^{(0)}\|_{F} = O(1)$, so that the NN model can be approximated by its linearization \citep{allen2018convergence,cao2019generalizationsgd,chen2019much}. In comparison, our analysis does not rely on linearizing the neural network function, and $\|\mathbf{W}^{(t)} - \mathbf{W}^{(0)}\|_{F}$ can be as large as $O(\text{poly}(m))$. 




\section{Overview of Proof Technique}\label{section:tech_overview}
In this section, we discuss the main challenges in the study of CNN training under our setting, and explain some key techniques we implement in our proofs to overcome these challenges. The complete proofs of all the results are given in the appendix.

\noindent\textbf{Main challenges.} Studying benign overfitting under our setting is a challenging task. The first challenge 
is the nonconvexity of the training objective function $L_S(\Wb)$. Nonconvexity has introduced new challenges in the study of benign overfitting particularly because our goal is not only to show the convergence of the training loss, but also to study the population loss in the over-parameterized setting, which requires a precise algorithmic analysis of the learning problem.

\subsection{Iterative Analysis of the Signal-Noise Decomposition}
In order to study the learning process based on the nonconvex optimization problem, we propose a key technique which enables the iterative analysis of the coefficients in the signal-noise decomposition in Definition~\ref{def:w_decomposition}. This technique is given in the following lemma. 
\begin{lemma}\label{lemma:coefficient_iterative}
The coefficients $\gamma_{j,r}^{(t)},\zeta_{j,r,i}^{(t)},\omega_{j,r,i}^{(t)}$ in Definition~\ref{def:w_decomposition} satisfy the following  equations:
\begin{align}
    &\gamma_{j,r}^{(0)},\zeta_{j,r,i}^{(0)},\omega_{j,r,i}^{(0)} = 0,\label{eq:update_initial}\\
    &\gamma_{j,r}^{(t+1)} = \gamma_{j,r}^{(t)} - \frac{\eta}{nm} \cdot \sum_{i=1}^n \ell_i'^{(t)} \cdot \sigma'(\la\wb_{j,r}^{(t)}, y_{i} \cdot \bmu\ra) \cdot \| \bmu \|_2^2, \label{eq:update_gamma1}\\
    &\zeta_{j,r,i}^{(t+1)} = \zeta_{j,r,i}^{(t)} - \frac{\eta}{nm} \cdot \ell_i'^{(t)}\cdot \sigma'(\la\wb_{j,r}^{(t)}, \bxi_{i}\ra) \cdot \| \bxi_i \|_2^2 \cdot \ind(y_{i} = j), \label{eq:update_zeta1}\\
    &\omega_{j,r,i}^{(t+1)} = \omega_{j,r,i}^{(t)} + \frac{\eta}{nm} \cdot \ell_i'^{(t)}\cdot \sigma'(\la\wb_{j,r}^{(t)}, \bxi_{i}\ra) \cdot \| \bxi_i \|_2^2 \cdot \ind(y_{i} = -j).\label{eq:update_omega1}
\end{align}
\end{lemma}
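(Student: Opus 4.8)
The plan is to prove the three families of identities \eqref{eq:update_initial}--\eqref{eq:update_omega1} by a single induction on the iteration counter $t$, the only genuine prerequisite being the \emph{uniqueness} of the signal-noise decomposition. So first I would record the structural fact underlying Definition~\ref{def:w_decomposition}: the vectors $\bmu,\bxi_1,\dots,\bxi_n$ are linearly independent. Indeed, the $\bxi_i$ are i.i.d.\ draws of the nondegenerate Gaussian $N(\mathbf 0,\sigma_p^2(\Ib-\bmu\bmu^\top\|\bmu\|_2^{-2}))$ supported on the $(d-1)$-dimensional subspace $\bmu^\perp$, and since $d>n$ they are almost surely linearly independent (in particular on the high-probability data-regularity event used throughout the paper); combined with $\bmu\perp\bxi_i$ and $\bmu\neq\mathbf 0$ this yields linear independence of the whole collection $\{\bmu,\bxi_1,\dots,\bxi_n\}$. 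Consequently, whenever $\wb_{j,r}^{(t)}-\wb_{j,r}^{(0)}$ lies in $\mathrm{span}\{\bmu,\bxi_1,\dots,\bxi_n\}$, its coordinates in that basis --- hence the coefficients $\gamma_{j,r}^{(t)}$ and $\rho_{j,r,i}^{(t)}=\zeta_{j,r,i}^{(t)}+\omega_{j,r,i}^{(t)}$ --- are uniquely determined, which is precisely what licenses ``matching coefficients'' in the induction; it also shows that the coefficients produced by the recursions below must coincide with the ones posited in Definition~\ref{def:w_decomposition}.

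For the induction itself, the base case $t=0$ is immediate: $\wb_{j,r}^{(0)}-\wb_{j,r}^{(0)}=\mathbf 0$, so all coefficients vanish, giving \eqref{eq:update_initial}. For the inductive step I would assume \eqref{eq:w_decomposition} holds at step $t$ and substitute it into the gradient-descent update \eqref{eq:gdupdate}. Since the increment is a linear combination of $\bmu$ and the $\bxi_i$, the vector $\wb_{j,r}^{(t+1)}-\wb_{j,r}^{(0)}$ again lies in $\mathrm{span}\{\bmu,\bxi_1,\dots,\bxi_n\}$. Collecting the coefficient of $\bmu$ --- where it is crucial that $\bxi_i\perp\bmu$, so the $\bxi_i$-terms contribute nothing along $\bmu$ --- and cancelling the common factor $j\|\bmu\|_2^{-2}$ (using $j^2=1$) yields exactly \eqref{eq:update_gamma1}. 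Collecting the coefficient of each $\bxi_i$ and cancelling $\|\bxi_i\|_2^{-2}$ yields the combined recursion $\rho_{j,r,i}^{(t+1)}=\rho_{j,r,i}^{(t)}-\tfrac{\eta}{nm}\,\ell_i'^{(t)}\,\sigma'(\la\wb_{j,r}^{(t)},\bxi_i\ra)\,\|\bxi_i\|_2^2\, j y_i$.

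It then remains to split this last recursion into its $\zeta$- and $\omega$-parts and verify the stated sign constraints. Since $\ell'(z)=-(1+\exp(z))^{-1}<0$ and $\sigma'(z)=q(\max\{0,z\})^{q-1}\geq 0$, we have $-\ell_i'^{(t)}\sigma'(\cdot)\geq 0$. Fix $(j,r,i)$. If $y_i=j$ then $jy_i=1$, so every increment to $\rho_{j,r,i}^{(\cdot)}$ is nonnegative; starting from $\rho_{j,r,i}^{(0)}=0$ this forces $\rho_{j,r,i}^{(t)}\geq 0$ for all $t$, hence $\zeta_{j,r,i}^{(t)}=\rho_{j,r,i}^{(t)}$ and $\omega_{j,r,i}^{(t)}=0$; the combined recursion then reads exactly \eqref{eq:update_zeta1} (with $\ind(y_i=j)=1$), while \eqref{eq:update_omega1} holds trivially since $\ind(y_i=-j)=0$. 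The case $y_i=-j$ is symmetric: every increment is nonpositive, so $\rho_{j,r,i}^{(t)}\leq 0$, $\omega_{j,r,i}^{(t)}=\rho_{j,r,i}^{(t)}$, $\zeta_{j,r,i}^{(t)}=0$, and the combined recursion reduces to \eqref{eq:update_omega1}. The same monotonicity argument applied to \eqref{eq:update_gamma1} gives $\gamma_{j,r}^{(t)}\geq 0$, consistent with Definition~\ref{def:w_decomposition}, which completes the induction.

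The step I expect to be the main (and essentially the only) subtlety is the uniqueness of the decomposition, i.e.\ the linear independence of $\{\bmu,\bxi_1,\dots,\bxi_n\}$: without it the phrase ``matching coefficients'' has no meaning. One should also note that the recursions are only superficially self-referential --- $\wb_{j,r}^{(t)}$ appears inside $\sigma'$, but it is the externally given gradient-descent iterate, so there is no circularity. Beyond these points the lemma is pure bookkeeping; all of the analytical difficulty is downstream, in controlling the magnitudes of $\gamma_{j,r}^{(t)},\zeta_{j,r,i}^{(t)},\omega_{j,r,i}^{(t)}$ through this dynamical system.
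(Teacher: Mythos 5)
Your proposal is correct and follows essentially the same route as the paper: both rest on the almost-sure linear independence of $\{\bmu,\bxi_1,\dots,\bxi_n\}$ (hence uniqueness of the decomposition), derive the combined recursion for $\rho_{j,r,i}^{(t)}$ by matching coefficients in the gradient-descent update, and then use $\ell_i'^{(t)}<0$, $\sigma'\geq 0$ together with the sign of $jy_i$ to conclude that $\rho_{j,r,i}^{(t)}$ is monotone of one sign, so that $\zeta$ and $\omega$ separate exactly as in \eqref{eq:update_zeta1}--\eqref{eq:update_omega1}. The only cosmetic difference is that the paper defines the candidate sequences $\tilde\gamma,\tilde\rho$ by the recursions and invokes uniqueness once, whereas you phrase it as an induction with coefficient matching; these are logically the same argument.
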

\begin{remark}
With the decomposition \eqref{eq:w_decomposition}, the signal learning and noise memorization processes of a CNN can be formally studied by analyzing the dynamics of $\gamma_{j,r}^{(t)},\zeta_{j,r,i}^{(t)}, \omega_{j,r,i}^{(t)}$ based on the dynamical system \eqref{eq:update_gamma1}-\eqref{eq:update_omega1}. Note that prior to our work, several existing results have utilized the inner products $\la \wb_{j,r}^{(t)}, \bmu \ra$ during the neural network training process in order to establish generalization bounds \citep{brutzkus2017sgd,chatterji2020finite,frei2021provable}. Similar inner product based arguments are also implemented in \citet{allen2020feature,allen2020towards,zou2021understanding}, which study different topics related to learning neural networks. Compared with the inner product based argument, our method has two major advantages: (i) Based on the definition \eqref{eq:update_gamma1}-\eqref{eq:update_omega1} and the fact that $ \ell_i'^{(t)} < 0$, it is clear that $\gamma_{j,r}^{(t)},\zeta_{j,r,i}^{(t)}$ are monotonically increasing, while $\omega_{j,r,i}^{(t)}$ is monotonically decreasing throughout the whole training process. In comparison, monotonicity does not hold in the inner product based argument, especially for $\la \wb_{j,r}^{(t)}, \bxi_i \ra$. (ii) Our signal-noise decomposition also enables a clean homogeneity-based proof for the convergence of the training loss to an arbitrarily small error rate $\epsilon > 0$, which will be presented in Subsection~\ref{subsection:twostage}.
\end{remark}

With Lemma~\ref{lemma:coefficient_iterative}, we can reduce the study of the CNN learning process to the analysis of the discrete dynamical system given by \eqref{eq:update_initial}-\eqref{eq:update_omega1}. 
Our proof then focuses on a careful assessment of the values of the coefficients $\gamma_{j,r}^{(t)},\zeta_{j,r,i}^{(t) }, \omega_{j,r,i}^{(t)}$ throughout training.
To prepare for more detailed analyses, we first present the following bounds of the coefficients, which hold throughout training.

\begin{proposition}\label{Prop:main1}
Under Condition~\ref{condition:d_sigma0_eta}, for any $T^{*} = \eta^{-1}\text{poly}(\epsilon^{-1}, \|\bmu\|_{2}^{-1}, d^{-1}\sigma_{p}^{-2},\sigma_{0}^{-1}, n, m, d)$, the following bounds hold for $t \in [0, T^*]$:
\begin{itemize}[leftmargin = *]
\item $0\leq \gamma_{j,r}^{(t)}, \zeta_{j,r,i}^{(t)} \leq 4\log(T^{*})$ for all $j\in \{\pm 1\}$, $r\in [m]$ and $i\in [n]$.
\item $0\geq \omega_{j,r,i}^{(t)} \geq -2\max_{i,j,r}\{|\la \wb_{j,r}^{(0)}, \bmu\ra|,|\la \wb_{j,r}^{(0)}, \bxi_{i}\ra|\} - 16n\sqrt{\frac{\log(4n^{2}/\delta)}{d}}\cdot 4\log(T^{*})$  for all $j\in \{\pm 1\}$, $r\in [m]$ and $i\in [n]$.
\end{itemize}
\end{proposition}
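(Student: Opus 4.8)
The plan is to prove all three bounds simultaneously by induction on the iteration counter $t \in [0, T^*]$, conditioning throughout on a high-probability event $\mathcal{E}$ on which the training data and the Gaussian initialization are well behaved. Concretely, by standard Gaussian concentration (Bernstein/Hanson--Wright bounds for the norms and pairwise inner products of the $\bxi_i$, and sub-Gaussian tail bounds for $\langle \wb_{j,r}^{(0)}, \cdot\rangle$), with probability at least $1 - d^{-1}$ we have $\|\bxi_i\|_2^2 = \Theta(\sigma_p^2 d)$, $|\langle \bxi_i, \bxi_{i'}\rangle| = \tilde O(\sigma_p^2\sqrt d)$ for $i\ne i'$, $\max_{j,r}|\langle \wb_{j,r}^{(0)}, \bmu\rangle| = \tilde O(\sigma_0\|\bmu\|_2)$ and $\max_{j,r,i}|\langle \wb_{j,r}^{(0)}, \bxi_i\rangle| = \tilde O(\sigma_0\sigma_p\sqrt d)$; Condition~\ref{condition:d_sigma0_eta} then makes all of $\sigma_0\|\bmu\|_2$, $\sigma_0\sigma_p\sqrt d$, and $n\sqrt{\log(n^2/\delta)/d}$ small. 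The base case $t=0$ is immediate from \eqref{eq:update_initial}.

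For the inductive step I would first use the signal-noise decomposition \eqref{eq:w_decomposition} together with the inductive hypotheses to control the pre-activations that drive \eqref{eq:update_gamma1}--\eqref{eq:update_omega1}: $\langle \wb_{j,r}^{(t)}, y_i\bmu\rangle = \langle \wb_{j,r}^{(0)}, y_i\bmu\rangle + jy_i\gamma_{j,r}^{(t)}$ and $\langle \wb_{j,r}^{(t)}, \bxi_i\rangle = \langle \wb_{j,r}^{(0)}, \bxi_i\rangle + \rho_{j,r,i}^{(t)} + \sum_{i'\ne i}\rho_{j,r,i'}^{(t)}\|\bxi_{i'}\|_2^{-2}\langle \bxi_{i'}, \bxi_i\rangle$, where $|\rho_{j,r,i'}^{(t)}| = O(\log(T^*))$ by the inductive hypotheses and the near-orthogonality bound makes the cross term at most $\tilde O(n/\sqrt d)\cdot\log(T^*) = o(1)$. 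This lets me bound $F_j(\Wb^{(t)},\xb_i)$, hence the margin $y_i f(\Wb^{(t)},\xb_i)$, from below in terms of $\max_r\gamma_{j,r}^{(t)}$ and $\max_r\zeta_{j,r,i}^{(t)}$, and therefore bound $|\ell_i'^{(t)}| = (1 + \exp(y_i f(\Wb^{(t)},\xb_i)))^{-1} \le \min\{1, \exp(-y_i f(\Wb^{(t)},\xb_i))\}$ from above. The $\gamma$ and $\zeta$ bounds then follow from the self-limiting nature of the dynamics: both are nondecreasing (since $\ell_i'^{(t)} < 0$ and $\sigma'\ge 0$), and once a coefficient reaches $2\log(T^*)$ the associated margin is large enough that $|\ell_i'^{(t)}|$ is polynomially small in $T^*$, so each subsequent per-step increment is $\le \eta\cdot\mathrm{poly}(\log T^*)\cdot|\ell_i'^{(t)}|\cdot\|\bxi_i\|_2^2 = \tilde O(1/T^*)$ by Condition~\ref{condition:d_sigma0_eta}(3); summing over at most $T^*$ iterations adds at most another $2\log(T^*)$, giving the claimed ceiling $4\log(T^*)$, while the step-size bound ensures no single update overshoots. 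For the $\omega$ lower bound, $\omega_{j,r,i}^{(t)}$ is nonincreasing and decreases only when $\sigma'(\langle \wb_{j,r}^{(t)},\bxi_i\rangle) > 0$ with $y_i = -j$; but on that index $\rho_{j,r,i}^{(t)} = \omega_{j,r,i}^{(t)} \le 0$, so a positive pre-activation forces $|\omega_{j,r,i}^{(t)}| \le |\langle\wb_{j,r}^{(0)},\bxi_i\rangle| + (\text{cross term}) \le 2\max_{i,j,r}\{|\langle\wb_{j,r}^{(0)},\bmu\rangle|,|\langle\wb_{j,r}^{(0)},\bxi_i\rangle|\} + 8n\sqrt{\log(4n^2/\delta)/d}\cdot 4\log(T^*)$; once $|\omega_{j,r,i}^{(t)}|$ exceeds this, the activation vanishes and the coefficient stops moving, so it can undershoot by at most one step's worth of decrease, which Condition~\ref{condition:d_sigma0_eta} bounds and which yields the stated inequality.

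The main obstacle is the $\gamma,\zeta$ upper bound. Because $\sigma'(z) = qz^{q-1}$ grows polynomially and the noise patches carry $\|\bxi_i\|_2^2 = \Theta(\sigma_p^2 d)$, the $\zeta$-recursion \eqref{eq:update_zeta1} is, with the logistic factor removed, a discretization of $\dot\zeta \propto \zeta^{q-1}$, which blows up in finite time; the argument must therefore crucially exploit that $|\ell_i'^{(t)}|$ decays exponentially in the margin and that the margin is itself driven up by the same coefficients --- a coupled nonlinear feedback that also entangles $\gamma$, $\zeta$, and $\omega$ through the shared factors $\ell_i'^{(t)}$ and through the cross-term contributions to the pre-activations. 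Making this rigorous requires (i) that the cross-terms $\sum_{i'\ne i}\rho_{j,r,i'}^{(t)}\langle\bxi_{i'},\bxi_i\rangle/\|\bxi_{i'}\|_2^2$ remain negligible throughout, which is exactly what the strong lower bound on $d$ buys, and (ii) that $\eta$ is small enough (Condition~\ref{condition:d_sigma0_eta}(3)) that the discrete iteration stays in the safe region and never overshoots in one step; I would carry the auxiliary cross-term bounds as extra inductive hypotheses, and, if convenient, run the upper-bound part of the induction by contradiction at the first iteration at which some $\gamma_{j,r}^{(t)}$ or $\zeta_{j,r,i}^{(t)}$ would exceed $4\log(T^*)$.
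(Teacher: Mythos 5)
Your proposal matches the paper's proof in all essential respects: induction on $t$ conditional on the concentration event, control of the pre-activations via the signal-noise decomposition and near-orthogonality of the $\bxi_i$ (the paper's Lemma~\ref{lm:oppositebound}), a threshold argument for $\gamma,\zeta$ in which crossing $2\log(T^*)$ forces $|\ell_i'^{(t)}|\le 1/T^*$ so that the remaining $T^*$ increments contribute only another $O(\log T^*)$ plus one non-overshooting step (the paper's $I_1,I_2$ split at $t_{j,r,i}$), and for $\omega$ the observation that once it is sufficiently negative the pre-activation is nonpositive, the ReLU$^q$ derivative vanishes, and the coefficient freezes after at most one more step controlled by the learning-rate condition. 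This is the same argument the paper gives for Proposition~\ref{Prop:noise}.
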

We can then prove the following lemma, which demonstrates that the training objective function $L_S(\Wb)$ can dominate the gradient norm $\|\nabla L_{S}(\Wb^{(t)})\|_{F}$ along the gradient descent path. 
\begin{lemma}\label{lm: gradient upbound sketch}
Under Condition~\ref{condition:d_sigma0_eta}, for any $T^{*} = \eta^{-1}\text{poly}(\epsilon^{-1}, \|\bmu\|_{2}^{-1}, d^{-1}\sigma_{p}^{-2},\sigma_{0}^{-1}, n, m, d)$, the following result holds for $t\in [0,T^*]$: 
\begin{align*}
\|\nabla L_{S}(\Wb^{(t)})\|_{F}^{2} = O\big(\max\{\|\bmu\|_{2}^{2}, \sigma_{p}^{2}d\}\big)\cdot L_{S}(\Wb^{(t)}).
\end{align*}
\end{lemma}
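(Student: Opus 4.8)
The plan is to bound the gradient norm by computing $\nabla_{\wb_{j,r}} L_S(\Wb^{(t)})$ from \eqref{eq:gdupdate}, taking the Frobenius norm over all $j,r$, and then controlling the resulting sum by a quantity proportional to $L_S(\Wb^{(t)})$. First I would write, from the gradient descent update \eqref{eq:gdupdate},
\begin{align*}
\nabla_{\wb_{j,r}} L_S(\Wb^{(t)}) = \frac{1}{nm}\sum_{i=1}^n \ell_i'^{(t)}\cdot \sigma'(\la\wb_{j,r}^{(t)},\bxi_i\ra)\cdot jy_i\bxi_i + \frac{1}{nm}\sum_{i=1}^n \ell_i'^{(t)}\cdot \sigma'(\la\wb_{j,r}^{(t)},y_i\bmu\ra)\cdot j\bmu,
\end{align*}
and hence by the triangle inequality and $\|j y_i \bxi_i\|_2 = \|\bxi_i\|_2$, $\|j\bmu\|_2 = \|\bmu\|_2$, together with Cauchy--Schwarz over the index $i$ (summing $n$ terms), obtain
\begin{align*}
\|\nabla_{\wb_{j,r}} L_S(\Wb^{(t)})\|_2^2 \le \frac{2}{nm^2}\sum_{i=1}^n |\ell_i'^{(t)}|^2\cdot\Big(|\sigma'(\la\wb_{j,r}^{(t)},\bxi_i\ra)|^2\|\bxi_i\|_2^2 + |\sigma'(\la\wb_{j,r}^{(t)},y_i\bmu\ra)|^2\|\bmu\|_2^2\Big).
\end{align*}
Summing over $j\in\{\pm1\}$ and $r\in[m]$ then gives a bound of the form $\frac{C}{n}\sum_{i=1}^n |\ell_i'^{(t)}|^2 \cdot \max\{\|\bmu\|_2^2,\sigma_p^2 d\}\cdot (\text{something polynomial in } \la\wb_{j,r}^{(t)},\cdot\ra)$, where I also replace $\|\bxi_i\|_2^2$ by its high-probability upper bound $O(\sigma_p^2 d)$.

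The next step is to handle the activation-derivative factors. Since $\sigma'(z) = q(\max\{0,z\})^{q-1}$, I need a uniform-in-$t$ bound on $|\la\wb_{j,r}^{(t)},\bxi_i\ra|$ and $|\la\wb_{j,r}^{(t)},y_i\bmu\ra|$. Using the signal-noise decomposition \eqref{eq:w_decomposition} together with Proposition~\ref{Prop:main1} — which controls $\gamma_{j,r}^{(t)},\zeta_{j,r,i}^{(t)}$ by $4\log(T^*)$ and $\omega_{j,r,i}^{(t)}$ from below — plus the standard initialization/concentration bounds on $|\la\wb_{j,r}^{(0)},\bmu\ra|$, $|\la\wb_{j,r}^{(0)},\bxi_i\ra|$ and the near-orthogonality $|\la\bxi_i,\bxi_{i'}\ra|\le \tilde O(\sigma_p^2\sqrt d)$ for $i\ne i'$, these inner products are $\tilde O(1)$ (or at least $\polylog$-bounded) under Condition~\ref{condition:d_sigma0_eta}. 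Therefore each $|\sigma'(\cdot)|$ is bounded by an absolute-constant-times-$\polylog$ factor, which I absorb into the $O(\cdot)$. This leaves $\|\nabla L_S(\Wb^{(t)})\|_F^2 = O(\max\{\|\bmu\|_2^2,\sigma_p^2 d\})\cdot \frac{1}{n}\sum_{i=1}^n |\ell_i'^{(t)}|^2$.

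Finally, I would close the gap between $\frac{1}{n}\sum_i |\ell_i'^{(t)}|^2$ and $L_S(\Wb^{(t)}) = \frac1n\sum_i \ell(y_i f(\Wb^{(t)},\xb_i))$. Since $\ell(z) = \log(1+e^{-z})$ is the logistic loss, one has the elementary inequality $|\ell'(z)| = \frac{1}{1+e^z} \le \ell(z)$ for all $z$ (indeed $\ell'(z)^2\le \ell'(z)\le \ell(z)$), so $\frac1n\sum_i |\ell_i'^{(t)}|^2 \le \frac1n\sum_i |\ell_i'^{(t)}| \le \frac1n\sum_i \ell_i^{(t)} = L_S(\Wb^{(t)})$, which yields the claim. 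The main obstacle I anticipate is Step two: making genuinely rigorous the uniform-in-$t$ control of the inner products $\la\wb_{j,r}^{(t)},\bxi_i\ra$ and $\la\wb_{j,r}^{(t)},y_i\bmu\ra$ — this requires carefully combining the signal-noise decomposition with Proposition~\ref{Prop:main1} and the cross-term bounds $\sum_{i'\ne i}|\zeta_{j,r,i'}^{(t)}|\cdot\|\bxi_i\|_2^{-2}|\la\bxi_{i'},\bxi_i\ra|$, and it is the place where Condition~\ref{condition:d_sigma0_eta} (large $d$) is essential to keep these error terms negligible; the triangle-inequality/Cauchy--Schwarz bookkeeping and the logistic-loss inequality in Steps one and three are routine by comparison.
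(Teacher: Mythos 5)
Your proposal is correct in its overall architecture (expand the gradient, control the activation derivatives, then pass from $\tfrac1n\sum_i|\ell_i'^{(t)}|^2$ to $L_S$ via $|\ell'|^2\le|\ell'|\le\ell$), and the triangle-inequality/Cauchy--Schwarz bookkeeping and the final logistic-loss step match what the paper does. The difference is in the step you yourself flag as the main obstacle. You control $\sigma'(\la\wb_{j,r}^{(t)},\bxi_i\ra)$ and $\sigma'(\la\wb_{j,r}^{(t)},y_i\bmu\ra)$ by a \emph{uniform} bound on the inner products via Proposition~\ref{Prop:main1}; since that proposition only gives $\gamma_{j,r}^{(t)},\zeta_{j,r,i}^{(t)}\le 4\log(T^*)$, the activation derivatives are bounded by $\tilde O(1)=O(\polylog)$ rather than by an absolute constant, so your final bound is $\tilde O(\max\{\|\bmu\|_2^2,\sigma_p^2d\})\cdot L_S(\Wb^{(t)})$ — weaker than the stated $O(\cdot)$ by a $\polylog(T^*)^{2(q-1)}$ factor (harmless for the downstream use, since the step-size condition is itself stated with $\tilde O$, but not literally the claimed bound). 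The paper avoids any uniform bound on the activations: by Jensen's inequality, $\frac1m\sum_r\sigma'(\la\wb_{j,r}^{(t)},\cdot\ra)\le 2q\,[F_{j}(\Wb_j^{(t)},\xb_i)]^{(q-1)/q}$, so $\|\nabla f(\Wb^{(t)},\xb_i)\|_F\lesssim q(A^{(q-1)/q}+1)\max\{\|\bmu\|_2,\sigma_p\sqrt d\}$ with $A=F_{y_i}(\Wb_{y_i}^{(t)},\xb_i)$, and then the per-sample product $-\ell_i'^{(t)}\cdot\|\nabla f(\Wb^{(t)},\xb_i)\|_F^2\lesssim -\ell'(A-1)(A^{(q-1)/q}+1)^2\max\{\|\bmu\|_2^2,\sigma_p^2d\}$ is bounded by an absolute constant times $\max\{\|\bmu\|_2^2,\sigma_p^2d\}$ because $\ell'$ has an exponentially decaying tail; the only structural input needed is $F_{-y_i}(\Wb_{-y_i}^{(t)},\xb_i)\le 1$ from Lemma~\ref{lm: F-yi}. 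In short: where the gradient could be large the loss derivative is exponentially small, and exploiting this trade-off is what yields the clean constant. Your route is workable and, as you note, the rigorous version of your Step two is exactly the combination of Lemmas~\ref{lm:oppositebound}--\ref{lm: Fyi} with Proposition~\ref{Prop:main1} that the paper establishes anyway; but if you want the lemma exactly as stated, replace the uniform activation bound with the Jensen-plus-exponential-tail argument.
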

Lemma~\ref{lm: gradient upbound sketch} plays a key role in the convergence proof of training loss function. However, note that our study of benign overfitting requires carefully monitoring the changes of the coefficients in the signal-noise decomposition, which cannot be directly done by Lemma~\ref{lm: gradient upbound sketch}. 
This is quite a challenging task, due to the complicated interactions among $\gamma_{j,r}^{(t)}$, $\zeta_{j,r,i}^{(t)}$ and $\omega_{j,r,i}^{(t)}$. Note that even $\gamma_{j,r}^{(t)}$, which has the simplest formula \eqref{eq:update_gamma1}, depends on \textit{all} the quantities $\gamma_{j',r'}^{(t)}$, $\zeta_{j',r',i}^{(t)}$ and $\omega_{j',r',i}^{(t)}$ for $j'\in \{\pm 1\}$, $r'\in [m]$ and $i\in[n]$. This is because the cross-entropy loss derivative term $\ell_i'^{(t)} = \ell'[ y_i \cdot f(\Wb^{(t)},\xb_i)]$ depends on all the neurons of the network. 
To overcome this challenge, we introduce in the next subsection a decoupling technique based on a two-stage analysis. 




\subsection{Decoupling with a Two-Stage Analysis.}\label{subsection:twostage}
We utilize a two-stage analysis to decouple the complicated relation among the coefficients $\gamma_{j,r}^{(t)}$, $\zeta_{j,r,i}^{(t)}$ and $\omega_{j,r,i}^{(t)}$. Intuitively, the initial neural network weights are small enough so that the neural network at initialization has constant level cross-entropy loss derivatives on all the training data: $\ell_i'^{(0)} = \ell'[ y_i \cdot f(\Wb^{(0)},\xb_i)] = \Theta(1)$ for all $i\in [n]$. This is guaranteed under Condition~\ref{condition:d_sigma0_eta} and matches neural network training in practice. Motivated by this, we can consider the first stage of the training process where $ \ell_i'^{(t)} = \Theta(1) $, in which case we can show significant scale differences among $\gamma_{j,r}^{(t)}$, $\zeta_{j,r,i}^{(t)}$ and $\omega_{j,r,i}^{(t)}$. Based on the result in the first stage, we then proceed to the second stage of the training process where the loss derivatives are no longer at a constant level and show that the training loss can be optimized to be arbitrarily small and meanwhile, the scale differences shown in the first learning stage remain the same throughout the training process. In the following, we focus on explaining the key proof steps for Theorem~\ref{thm:signal_learning_main}. The proof idea for Theorem~\ref{thm:noise_memorization_main} is similar, so we defer the details to the appendix.

\noindent\textbf{\textit{Stage 1.}}
It can be shown that, until some of the coefficients $\gamma_{j,r}^{(t)}$, $\rho_{j,r,i}^{(t)}$ reach $\Theta(1)$, we have $\ell_i'^{(t)} = \ell'[ y_i \cdot f(\Wb^{(t)},\xb_i)] = \Theta(1)$ for all $i\in [n]$. Therefore, we first focus on this first stage of the training process, where the dynamics of the coefficients in  \eqref{eq:update_gamma1} - \eqref{eq:update_omega1} can be greatly simplified by replacing the $\ell_i'^{(t)}$ factors by their constant upper and lower bounds. The following lemma summarizes our main conclusion at stage 1 for signal learning: 
\begin{lemma}\label{lemma:phase1_main_sketch}
Under the same conditions as Theorem~\ref{thm:signal_learning_main}, there exists $T_1 = \tilde O(\eta^{-1}m\sigma_{0}^{2-q}\|\bmu\|_{2}^{-q})$ such that 
\begin{itemize}[leftmargin = *]
\item $\max_{ r}\gamma_{j, r}^{(T_{1})} = \Omega(1)$ for $j\in \{\pm 1\}$.
\item $|\rho_{j,r,i}^{(t)}| = O(\sigma_0 \sigma_p \sqrt{d} )$ for all $j\in \{\pm 1\}$, $r\in[m]$, $i \in [n]$ and $0\leq t\leq T_1$. 
\end{itemize}
\end{lemma}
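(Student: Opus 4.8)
The plan is to run an induction on the iteration counter $t$ over an initial phase of training during which the cross-entropy loss derivatives are still of constant order, tracking separately the signal coefficient $\gamma_{j,r}^{(t)}$ and the noise coefficients $\rho_{j,r,i}^{(t)}=\zeta_{j,r,i}^{(t)}+\omega_{j,r,i}^{(t)}$ through the scalar update rules \eqref{eq:update_gamma1}--\eqref{eq:update_omega1} of Lemma~\ref{lemma:coefficient_iterative}. Since for a fixed $(j,r,i)$ only one of $\zeta_{j,r,i}^{(t)}$ (when $y_i=j$) and $\omega_{j,r,i}^{(t)}$ (when $y_i=-j$) is ever nonzero, it suffices to bound the magnitude of the active one. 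The two assertions of the lemma correspond to showing (a) that $\max_r\gamma_{j,r}^{(t)}$ reaches a positive constant for both $j$ within $T_1=\tilde O(\eta^{-1}m\sigma_0^{2-q}\|\bmu\|_2^{-q})$ steps, and (b) that meanwhile $|\rho_{j,r,i}^{(t)}|$ never leaves its initialization scale $\tilde\Theta(\sigma_0\sigma_p\sqrt d)$.

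First I would record the standard high-probability estimates (all valid with probability at least $1-d^{-1}$ under Condition~\ref{condition:d_sigma0_eta}): $\|\bxi_i\|_2^2=\Theta(\sigma_p^2 d)$, $|\la\bxi_i,\bxi_k\ra|=\tilde O(\sigma_p^2\sqrt d)$ for $i\ne k$, $|\la\wb_{j,r}^{(0)},\bxi_i\ra|=\tilde O(\sigma_0\sigma_p\sqrt d)$, $|\la\wb_{j,r}^{(0)},\bmu\ra|=\tilde O(\sigma_0\|\bmu\|_2)$, near-balanced labels $|\{i:y_i=j\}|=\Theta(n)$, and -- crucially -- that for each $j$ the neuron $r_j^\ast$ maximizing $j\cdot\la\wb_{j,r}^{(0)},\bmu\ra$ satisfies $j\cdot\la\wb_{j,r_j^\ast}^{(0)},\bmu\ra\ge\tilde\Omega(\sigma_0\|\bmu\|_2)>0$. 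Because $|f(\Wb^{(0)},\xb_i)|$ is polynomially small (guaranteed by the upper bound on $\sigma_0$ in Condition~\ref{condition:d_sigma0_eta}), $-\ell_i'^{(0)}=\Theta(1)$. I would then define $T_1$ as the first time $\max_{j,r}\gamma_{j,r}^{(t)}$ hits a fixed constant threshold (and $T_1=T^\ast$ otherwise), and prove by induction that for all $t\le T_1$: (i) $|\rho_{j,r,i}^{(t)}|\le\hat\rho:=\tilde\Theta(\sigma_0\sigma_p\sqrt d)$ for every $j,r,i$, and (ii) $-\ell_i'^{(t)}=\Theta(1)$ for every $i$. Claim (ii) follows from (i) together with the bound $\max_{j,r}\gamma_{j,r}^{(t)}=O(1)$ (which holds for $t<T_1$ by definition, and at $t=T_1$ up to one gradient step, which changes each $\gamma_{j,r}^{(t)}$ by at most a small constant thanks to the bound on $\eta$): then the decomposition \eqref{eq:w_decomposition} and the inner-product estimates give $|\la\wb_{j,r}^{(t)},y_i\bmu\ra|,|\la\wb_{j,r}^{(t)},\bxi_i\ra|=\tilde O(1)$, hence $|f(\Wb^{(t)},\xb_i)|=O(1)$ and $-\ell_i'^{(t)}$ is bounded away from $0$ and from $1$.

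For the noise bound (i), I would unroll \eqref{eq:update_zeta1}: under the inductive hypothesis $\la\wb_{j,r}^{(s)},\bxi_i\ra=\la\wb_{j,r}^{(0)},\bxi_i\ra+\rho_{j,r,i}^{(s)}+\sum_{k\ne i}\rho_{j,r,k}^{(s)}\la\bxi_k,\bxi_i\ra\|\bxi_k\|_2^{-2}$, which by the estimates above is at most $\tilde O(\hat\rho)+\tilde O(nd^{-1/2})\hat\rho=O(\hat\rho)$ since $d\gg n^2$. Using $\sigma'(z)=qz^{q-1}$, $\|\bxi_i\|_2^2=\tilde O(\sigma_p^2 d)$ and $|\ell_i'^{(s)}|\le 1/2$, each increment of $\zeta_{j,r,i}$ is $\tilde O(\eta(nm)^{-1}\sigma_p^2 d\,\hat\rho^{\,q-1})$, so summing over the phase, $0\le\zeta_{j,r,i}^{(t)}\le T_1\cdot\tilde O(\eta(nm)^{-1}\sigma_p^2 d\,\hat\rho^{\,q-1})$, and the same bound controls $|\omega_{j,r,i}^{(t)}|$ via \eqref{eq:update_omega1}. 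This is where the time scale enters: plugging in $T_1=\tilde O(\eta^{-1}m\sigma_0^{2-q}\|\bmu\|_2^{-q})$ and $\hat\rho=\tilde\Theta(\sigma_0\sigma_p\sqrt d)$, the accumulated bound collapses to $\tilde O\big(n^{-1}(\sigma_p\sqrt d/\|\bmu\|_2)^q\big)\cdot\sigma_0\sigma_p\sqrt d=\tilde O(n^{-1}\mathrm{SNR}^{-q})\cdot\sigma_0\sigma_p\sqrt d\le\tfrac12\hat\rho$, the final step being exactly the hypothesis $n\cdot\mathrm{SNR}^q=\tilde\Omega(1)$ (with the $\tilde\Omega$ absorbing the constant $q$ and the logarithmic factors). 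Hence $|\rho_{j,r,i}^{(t)}|\le\hat\rho$, closing the induction for (i).

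Finally, to pin down $T_1$ and establish $\max_r\gamma_{j,r}^{(T_1)}=\Omega(1)$ for both $j$, I would analyze the scalar recursion for the winning neuron $r_j^\ast$. Along the phase, \eqref{eq:update_gamma1} together with $\bxi_i\perp\bmu$, $\la\wb_{j,r}^{(t)},\bmu\ra=\la\wb_{j,r}^{(0)},\bmu\ra+j\gamma_{j,r}^{(t)}$, and the fact that only the $\Theta(n)$ samples with $y_i=j$ activate $\sigma'$ on the signal patch, yields $\gamma_{j,r_j^\ast}^{(t+1)}\ge\gamma_{j,r_j^\ast}^{(t)}+\Theta(\eta\|\bmu\|_2^2/m)\cdot\big(\gamma_{j,r_j^\ast}^{(t)}+\tilde\Omega(\sigma_0\|\bmu\|_2)\big)^{q-1}$, a super-linearly growing sequence with initial scale $\tilde\Theta(\sigma_0\|\bmu\|_2)$. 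A standard comparison argument for such recursions then shows it exceeds the threshold within $\tilde O\big((\eta\|\bmu\|_2^2/m)^{-1}(\sigma_0\|\bmu\|_2)^{-(q-2)}\big)=\tilde O(\eta^{-1}m\sigma_0^{2-q}\|\bmu\|_2^{-q})$ steps; since the identical estimate holds for each $j$ and matching lower bounds on the growth time show $T_1$ is of the same order, both $\max_r\gamma_{j,r}^{(T_1)}$ are $\Omega(1)$. One also checks $T_1\le T^\ast$ so that Proposition~\ref{Prop:main1} and Lemma~\ref{lemma:coefficient_iterative} apply throughout. I expect the main obstacle to be the coupled bootstrap spanning the middle two steps: the noise coefficients must be shown to stay at their $\tilde\Theta(\sigma_0\sigma_p\sqrt d)$ initialization scale for the \emph{entire} duration that signal learning requires, and this succeeds only because the signal-learning time scale $\tilde\Theta(\eta^{-1}m\sigma_0^{2-q}\|\bmu\|_2^{-q})$ is short enough that, when substituted into the per-step noise growth, it leaves precisely the factor $n^{-1}\mathrm{SNR}^{-q}$ that the theorem's hypothesis forces to be $\tilde O(1)$ -- all while keeping $\ell_i'^{(t)}=\Theta(1)$ so that the simplified dynamics remain valid.
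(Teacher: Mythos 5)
Your proposal is correct and follows essentially the same route as the paper: a per-step bound on the noise-coefficient increments summed over the signal-learning horizon (yielding exactly the $n^{-1}\mathrm{SNR}^{-q}\cdot\sigma_0\sigma_p\sqrt{d}$ factor that the hypothesis $n\cdot\mathrm{SNR}^q=\tilde\Omega(1)$ keeps below half the initialization scale), combined with an exponential-growth recursion for $\max_r\gamma_{j,r}^{(t)}$ seeded at $\tilde\Theta(\sigma_0\|\bmu\|_2)$. The only cosmetic difference is that the paper first establishes the noise bound on a fixed horizon $T_1^+$ using only $|\ell_i'^{(t)}|\le 1$ and then verifies $T_1\le T_1^+/2$, and it obtains the conclusion for both $j$ at the common time $T_1$ via monotonicity of $\gamma_{j,r}^{(t)}$ rather than by matching the two growth times.
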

Lemmas~\ref{lemma:phase1_main_sketch}  takes advantage of the training period when the loss function derivatives remain a constant order to show that the CNN can capture the signal. 
At the end of stage 1 in signal learning, $\max_{r}\gamma_{j,r}$ reaches $\Theta(1)$, and is significantly larger than $\rho_{j,r,i}^{(t)}$. After this, it is no longer guaranteed that the loss derivatives $\ell_i'^{(t)} $ will remain constant order, and thus starts the training stage 2. 



\noindent\textbf{\textit{Stage 2.}}
In this stage, we take into full consideration the exact definition $\ell_i'^{(t)} = \ell'[ y_i \cdot f(\Wb^{(t)},\xb_i)]$ and show that the training loss function will converge to $L_S(\Wb^{(t)}) < \epsilon$.
Thanks to the analysis in stage 1, we know that some $\gamma_{j,r}^{(t)}$ is significantly larger than all $\rho_{j,r,i}^{(t)}$'s at the end of stage 1. 
This scale difference is the key to our analysis in stage 2.
Based on this scale difference and the monotonicity of $\gamma_{j,r}^{(t)}$, $\zeta_{j,r,i}^{(t)}$,  $\omega_{j,r,i}^{(t)}$ in the signal-noise decomposition, it can be shown that there exists $\Wb^*$ such that $y_i\cdot \la \nabla f(\Wb^{(t)},\xb_{i}), \Wb^{*} \ra \geq q\log(2q/\epsilon)$ throughout stage 2.
Moreover, since the neural network $f(\Wb, \xb)$ is $q$-homogeneous in $\Wb$, we have $\la \nabla f(\Wb^{(t)}, \xb) , \Wb^{(t)}\ra = q\cdot f(\Wb^{(t)}, \xb)$. Therefore, 
\begin{align*}
    \la \nabla L_{S}(\Wb^{(t)}), \Wb^{(t)} - \Wb^{*}\ra &= \frac{1}{n} \sum_{i=1}^n \ell_i'^{(t)} \cdot y_i \cdot \la \nabla f(\Wb^{(t)}, \xb_{i}) , \Wb^{(t)} - \Wb^{*}\ra \\
    & = \frac{1}{n} \sum_{i=1}^n \ell_i'^{(t)}  \cdot [ y_i \cdot  q\cdot f(\Wb^{(t)}, \xb_{i}) - y_i \cdot \la \nabla f(\Wb^{(t)}, \xb_{i}) , \Wb^{*}\ra  ]\\
    &\geq \frac{1}{n}\sum_{i=1}^{n}\ell'[y_{i}\cdot  f(\Wb^{(t)}, \xb_{i}) ] \cdot [y_{i}\cdot q\cdot f(\Wb^{(t)}, \xb_{i}) - q\log(2q/\epsilon)]\\
    &\geq q\cdot \frac{1}{n}\sum_{i=1}^{n} [ \ell( f(\Wb^{(t)}, \xb_i)) - \ell( \log(2q/\epsilon) ) ]\\
    &\geq q\cdot L_S(\Wb^{(t)}) - \epsilon / 2,
\end{align*}
where the second inequality follows by the convexity of the cross-entropy loss function. With the above key technique, we can prove the following lemma.
\begin{lemma}\label{lemma:signal_proof_sketch}
Let $T,T_1$ be defined in Theorem~\ref{thm:signal_learning_main} and Lemma~\ref{lemma:phase1_main_sketch} respectively. Then under the same conditions as Theorem~\ref{thm:signal_learning_main}, for any $t\in [T_1,  T]$,  it holds that $|\rho_{j,r,i}^{(t)}| \leq \sigma_{0}\sigma_{p}\sqrt{d}$
for all $j\in\{\pm 1\}$, $r\in [m]$ and $i\in[n]$. 
Moreover, let $\Wb^*$ be the collection of CNN parameters with convolution filters $\wb^{*}_{j,r} = \wb_{j,r}^{(0)} + 2qm\log(2q/\epsilon) \cdot j \cdot \|\bmu\|_{2}^{-2} \cdot \bmu$. Then the following bound holds
\begin{align*}
\frac{1}{t - T_{1} + 1}\sum_{s=T_{1}}^{t}L_{S}(\Wb^{(s)}) \leq  \frac{\|\Wb^{(T_{1})} - \Wb^{*}\|_{F}^{2}}{(2q-1) \eta(t - T_{1} + 1)} + \frac{\epsilon}{(2q-1)}
\end{align*}
for all $t\in [T_1,  T]$, where we denote $ \| \Wb \|_F = \sqrt{ \| \Wb_{+1} \|_F^2 + \| \Wb_{-1} \|_F^2  } $.
\end{lemma}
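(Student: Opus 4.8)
## Proof Proposal for Lemma~\ref{lemma:signal_proof_sketch}

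The plan is to prove both parts simultaneously by an induction on $t$ over the interval $[T_1, T]$, whose hypothesis at time $t$ is the bound $|\rho_{j,r,i}^{(s)}| \le \sigma_0\sigma_p\sqrt{d}$ for all $s \le t$ and all $j,r,i$: under this hypothesis I derive the averaged loss bound up to time $t$, and then feed that loss bound back to close the $\rho$-bound at time $t+1$. The loss bound itself is an online-gradient-descent telescoping argument. With $\Wb^*$ as defined in the statement ($\wb_{j,r}^* = \wb_{j,r}^{(0)} + 2qm\log(2q/\epsilon)\cdot j\cdot\|\bmu\|_2^{-2}\bmu$), expand $\|\Wb^{(t+1)} - \Wb^*\|_F^2 = \|\Wb^{(t)} - \Wb^*\|_F^2 - 2\eta\la\nabla L_S(\Wb^{(t)}),\Wb^{(t)} - \Wb^*\ra + \eta^2\|\nabla L_S(\Wb^{(t)})\|_F^2$. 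For the cross term I use the inequality $\la\nabla L_S(\Wb^{(t)}),\Wb^{(t)} - \Wb^*\ra \ge q L_S(\Wb^{(t)}) - \epsilon/2$ established in the displayed computation above; for the last term I invoke Lemma~\ref{lm: gradient upbound sketch} together with $\eta \le \tilde O(\min\{\|\bmu\|_2^{-2},\sigma_p^{-2}d^{-1}\})$ from Condition~\ref{condition:d_sigma0_eta} to get $\eta^2\|\nabla L_S(\Wb^{(t)})\|_F^2 \le \eta L_S(\Wb^{(t)})$. Combining these gives $\|\Wb^{(t+1)} - \Wb^*\|_F^2 \le \|\Wb^{(t)} - \Wb^*\|_F^2 - (2q-1)\eta L_S(\Wb^{(t)}) + \eta\epsilon$; telescoping from $T_1$ to $t$, dropping the nonnegative left-hand side, and dividing by $(2q-1)\eta(t-T_1+1)$ yields exactly the claimed averaged bound.

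To close the induction I bound $|\rho_{j,r,i}^{(t+1)}|$ directly from the update rules \eqref{eq:update_zeta1}--\eqref{eq:update_omega1}: $|\rho_{j,r,i}^{(t+1)}| \le |\rho_{j,r,i}^{(T_1)}| + \frac{\eta}{nm}\sum_{s=T_1}^{t}|\ell_i'^{(s)}|\cdot|\sigma'(\la\wb_{j,r}^{(s)},\bxi_i\ra)|\cdot\|\bxi_i\|_2^2$. The first term is $O(\sigma_0\sigma_p\sqrt{d})$ by Lemma~\ref{lemma:phase1_main_sketch}. For the sum, the inductive hypothesis and the near-orthogonality of the noise vectors give $\la\wb_{j,r}^{(s)},\bxi_i\ra = O(\sigma_0\sigma_p\sqrt{d})$, hence $|\sigma'(\la\wb_{j,r}^{(s)},\bxi_i\ra)| = O((\sigma_0\sigma_p\sqrt{d})^{q-1})$; the elementary bound $-\ell'(z) \le \ell(z)$ gives $|\ell_i'^{(s)}| \le n L_S(\Wb^{(s)})$, so $\sum_{s=T_1}^{t}|\ell_i'^{(s)}| \le n\sum_{s=T_1}^{t} L_S(\Wb^{(s)})$, which is controlled by the averaged-loss bound already proved at step $t$. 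Plugging in $\|\bxi_i\|_2^2 = O(\sigma_p^2 d)$, the horizon $T$ from Theorem~\ref{thm:signal_learning_main}, and the lower bounds on $\sigma_0$ and $d$ in Condition~\ref{condition:d_sigma0_eta}, the stage-2 increment is $o(\sigma_0\sigma_p\sqrt{d})$, which keeps $|\rho_{j,r,i}^{(t+1)}| \le \sigma_0\sigma_p\sqrt{d}$ and closes the induction.

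The step I expect to be the main obstacle is verifying the inner-product inequality $y_i\la\nabla f(\Wb^{(t)},\xb_i),\Wb^*\ra \ge q\log(2q/\epsilon)$ for all $i\in[n]$ and all $t$ in stage 2, since the displayed chain takes it for granted. Here I will use the output of stage 1: by Lemma~\ref{lemma:phase1_main_sketch} there is, for each $j$, a neuron $r_j^*$ with $\gamma_{j,r_j^*}^{(T_1)} = \Omega(1)$, and by the monotonicity of $\gamma_{j,r}^{(t)}$ from Lemma~\ref{lemma:coefficient_iterative} this persists for all $t\ge T_1$; this guarantees that the signal contribution to $y_i\la\nabla f(\Wb^{(t)},\xb_i),\Wb^*\ra$ — essentially $\frac{1}{m}\sum_{j,r}\sigma'(\la\wb_{j,r}^{(t)},y_i\bmu\ra)\cdot 2qm\log(2q/\epsilon)$ after accounting for the signs — is at least $q\log(2q/\epsilon)$, while the contribution of the $\wb_{j,r}^{(0)}$-component of $\Wb^*$ paired with the noise patch is $\frac{1}{m}\sum_{j,r}\sigma'(\la\wb_{j,r}^{(t)},\bxi_i\ra)\la\bxi_i,\wb_{j,r}^{(0)}\ra = O((\sigma_0\sigma_p\sqrt{d})^{q}) = o(1)$ by part~1 of the induction and the initialization bounds, hence negligible. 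The degree-$q$ homogeneity of $f$ enters through $\la\nabla f(\Wb^{(t)},\xb),\Wb^{(t)}\ra = q f(\Wb^{(t)},\xb)$, which converts the cross term into a difference of losses via convexity, and the factor $2qm\log(2q/\epsilon)$ in $\Wb^*$ is calibrated precisely to produce the additive $\epsilon/2$ slack. Finally, a side computation bounds $\|\Wb^{(T_1)} - \Wb^*\|_F^2$ by expanding each $\wb_{j,r}^{(T_1)}$ via the signal-noise decomposition and applying Proposition~\ref{Prop:main1} and Lemma~\ref{lemma:phase1_main_sketch}, which is what makes the averaged-loss bound quantitatively useful together with the choice of $T$ in Theorem~\ref{thm:signal_learning_main}.
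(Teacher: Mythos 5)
Your proposal is correct and follows essentially the same route as the paper: the averaged-loss bound comes from the same telescoping of $\|\Wb^{(t)}-\Wb^*\|_F^2$ using the $q$-homogeneity/convexity cross-term inequality (underpinned by $y_i\la\nabla f(\Wb^{(t)},\xb_i),\Wb^*\ra\ge q\log(2q/\epsilon)$, the paper's Lemma~\ref{lm: Gradient Stable}) together with Lemma~\ref{lm: gradient upbound sketch} and the step-size condition, and the $\rho$-bound is closed by the same induction on $\Psi^{(t)}=\max_{j,r,i}|\rho_{j,r,i}^{(t)}|$ via $|\ell'|\le\ell$ and the summed-loss bound, with the SNR condition $n\cdot\mathrm{SNR}^q=\tilde\Omega(1)$ converting $\mathrm{SNR}^{-2}$ into a power of $n$ absorbed by the \emph{upper} (not lower) bound on $\sigma_0$ in Condition~\ref{condition:d_sigma0_eta}. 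The only structural difference is that you interleave the two parts into one induction, whereas the paper proves the loss bound for all $t\in[T_1,T]$ first (its Lemma~\ref{lm: Gradient Stable} needs only the coarse bounds of Proposition~\ref{Prop:main1}) and then runs the $\rho$-induction separately; both orderings are valid.
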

Lemma~\ref{lemma:signal_proof_sketch} states two main results on signal learning. First of all, during this training period, it is guaranteed that the coefficients of noise vectors $\rho_{j,r,i}^{(t)}$ in the signal-noise decomposition remain sufficiently small. Moreover, it also gives an optimization type result that the best iterate in $[T_1,T]$ is small as long as $T$ is large enough. 

Clearly, the convergence of the training loss stated in Theorems~\ref{thm:signal_learning_main} directly follows by choosing $T$ to be sufficiently large in Lemmas~\ref{lemma:signal_proof_sketch}. The lemma below further gives an upper bound on the test loss.
\begin{lemma}\label{lemma:signal_polulation_loss_main}
Let $T$ be defined in Theorem~\ref{thm:signal_learning_main}.  Under the same conditions as Theorem~\ref{thm:signal_learning_main}, for any $t \leq T$ with $L_{S}(\Wb^{(t)}) \leq 1$, it holds that $L_{\cD}(\Wb^{(t)}) \leq 6 \cdot L_{S}(\Wb^{(t)}) + \exp(- n^{2})$.
\end{lemma}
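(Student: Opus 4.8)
The plan is to bound the test loss $L_{\cD}(\Wb^{(t)}) = \EE_{(\xb,y)\sim\cD}\ell[y\cdot f(\Wb^{(t)},\xb)]$ by splitting the expectation according to whether the fresh example $(\xb,y)$ is ``well-classified'' by the current network with a safe margin. Concretely, I would first argue that the training loss being small forces a good margin on \emph{most} of the training data, and then transfer this to the population via the structure of the decomposition rather than via uniform convergence (which would be vacuous in the regime $n=\Omega(\polylog d)$). The natural event to condition on is $A = \{ y\cdot f(\Wb^{(t)},\xb) \ge 0 \}$ or a slightly stronger margin event; on $A^c$ I must control both $\PP(A^c)$ and the conditional expectation of $\ell$, which can be large since $\ell$ is only bounded by a linear function of $-y f$.

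The key steps, in order: (1) Using the signal-noise decomposition \eqref{eq:w_decomposition} together with Lemma~\ref{lemma:signal_proof_sketch} (which guarantees $|\rho_{j,r,i}^{(t)}| \le \sigma_0\sigma_p\sqrt d$ is tiny) and the stage-1 conclusion that some $\gamma_{j,r}^{(t)} = \Omega(1)$, write $y\cdot f(\Wb^{(t)},\xb) = F_{y}(\Wb_y^{(t)},\xb) - F_{-y}(\Wb_{-y}^{(t)},\xb)$ and decompose each $F_j$ into a signal part (depending on $\la\wb_{j,r}^{(t)},y\bmu\ra$) and a noise part (depending on $\la\wb_{j,r}^{(t)},\bxi\ra$, where $\bxi$ is the \emph{fresh} noise patch). (2) Show that with high probability over the fresh $\bxi$, the inner products $\la\wb_{j,r}^{(t)},\bxi\ra$ are all $\tilde O(\sigma_0\sigma_p\sqrt d + \max_i|\rho_{j,r,i}^{(t)}|\cdot \|\bxi\|_2/\|\bxi_i\|_2 \cdot |\la\bxi_i,\bxi\ra|/\|\bxi_i\|_2)$, which is small because $\bxi$ is nearly orthogonal to every $\bxi_i$ (a concentration bound giving $|\la\bxi_i,\bxi\ra| = \tilde O(\sigma_p^2\sqrt d)$ with probability $\ge 1 - \exp(-\Omega(n^2))$ after a union bound over $i\in[n]$); so the noise contribution to $f$ is negligible. (3) Conclude that on the good event, $y\cdot f(\Wb^{(t)},\xb)$ is dominated by the positive signal term $\frac{1}{m}\sum_r \sigma(\la\wb_{y,r}^{(t)},y\bmu\ra) = \Omega(1)$ minus a small term, hence bounded below by a positive constant, so $\ell[y f] \le \ell(\Omega(1)) = O(1)$ — but actually I want the sharper statement tying it to $L_S$, so instead on the good event I bound $\ell[y f(\Wb^{(t)},\xb)]$ by comparing to the margin achieved on training points, using that the loss derivative bound and convexity give $\ell[y f] \le C\cdot L_S(\Wb^{(t)})$ when the fresh margin is at least as good as the typical training margin. (4) On the bad event (probability $\le \exp(-n^2)$), bound $\ell[y f(\Wb^{(t)},\xb)]$ crudely: $\ell[z] \le 1 + |z| \le 1 + \poly(\text{coefficients})\cdot\|\bxi\|_2$, and since the coefficients are at most $\polylog$ and $\|\bxi\|_2$ has Gaussian tails, $\EE[\ell[yf]\ind(A^c)] \le \exp(-n^2)\cdot\poly(d) = $ still $\exp(-\Omega(n^2))$ after absorbing polynomial factors (using $n = \Omega(\polylog d)$). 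Summing the two contributions gives $L_{\cD}(\Wb^{(t)}) \le 6 L_S(\Wb^{(t)}) + \exp(-n^2)$.

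The main obstacle I expect is step (3): carefully establishing that a \emph{small} training loss $L_S(\Wb^{(t)}) \le 1$ translates into a quantitative per-example margin guarantee that holds on the fresh example with the right constant. The difficulty is that $L_S$ being small only forces the \emph{average} margin to be good, so I need to combine (a) the lower bound $y f(\Wb^{(t)},\xb) \ge c_0 > 0$ coming purely from $\max_r\gamma_{y,r}^{(t)} = \Omega(1)$ and the negligible noise/cross terms — which gives a uniform-in-$\xb$ lower bound independent of $L_S$ — with (b) a more refined argument that improves the bound to be proportional to $L_S(\Wb^{(t)})$. For (b) I would use that $\ell$ is $1$-Lipschitz and that on the good event the fresh margin $y f(\Wb^{(t)},\xb)$ concentrates around the signal term $\frac 1m\sum_r[\sigma(\la\wb_{y,r}^{(t)},\bmu\ra) - \sigma(\la\wb_{-y,r}^{(t)},-\bmu\ra)]$, which is the same (up to tiny fluctuations) as $y_i f(\Wb^{(t)},\xb_i)$ for any training point $i$ with $y_i = y$; hence $\ell[y f(\Wb^{(t)},\xb)]$ is within a constant factor of $\frac 1n\sum_i \ell[y_i f(\Wb^{(t)},\xb_i)] = L_S(\Wb^{(t)})$, possibly after using convexity of $\ell$ and that $\ell$ applied to a quantity near the training margins cannot exceed a constant multiple of the averaged loss. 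Handling the factor $6$ precisely — as opposed to some unspecified constant — will require tracking the small additive fluctuations (of order $\tilde O(\sigma_0\sigma_p\sqrt d)$ and $\tilde O(n\sqrt{\log(n)/d})$) against $\ell$'s Lipschitz constant and confirming they are dominated, which is where the conditions on $d$, $\sigma_0$ in Condition~\ref{condition:d_sigma0_eta} get used. The remaining steps (1), (2), (4) are routine Gaussian concentration and bookkeeping with the decomposition.
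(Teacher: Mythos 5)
Your overall strategy coincides with the paper's: condition on a high-probability event $\cE$ under which the fresh noise patch $\bxi$ has small inner product with every filter (the paper's Lemma~\ref{lm:signalg}, obtained by observing that $\la \wb_{j,r}^{(t)} - j\gamma_{j,r}^{(t)}\|\bmu\|_2^{-2}\bmu, \bxi\ra$ is Gaussian with standard deviation $\tilde O(\sigma_0\sigma_p\sqrt d)$); on $\cE$, compare the fresh margin to a training margin; on $\cE^c$, combine the tail probability with a polynomial bound on the loss via Cauchy--Schwarz, using $\ell(yf)\le 2+\tilde O((\sigma_0\sqrt d)^q)\|\bxi\|_2^q$ (note the $q$-th power of $\|\bxi\|_2$, not the first). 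Your steps (1), (2) and (4) are all instances of this.

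The genuine gap is in step (3): you never pin down the mechanism that converts the \emph{additive} margin comparison into the \emph{multiplicative} bound $6L_S$. On $\cE$ one only gets $|y f(\Wb^{(t)},\xb) - y_i f(\Wb^{(t)},\xb_i)|\le 1$ for a training point $i$ (with $y_i=y$, so that the signal terms cancel). The tools you name cannot finish from here: Lipschitzness of $\ell$ yields only $\ell(yf)\le \ell(y_i f(\xb_i))+O(1)$, an additive constant that is useless when $L_S=\epsilon$ is tiny, and convexity of $\ell$ does not give a multiplicative comparison either. The correct tool is the exponential-tail sandwich for the logistic loss: $\ell(z)\le e^{-z}$ for all $z$, and $e^{-z}\le 2\ell(z)$ whenever $\ell(z)\le 1$. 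Choosing $i$ to be a training point of minimal loss, so that $\ell(y_i f(\xb_i))\le L_S\le 1$, one gets $\ell(yf)\le e^{-yf}\le e\cdot e^{-y_i f(\xb_i)}\le 2e\cdot \ell(y_i f(\xb_i))\le 2e\cdot L_S\le 6L_S$; this is exactly how a unit perturbation of the margin becomes the constant factor $2e<6$. Two smaller points: restricting to same-label comparison points (which you correctly insist on, and the paper glosses over) means the minimal same-label loss is only bounded by $4L_S$ via Lemma~\ref{lemma:numberofdata}, which strains the constant $6$; and in your step (2), forcing failure probability $e^{-\Omega(n^2)}$ requires the threshold on $|\la\bxi_i,\bxi\ra|$ to grow with $n$, which is harmless under Condition~\ref{condition:d_sigma0_eta} but should be tracked.
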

Below we finalize the proof of Theorem~\ref{thm:signal_learning_main}. The proofs of other results are in the appendix.
\begin{proof}[Proof of Theorem~\ref{thm:signal_learning_main}] 
The first part of Theorem~\ref{thm:signal_learning_main} follows by Lemma~\ref{lemma:phase1_main_sketch} and the monotonicity of $\gamma_{j, r}^{(t)}$. 
The second part of  Theorem~\ref{thm:signal_learning_main} follows by Lemma~\ref{lemma:signal_proof_sketch}. 
For the third part, let $\Wb^*$ be defined in Lemma~\ref{lemma:signal_proof_sketch}. 
Then by the definition of $\Wb^*$, we have
\begin{align*}
\|\Wb^{(T_{1})} - \Wb^{*}\|_{F} &\leq \|\Wb^{(T_{1})} - \Wb^{(0)}\|_{F} + \|\Wb^{(0)} - \Wb^{*}\|_{F} \\
&\leq \sum_{j,r}\gamma_{j,r}^{(T_{1})}\|\bmu\|_{2}^{-1} + \sum_{j,r,i}\frac{\zeta_{j,r,i}^{(T_{1})}}{\|\bxi_{i}\|_{2}} + \sum_{j,r,i}\frac{\omega_{j,r,i}^{(T_{1})}}{\|\bxi_{i}\|_{2}} + \Theta(m^{3/2}\log(1/\epsilon))\|\bmu\|_{2}^{-1}\\
&= \tilde{O}(m^{3/2}\|\bmu\|_{2}^{-1}),
\end{align*}
where the first inequality is by triangle inequality, the second inequality is by the signal-noise decomposition of $\Wb^{(T_{1})}$ and the definition of $ \Wb^{*}$, and the last equality is by Proposition~\ref{Prop:main1} and  Lemma~\ref{lemma:phase1_main_sketch}.
Therefore, choosing $T = \tilde{\Theta}( \eta^{-1} T_1 +  \eta^{-1}\epsilon^{-1} m^{3}\| \bmu \|_2^{-2}) =\tilde{\Theta}( \eta^{-1} \sigma_0 ^{-(q-2)} \| \bmu \|_2^{-q} +  \eta^{-1}\epsilon^{-1} m^{3}\| \bmu \|_2^{-2})$ in Lemma~\ref{lemma:signal_proof_sketch} ensures that
\begin{align*}
\frac{1}{T - T_{1} + 1}\sum_{t=T_{1}}^{T }L_{S}(\Wb^{(t)}) \leq \frac{\|\Wb^{(T_{1})} - \Wb^{*}\|_{F}^{2}}{(2q-1) \eta(T - T_{1} + 1)} + \frac{\epsilon}{2q-1} \leq \frac{\tilde{O}(m^{3}\|\bmu\|_{2}^{-2}) }{(2q-1) \eta(T - T_{1} + 1)} + \frac{\epsilon}{2q-1} \leq \epsilon, 
\end{align*}
and there exists $t \in [T_1 , T]$ such that $L_S(\Wb^{(t)}) \leq \epsilon$. This completes the proof of the third part of Theorem~\ref{thm:signal_learning_main}. Finally, combining this bound with Lemma~\ref{lemma:signal_polulation_loss_main} gives
\begin{align*}
    L_{\cD}(\Wb^{(t)}) \leq 6 \cdot L_{S}(\Wb^{(t)}) + \exp(- n^{2}) \leq  6 \epsilon + \exp(- n^{2}),
\end{align*}
which proves the last part of Theorem~\ref{thm:signal_learning_main}.
\end{proof}

\section{Conclusion and Future Work}
This paper utilizes a signal-noise decomposition to study the signal learning and noise memorization process in the training of a two-layer CNN. We precisely give the conditions under which the CNN will mainly focus on learning signals or memorizing noises, and reveals a phase transition of the population loss with respect to the sample size, signal strength, noise level, and dimension. Our result theoretically demonstrates that benign overfitting can happen in neural network training. An important future work direction is to study the benign overfitting phenomenon of neural networks in learning other data models. Moreover, it is also important to generalize our analysis to deep convolutional neural networks.

\section*{Acknowledgements}
We would like to thank Spencer Frei for valuable comment and discussion on the earlier version of this paper, and pointing out a related work.

\appendix



\section{Additional Related Work}
There has also been a large number of works studying the optimization and generalization of neural networks. A series of work \citep{ li2017convergence,soltanolkotabi2017learning,du2017convolutional, du2017gradient,zhong2017recovery,zhang2018learning,cao2019tight} studied the parameter recovery problem in two-layer neural networks, where the data are given by a teacher network and the task is to recover the parameters in the teacher network. These works either focus on the noiseless setting, or requires the number of training data points to be larger than the number of parameters in the network, and therefore does not cover the setting where the neural network can overfit the training data. Another line of works \citep{neyshabur2015norm,bartlett2017spectrally,neyshabur2017pac,golowich2017size,arora2018stronger} have studied the generalization gap between the training and test losses of neural networks with uniform convergence based arguments. However, these results are not algorithm-dependent and cannot explain benign overfitting. Some recent works studied the generalization gap based on stability based arguments  \citep{bousquet2002stability,hardt2016train, mou2017generalization, chen2018stability}. A more recent line of works studied the convergence \citep{jacot2018neural,li2018learning,du2018gradient,allen2018convergence,du2018gradientdeep,zou2019gradient} and test error bounds \citep{allen2018learning,arora2019fine,arora2019exact,cao2019generalizationsgd,ji2019polylogarithmic,chen2019much} of over-parameterized networks in the neural tangent kernel regime. However, these works depend on the equivalence between neural network training and kernel methods, which cannot fully explain the success of deep learning. Compared with the works mentioned above, our work has a different focus which is to study the conditions for benign and harmful overfitting.

\section{Preliminary Lemmas}\label{sec: initial}

In this section, we present some pivotal lemmas that give some important properties of the data and the neural network parameters at their random initialization.

\begin{lemma}\label{lemma:numberofdata}
Suppose that $\delta > 0$ and $n \geq 8 \log(4/\delta)$. Then with probability at least $1 - \delta$, 
\begin{align*}
    |\{i \in [n]: y_i = 1\}|,~ |\{i \in [n]:y_i = -1\}| \geq n/4.
\end{align*}
\end{lemma}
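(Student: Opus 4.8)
The plan is to treat $N_+ := |\{i\in[n] : y_i = 1\}| = \sum_{i=1}^n \ind(y_i = 1)$ as a sum of $n$ independent Bernoulli$(1/2)$ random variables, which it is by Step~1 of Definition~\ref{def:data}, and then invoke a standard concentration inequality. Note that $\EE[N_+] = n/2$ and that $N_- := |\{i\in[n] : y_i = -1\}| = n - N_+$, so $\EE[N_-] = n/2$ as well. Thus it suffices to show that $N_+$ concentrates around $n/2$ closely enough that both $N_+ \geq n/4$ and $n - N_+ \geq n/4$ hold with probability at least $1 - \delta$.

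First I would apply Hoeffding's inequality to $N_+$, whose summands take values in $[0,1]$: for any $t > 0$,
\begin{align*}
\Pr\big( |N_+ - n/2| \geq t \big) \leq 2\exp\!\big(-2t^2/n\big).
\end{align*}
Choosing $t = n/4$ gives $\Pr\big( |N_+ - n/2| \geq n/4 \big) \leq 2\exp(-n/8)$. On the complementary event $|N_+ - n/2| < n/4$ we have simultaneously $N_+ > n/4$ and $n - N_+ > n/4$, i.e.\ both $|\{i : y_i = 1\}|$ and $|\{i : y_i = -1\}|$ exceed $n/4$. Hence the conclusion of the lemma fails with probability at most $2\exp(-n/8)$.

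Finally I would verify that the hypothesis $n \geq 8\log(4/\delta)$ makes this bound at most $\delta$: since $8\log(4/\delta) \geq 8\log(2/\delta)$, we get $n \geq 8\log(2/\delta)$, hence $\exp(-n/8) \leq \delta/2$ and $2\exp(-n/8) \leq \delta$, as required.

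There is essentially no substantive obstacle here — the result is a one-shot concentration estimate. The only point requiring a little care is cosmetic: picking a tail bound sharp enough that the specific constant in the stated hypothesis ($n \geq 8\log(4/\delta)$) already suffices. Using Hoeffding with the two-sided inequality (which bakes in the union bound over the two events $N_+ < n/4$ and $n - N_+ < n/4$) achieves this directly, so no multiplicative Chernoff bound or additional union bound is needed.
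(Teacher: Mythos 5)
Your proof is correct and follows essentially the same route as the paper: Hoeffding's inequality applied to $\sum_i \ind\{y_i=1\}$, with the hypothesis $n \geq 8\log(4/\delta)$ used to absorb the deviation into $n/4$. The only (minor) difference is that you observe the event $|N_+ - n/2| < n/4$ already controls both counts simultaneously, whereas the paper handles $N_+$ and $N_-$ separately and applies a union bound; both arguments are valid.
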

\begin{proof}[Proof of Lemma~\ref{lemma:numberofdata}] By Hoeffding's inequality, with probability at least $1 - \delta / 2$, we have
\begin{align*}
    \Bigg| \frac{1}{n}\sum_{i=1}^n \ind\{y_i = 1\}  - \frac{1}{2} \Bigg| \leq \sqrt{\frac{\log(4/\delta)}{2n}}.
\end{align*}
Therefore, as long as $n \geq 8 \log(4/\delta)$, we have
\begin{align*}
    |\{i \in [n]: y_i = 1\}| = \sum_{i=1}^n \ind\{y_i = 1\} \geq \frac{n}{2} - n\cdot  \sqrt{\frac{ \log(4/\delta)}{2n}} \geq \frac{n}{4}.
\end{align*}
This proves the result for $|\{i \in [n]: y_i = 1\}|$. The proof for $|\{i \in [n]: y_i = -1\}|$ is exactly the same, and we can conclude the proof by applying a union bound. 
\end{proof}

The following lemma estimates the norms of the noise vectors $\bxi_i$, $i\in [n]$, and gives an upper bound of their inner products with each other.

\begin{lemma}\label{lemma:data_innerproducts}
Suppose that $\delta > 0$ and $d = \Omega( \log(4n / \delta) ) $. Then with probability at least $1 - \delta$, 
\begin{align*}
    &\sigma_p^2 d / 2\leq \| \bxi_i \|_2^2 \leq 3\sigma_p^2 d / 2,\\
    & |\la \bxi_i, \bxi_{i'} \ra| \leq 2\sigma_p^2 \cdot \sqrt{d \log(4n^2 / \delta)}
\end{align*}
for all $i,i'\in [n]$.
\end{lemma}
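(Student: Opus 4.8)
\textbf{Proof plan for Lemma~\ref{lemma:data_innerproducts}.}

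The plan is to treat the norm bounds and the inner-product bounds separately, since the former is a concentration statement about a single Gaussian quadratic form and the latter controls a bilinear form between two independent Gaussian vectors. First I would recall the precise distribution of each $\bxi_i$: it is Gaussian with covariance $\sigma_p^2 (\Ib - \bmu\bmu^\top \|\bmu\|_2^{-2})$, i.e.\ $\sigma_p^2$ times the orthogonal projector onto the $(d-1)$-dimensional subspace orthogonal to $\bmu$. Hence $\|\bxi_i\|_2^2 / \sigma_p^2$ is distributed as a chi-squared random variable with $d-1$ degrees of freedom. For the norm bound, I would invoke the standard sub-exponential (Bernstein-type) tail for chi-squared variables: for a $\chi^2_{k}$ variable $Z$, $\Pr(|Z - k| \ge t) \le 2\exp(-c\min\{t^2/k, t\})$. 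Taking $t = \Theta(k)$ with the implied constant small enough (e.g.\ $t = (d-1)/2$) gives $\Pr(|\|\bxi_i\|_2^2/\sigma_p^2 - (d-1)| \ge (d-1)/2) \le 2\exp(-c'(d-1))$, and since $d = \Omega(\log(4n/\delta))$ this probability is at most $\delta/(2n)$ for an appropriate absolute constant. Then $\|\bxi_i\|_2^2 \in [(d-1)\sigma_p^2/2,\ 3(d-1)\sigma_p^2/2] \subseteq [\sigma_p^2 d/2,\ 3\sigma_p^2 d/2]$ after absorbing the $-1$ into the constants (using $d$ large). A union bound over $i \in [n]$ costs a factor $n$, consuming $\delta/2$ of the failure probability.

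For the inner-product bound, I would fix $i \ne i'$ and condition on $\bxi_{i'}$. Since $\bxi_i$ is Gaussian with the projected covariance, $\la \bxi_i, \bxi_{i'}\ra$ conditioned on $\bxi_{i'}$ is a centered Gaussian with variance $\sigma_p^2 \| P\bxi_{i'}\|_2^2 \le \sigma_p^2 \|\bxi_{i'}\|_2^2$, where $P = \Ib - \bmu\bmu^\top\|\bmu\|_2^{-2}$. On the event from the first part, $\|\bxi_{i'}\|_2^2 \le 3\sigma_p^2 d/2$, so the conditional variance is at most $(3/2)\sigma_p^4 d$. A standard Gaussian tail bound then gives $\Pr(|\la\bxi_i,\bxi_{i'}\ra| \ge s \mid \bxi_{i'}) \le 2\exp(-s^2/(3\sigma_p^4 d))$; choosing $s = 2\sigma_p^2\sqrt{d\log(4n^2/\delta)}$ makes this at most $2\exp(-\tfrac{4}{3}\log(4n^2/\delta)) \le \delta/(2n^2)$. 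Taking a union bound over the at most $n^2$ ordered pairs $(i,i')$ (the case $i = i'$ being covered by the norm bound) uses the remaining $\delta/2$ of the budget. Combining the two events via a final union bound yields the claim with total failure probability at most $\delta$.

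The main technical point to be careful about — rather than a genuine obstacle — is the conditioning argument for the inner products: one must ensure that the high-probability bound on $\|\bxi_{i'}\|_2$ can be combined cleanly with the conditional Gaussian tail without circular dependence, which is handled by first intersecting with the norm event and only then applying the conditional bound. A second minor point is bookkeeping the absolute constants so that the chi-squared deviation probability is genuinely below $\delta/(2n)$ under the hypothesis $d = \Omega(\log(4n/\delta))$; this is where the unspecified constant in the $\Omega(\cdot)$ is chosen. Everything else is a routine application of Gaussian and chi-squared concentration plus union bounds.
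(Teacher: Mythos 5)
Your proposal is correct. The norm bound is handled exactly as in the paper: the paper invokes Bernstein's inequality to get $\big|\|\bxi_i\|_2^2 - \sigma_p^2 d\big| = O(\sigma_p^2\sqrt{d\log(4n/\delta)})$ and then uses $d = \Omega(\log(4n/\delta))$, which is the same chi-squared/sub-exponential concentration you describe (you are slightly more careful in tracking the $d-1$ degrees of freedom coming from the projected covariance, which the paper silently absorbs). The only real difference is the inner-product bound: the paper applies Bernstein's inequality directly to $\la\bxi_i,\bxi_{i'}\ra$ viewed as a mean-zero sub-exponential sum of products of independent sub-Gaussian coordinates, whereas you condition on $\bxi_{i'}$, observe that $\la\bxi_i,\bxi_{i'}\ra$ is then a centered Gaussian with variance $\sigma_p^2\|\bxi_{i'}\|_2^2$, and invoke a pure Gaussian tail after intersecting with the norm event. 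Both routes give the stated $2\sigma_p^2\sqrt{d\log(4n^2/\delta)}$ bound with the right failure probability; your conditioning argument trades the sub-exponential machinery for a bit of extra event bookkeeping (which you correctly flag and resolve by intersecting with the norm event before applying the conditional tail), and it has the minor advantage of making the dependence on $\|\bxi_{i'}\|_2$ explicit rather than hidden in Bernstein's variance proxy. No gaps.
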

\begin{proof}[Proof of Lemma~\ref{lemma:data_innerproducts}] By Bernstein's inequality, with probability at least $1 - \delta / (2n)$ we have
\begin{align*}
    \big| \| \bxi_i \|_2^2 - \sigma_p^2 d \big| = O(\sigma_p^2 \cdot \sqrt{d \log(4n / \delta)}).
\end{align*}
Therefore, as long as $d = \Omega( \log(4n / \delta) )$, we have 
\begin{align*}
     \sigma_p^2 d /2  \leq \| \bxi_i \|_2^2 \leq 3\sigma_p^2 d / 2.
\end{align*}
Moreover, clearly $\la \bxi_i, \bxi_{i'} \ra$ has mean zero. 
For any $i,i'$ with $i\neq i'$, by Bernstein's inequality, with probability at least $1 - \delta / (2n^2)$ we have
\begin{align*}
    | \la \bxi_i, \bxi_{i'} \ra| \leq 2\sigma_p^2 \cdot \sqrt{d \log(4n^2 / \delta)}.
\end{align*}
Applying a union bound completes the proof.
\end{proof}

The following lemma studies the inner product between a randomly initialized CNN convolutional filter $\wb_{j,r}^{(0)}$, $j\in \{+1, -1\}$ and $r\in [m]$ and the signal/noise vectors in the training data. The calculations characterize how the neural network at initialization randomly captures signal and noise information.

\begin{lemma}\label{lemma:initialization_norms} Suppose that $d \geq \Omega(\log(mn/\delta))$, $ m = \Omega(\log(1 / \delta))$. Then with probability at least $1 - \delta$, 
\begin{align*}
    &|\la \wb_{j,r}^{(0)}, \bmu \ra | \leq \sqrt{2 \log(8m/\delta)} \cdot \sigma_0 \| \bmu \|_2,\\
    &| \la \wb_{j,r}^{(0)}, \bxi_i \ra | \leq 2\sqrt{ \log(8mn/\delta)}\cdot \sigma_0 \sigma_p \sqrt{d} 
\end{align*}
for all $r\in [m]$,  $j\in \{\pm 1\}$ and $i\in [n]$. Moreover, 
\begin{align*}
    &\sigma_0 \| \bmu \|_2 / 2 \leq \max_{r\in[m]} j\cdot \la \wb_{j,r}^{(0)}, \bmu \ra \leq \sqrt{2 \log(8m/\delta)} \cdot \sigma_0 \| \bmu \|_2,\\
    &\sigma_0 \sigma_p \sqrt{d} / 4 \leq \max_{r\in[m]} j\cdot \la \wb_{j,r}^{(0)}, \bxi_i \ra \leq 2\sqrt{ \log(8mn/\delta)} \cdot \sigma_0 \sigma_p \sqrt{d}
\end{align*}
for all $j\in \{\pm 1\}$ and $i\in [n]$.
\end{lemma}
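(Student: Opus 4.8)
\textbf{Proof proposal for Lemma~\ref{lemma:initialization_norms}.}

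The plan is to exploit the fact that each $\wb_{j,r}^{(0)}$ has i.i.d.\ $N(0,\sigma_0^2)$ entries, so that for any fixed vector $\vb$, the inner product $\la \wb_{j,r}^{(0)}, \vb \ra$ is a one-dimensional Gaussian with mean $0$ and variance $\sigma_0^2 \|\vb\|_2^2$. First I would apply this with $\vb = \bmu$ and with $\vb = \bxi_i$ for each $i$; crucially, by conditioning on the data $S$ (which fixes $\bmu$ and all $\bxi_i$), the randomness is entirely in $\Wb^{(0)}$ and is independent of $S$, so $\|\bxi_i\|_2$ can be treated as a constant at this stage. Standard Gaussian tail bounds give $\Pr(|Z| \ge t\sqrt{\mathrm{Var}}) \le 2\exp(-t^2/2)$ for a centered Gaussian $Z$, which yields the upper bounds: choosing $t = \sqrt{2\log(8m/\delta)}$ for the $\bmu$ inner products and $t = \sqrt{2\log(8mn/\delta)}$ for the $\bxi_i$ inner products, then taking a union bound over the $2m$ choices of $(j,r)$ and $2mn$ choices of $(j,r,i)$ respectively, controls all these terms with failure probability $O(\delta)$. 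To replace $\|\bxi_i\|_2$ by $\sigma_p\sqrt d$ in the final statement, I would invoke Lemma~\ref{lemma:data_innerproducts} (on a further $O(\delta)$-probability event) so that $\|\bxi_i\|_2 \le \sqrt{3/2}\,\sigma_p\sqrt d \le 2\sigma_p\sqrt d$, absorbing the constant.

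For the lower bounds on the maxima $\max_{r\in[m]} j\cdot \la\wb_{j,r}^{(0)},\bmu\ra$ and $\max_{r\in[m]} j\cdot\la\wb_{j,r}^{(0)},\bxi_i\ra$, I would use anti-concentration: for a centered Gaussian $Z$ with standard deviation $s$, $\Pr(Z > s/2) \ge c$ for some absolute constant $c > 0$ (in fact $c$ can be taken around $0.3$). Since the $m$ filters $\wb_{j,r}^{(0)}$, $r\in[m]$, are independent, the events $\{j\cdot\la\wb_{j,r}^{(0)},\bmu\ra > \sigma_0\|\bmu\|_2/2\}$ are independent across $r$, so the probability that \emph{none} of them holds is at most $(1-c)^m$. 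Under the assumption $m = \Omega(\log(1/\delta))$, this is at most $\delta/(\text{const})$; a union bound over $j\in\{\pm1\}$ handles the signal case. For the noise case one must be slightly more careful because $\bxi_i$ is itself random: I would condition on $S$, note that conditionally the same independence-across-$r$ argument applies verbatim with $s = \sigma_0\|\bxi_i\|_2$, get the lower bound $\sigma_0\|\bxi_i\|_2/2$ with conditional failure probability $(1-c)^m$, then union bound over $j$ and over the $n$ indices $i$ (so we need $m = \Omega(\log(n/\delta))$, which is subsumed), and finally use $\|\bxi_i\|_2 \ge \sigma_p\sqrt{d/2} \ge \sigma_p\sqrt d/2$ from Lemma~\ref{lemma:data_innerproducts} to convert $\sigma_0\|\bxi_i\|_2/2$ into $\sigma_0\sigma_p\sqrt d/4$.

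Finally I would collect all the exceptional events — the Gaussian upper-tail events, the Gaussian anti-concentration events, and the event from Lemma~\ref{lemma:data_innerproducts} — and apply a single union bound, adjusting the absolute constants hidden in $\Omega(\cdot)$ and rescaling $\delta$ so that the total failure probability is at most $\delta$. I do not expect any serious obstacle here; the only point requiring mild care is the conditioning argument in the noise case, to make sure the independence of $\{\wb_{j,r}^{(0)}\}_r$ across $r$ is used correctly while $\bxi_i$ is frozen, and to track that $d$ and $m$ are large enough (as guaranteed by the hypotheses $d \ge \Omega(\log(mn/\delta))$ and $m \ge \Omega(\log(1/\delta))$) for both the concentration and anti-concentration steps to go through with room to spare for the union bounds.
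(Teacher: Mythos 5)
Your proposal is correct and follows essentially the same route as the paper's proof: Gaussian tail bounds plus a union bound for the upper estimates, anti-concentration combined with independence of the $m$ filters for the lower bounds on the maxima, and Lemma~\ref{lemma:data_innerproducts} to convert $\|\bxi_i\|_2$ into $\sigma_p\sqrt{d}$ factors. The extra care you take with conditioning on the data in the noise case is exactly the right (implicit) step in the paper's argument as well, so there is nothing to add.
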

\begin{proof}[Proof of Lemma~\ref{lemma:initialization_norms}]
It is clear that for each $r\in [m]$, $j\cdot \la \wb_{j,r}^{(0)}, \bmu \ra$ is a Gaussian random variable with mean zero and variance $\sigma_0^2 \| \bmu \|_2^2$. Therefore, by Gaussian tail bound and union bound, with probability at least $1 - \delta/4$, 
\begin{align*}
    j\cdot \la \wb_{j,r}^{(0)}, \bmu \ra \leq |\la \wb_{j,r}^{(0)}, \bmu \ra| \leq \sqrt{2\log(8m/\delta)} \cdot \sigma_0 \| \bmu \|_2.
\end{align*}
Moreover, $\PP( \sigma_0 \| \bmu \|_2 / 2 > j\cdot \la \wb_{j,r}^{(0)}, \bmu \ra )$ is an absolute constant, and therefore by the condition on $m$, we have
\begin{align*}
    \PP\big( \sigma_0 \| \bmu \|_2 / 2 \leq \max_{r\in[m]} j\cdot \la \wb_{j,r}^{(0)}, \bmu \ra ) &= 1 - \PP( \sigma_0 \| \bmu \|_2 / 2 > \max_{r\in[m]} j\cdot \la \wb_{j,r}^{(0)}, \bmu \ra \big) \\
    &= 1 - \PP\big( \sigma_0 \| \bmu \|_2 / 2 > j\cdot \la \wb_{j,r}^{(0)}, \bmu \ra \big)^{2m} \\
    &\geq 1 - \delta / 4.
\end{align*}
By Lemma~\ref{lemma:data_innerproducts}, with probability at least $1 - \delta / 4$, $ \sigma_p \sqrt{d} / \sqrt{2} \leq \| \bxi_i \|_2 \leq \sqrt{3/2}\cdot \sigma_p \sqrt{d}$ for all $i\in [n]$. Therefore, the result for $\la \wb_{j,r}^{(0)}, \bxi_i\ra$ follows the same proof as $j\cdot \la \wb_{j,r}^{(0)}, \bmu \ra$. 
\end{proof}

\section{Signal-noise Decomposition Analysis}\label{section:decompositionproof}
In this section, we establish a series of results on the signal-noise decomposition. These results are based on the conclusions in Section~\ref{sec: initial}, which hold with high probability. Denote by  $\cE_{\mathrm{prelim}}$ the event that all the results in Section~\ref{sec: initial} hold. Then for simplicity and clarity, we state all the results in this and the following sections conditional on  $\cE_{\mathrm{prelim}}$. 
\begin{lemma}[Restatement of Lemma~\ref{lemma:coefficient_iterative}]\label{lemma:coefficient_iterative_proof}
The coefficients $\gamma_{j,r}^{(t)},\zeta_{j,r,i}^{(t)},\omega_{j,r,i}^{(t)}$ defined in Definition~\ref{def:w_decomposition} satisfy the following iterative equations:
\begin{align*}
    &\gamma_{j,r}^{(0)},\zeta_{j,r,i}^{(0)},\omega_{j,r,i}^{(0)} = 0,\\
    &\gamma_{j,r}^{(t+1)} = \gamma_{j,r}^{(t)} - \frac{\eta}{nm} \cdot \sum_{i=1}^n \ell_i'^{(t)} \cdot \sigma'(\la\wb_{j,r}^{(t)}, y_{i} \cdot \bmu\ra) \cdot \| \bmu \|_2^2, \\
    &\zeta_{j,r,i}^{(t+1)} = \zeta_{j,r,i}^{(t)} - \frac{\eta}{nm} \cdot \ell_i'^{(t)}\cdot \sigma'(\la\wb_{j,r}^{(t)}, \bxi_{i}\ra) \cdot \| \bxi_i \|_2^2 \cdot \ind(y_{i} = j), \\
    &\omega_{j,r,i}^{(t+1)} = \omega_{j,r,i}^{(t)} + \frac{\eta}{nm} \cdot \ell_i'^{(t)}\cdot \sigma'(\la\wb_{j,r}^{(t)}, \bxi_{i}\ra) \cdot \| \bxi_i \|_2^2 \cdot \ind(y_{i} = -j)
\end{align*}
for all $r\in [m]$,  $j\in \{\pm 1\}$ and $i\in [n]$.
\end{lemma}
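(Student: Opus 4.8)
The plan is to prove the lemma by induction on the iteration count $t$, unfolding the gradient descent update \eqref{eq:gdupdate} and matching coefficients against the signal-noise decomposition \eqref{eq:w_decomposition}. Before the induction, I would first justify that the coefficients are well-defined and unique: conditional on $\cE_{\mathrm{prelim}}$, the vectors $\bmu,\bxi_1,\dots,\bxi_n$ are linearly independent. Indeed $\bmu\perp\bxi_i$ for every $i$ by the construction of $\cD$ in Definition~\ref{def:data}, and Lemma~\ref{lemma:data_innerproducts} shows that the Gram matrix of $\bxi_1,\dots,\bxi_n$ is strictly diagonally dominant under Condition~\ref{condition:d_sigma0_eta} (since $\|\bxi_i\|_2^2\ge \sigma_p^2 d/2$ while $\sum_{i'\ne i}|\la\bxi_i,\bxi_{i'}\ra|\le 2(n-1)\sigma_p^2\sqrt{d\log(4n^2/\delta)}$, which is $<\sigma_p^2 d/2$ under Condition~\ref{condition:d_sigma0_eta}), hence invertible. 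Therefore whenever $\wb_{j,r}^{(t)}-\wb_{j,r}^{(0)}$ lies in $\mathrm{span}\{\bmu,\bxi_1,\dots,\bxi_n\}$ its coefficients $\gamma_{j,r}^{(t)}$ and $\rho_{j,r,i}^{(t)}$ in \eqref{eq:w_decomposition} are uniquely pinned down, and it then suffices to check that, so defined, they obey the claimed recursions.

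For the base case $t=0$ the decomposition reads $\wb_{j,r}^{(0)}=\wb_{j,r}^{(0)}$, so by uniqueness $\gamma_{j,r}^{(0)}=\zeta_{j,r,i}^{(0)}=\omega_{j,r,i}^{(0)}=0$. For the inductive step, assuming the decomposition holds at step $t$, I would substitute it into the right-hand side of \eqref{eq:gdupdate}; the two gradient terms there are already expressed in the basis $\{\bmu\}\cup\{\bxi_i\}$, so by linear independence one may read off the coefficient of $j\|\bmu\|_2^{-2}\bmu$ and of each $\|\bxi_i\|_2^{-2}\bxi_i$ separately. The $\bmu$-component gives $\gamma_{j,r}^{(t+1)}=\gamma_{j,r}^{(t)}-\frac{\eta}{nm}\sum_i\ell_i'^{(t)}\sigma'(\la\wb_{j,r}^{(t)},y_i\bmu\ra)\|\bmu\|_2^2$, which is \eqref{eq:update_gamma1}; the increment is $\ge 0$ because $\ell_i'^{(t)}<0$ (as $\ell'(z)=-1/(1+\exp(z))<0$) and $\sigma'\ge 0$, consistent with $\gamma_{j,r}^{(t)}\ge 0$. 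For each $i$, the coefficient of $\|\bxi_i\|_2^{-2}\bxi_i$ receives the increment $-\frac{\eta}{nm}\ell_i'^{(t)}\sigma'(\la\wb_{j,r}^{(t)},\bxi_i\ra)jy_i\|\bxi_i\|_2^2$, so that $\rho_{j,r,i}^{(t+1)}=\rho_{j,r,i}^{(t)}-\frac{\eta}{nm}\ell_i'^{(t)}\sigma'(\la\wb_{j,r}^{(t)},\bxi_i\ra)jy_i\|\bxi_i\|_2^2$.

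The remaining step is the sign bookkeeping that splits $\rho$ into $\zeta$ and $\omega$. Since $j,y_i\in\{\pm1\}$ we have $jy_i=1$ exactly when $y_i=j$ and $jy_i=-1$ exactly when $y_i=-j$; together with $\ell_i'^{(t)}<0$ and $\sigma'\ge 0$ this means the per-step increment of $\rho_{j,r,i}^{(t)}$ is $\ge 0$ on $\{y_i=j\}$ and $\le 0$ on $\{y_i=-j\}$. As $\rho_{j,r,i}^{(0)}=0$, for a fixed triple $(j,r,i)$ the sequence $\rho_{j,r,i}^{(t)}$ is nondecreasing and nonnegative if $y_i=j$, and nonincreasing and nonpositive if $y_i=-j$; hence $\zeta_{j,r,i}^{(t)}=\rho_{j,r,i}^{(t)}\ind(\rho_{j,r,i}^{(t)}\ge0)$ picks out the first case and $\omega_{j,r,i}^{(t)}=\rho_{j,r,i}^{(t)}\ind(\rho_{j,r,i}^{(t)}\le0)$ the second, which turns the update of $\rho_{j,r,i}^{(t)}$ into \eqref{eq:update_zeta1} and \eqref{eq:update_omega1} with the indicators $\ind(y_i=j)$ and $\ind(y_i=-j)$ appearing automatically. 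I do not expect a genuine obstacle here: the only delicate points are (i) the uniqueness claim, which rests on the high-dimensional near-orthogonality of the noise vectors from Lemma~\ref{lemma:data_innerproducts}, and (ii) keeping the sign conventions straight so the positive and negative parts of $\rho_{j,r,i}^{(t)}$ are routed to $\zeta$ and $\omega$ correctly; (ii) is conceptually trivial but is where a careless write-up would slip.
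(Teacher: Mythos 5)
Your proposal is correct and follows essentially the same route as the paper: establish uniqueness of the decomposition via linear independence of $\bmu,\bxi_1,\dots,\bxi_n$, unfold the gradient update \eqref{eq:gdupdate} to read off the recursions for $\gamma_{j,r}^{(t)}$ and $\rho_{j,r,i}^{(t)}$, and use $\ell_i'^{(t)}<0$ together with the sign of $jy_i$ to route the monotone increments into $\zeta$ and $\omega$. The only cosmetic difference is that you justify linear independence by diagonal dominance of the noise Gram matrix, whereas the paper simply notes it holds with probability $1$ for Gaussian vectors; both are valid.
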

\begin{proof}[Proof of Lemma~\ref{lemma:coefficient_iterative_proof}]
By our data model in Definition~\ref{def:data} and Gaussian initialization of the CNN weights, it is clear that with probability $1$, the vectors are linearly independent. Therefore, the decomposition \eqref{eq:w_decomposition} is unique. Now consider $\tilde\gamma_{j,r}^{(0)},\tilde\rho_{j,r,i}^{(0)} = 0$ and 
\begin{align*}
    &\tilde\gamma_{j,r}^{(t+1)} = \tilde\gamma_{j,r}^{(t)} - \frac{\eta}{nm} \cdot \sum_{i=1}^n \ell_i'^{(t)} \cdot \sigma'(\la\wb_{j,r}^{(t)}, y_{i} \cdot \bmu\ra) \cdot \| \bmu \|_2^2,\\
    &\tilde\rho_{j,r,i}^{(t+1)} = \tilde\rho_{j,r,i}^{(t)} - \frac{\eta}{nm} \cdot \ell_i'^{(t)}\cdot \sigma'(\la\wb_{j,r}^{(t)}, \bxi_{i}\ra) \cdot \| \bxi_i \|_2^2 \cdot jy_{i},\\
\end{align*}
It is then easy to check by \eqref{eq:gdupdate} that 
\begin{align*}
    \wb_{j,r}^{(t)} = \wb_{j,r}^{(0)} + j \cdot \tilde\gamma_{j,r}^{(t)} \cdot \| \bmu \|_2^{-2} \cdot \bmu + \sum_{ i = 1}^n \tilde\rho_{j,r,i}^{(t)} \| \bxi_i \|_2^{-2} \cdot \bxi_{i}.
\end{align*}
Hence by the uniqueness of the decomposition we have $\gamma_{j,r}^{(t)} = \tilde\gamma_{j,r}^{(t)}$ and $\rho_{j,r,i}^{(t) } = \tilde\rho_{j,r,i}^{(t)}$. Then we have that 
\begin{align*}
\rho_{j,r,i}^{(t)} = - \sum_{s=0}^{t-1}\frac{\eta}{nm} \cdot \ell_i'^{(s)}\cdot \sigma'(\la\wb_{j,r}^{(s)}, \bxi_{i}\ra) \cdot \| \bxi_i \|_2^2 \cdot jy_{i}    
\end{align*}
Moreover, note that $\ell_i'^{(t)} < 0$ by the definition of the cross-entropy loss. Therefore,
\begin{align}
&\zeta_{j,r,i}^{(t)} = - \sum_{s=0}^{t-1}\frac{\eta}{nm} \cdot \ell_i'^{(s)}\cdot \sigma'(\la\wb_{j,r}^{(s)}, \bxi_{i}\ra) \cdot \| \bxi_i \|_2^2 \cdot \ind(y_{i} = j),\label{eq:iterate1}\\
&\omega_{j,r,i}^{(t)} = - \sum_{s=0}^{t-1}\frac{\eta}{nm} \cdot \ell_i'^{(s)}\cdot \sigma'(\la\wb_{j,r}^{(s)}, \bxi_{i}\ra) \cdot \| \bxi_i \|_2^2 \cdot \ind(y_{i} = -j).\label{eq:iterate2}
\end{align}
Writing out the iterative versions of \eqref{eq:iterate1} and \eqref{eq:iterate2} completes the proof.
\end{proof}

We can futher plug the signal-noise decomposition \eqref{eq:w_decomposition} into the iterative formulas in Lemma~\ref{lemma:coefficient_iterative_proof}. By the second equation in Lemma~\ref{lemma:coefficient_iterative_proof}, we have
\begin{align}
    \gamma_{j,r}^{(t+1)} &= \gamma_{j,r}^{(t)} - \frac{\eta}{nm} \cdot \sum_{i=1}^n \ell_i'^{(t)} \cdot \sigma'( y_{i} \cdot  \la\wb_{j,r}^{(0)}, \bmu \ra +  y_{i} \cdot j \cdot \gamma_{j,r}^{(t)} )\cdot \|\bmu\|_{2}^{2}, \label{eq:update_gamma2}
\end{align}
Moreover, by the third equation in Lemma~\ref{lemma:coefficient_iterative_proof}, we have
\begin{align}
    \zeta_{j,r,i}^{(t+1)} &= \zeta_{j,r,i}^{(t)} - \frac{\eta}{nm} \cdot \ell_i'^{(t)} \sigma'\Bigg(\la\wb_{j,r}^{(0)}, \bxi_{i}\ra + \sum_{ i'= 1 }^n \zeta_{j,r,i'}^{(t)} \frac{\la \bxi_{i'}, \bxi_i \ra}{\| \bxi_{i'} \|_2^2} + \sum_{ i' = 1}^n \omega_{j,r,i'}^{(t)} \frac{\la \bxi_{i'}, \bxi_i \ra}{\| \bxi_{i'} \|_2^2} \Bigg)\cdot \| \bxi_{i} \|_2^2  \label{eq:update_zeta2}
\end{align}
if $j = y_i$, and $\zeta_{j,r,i}^{(t)} = 0$ for all $t\geq 0$ if  $j = - y_i$. Similarly, by the last equation in Lemma~\ref{lemma:coefficient_iterative_proof}, we have 
\begin{align}
    \omega_{j,r,i}^{(t+1)} &= \omega_{j,r,i}^{(t)} + \frac{\eta}{nm} \cdot \ell_i'^{(t)} \sigma'\Bigg(\la\wb_{j,r}^{(0)}, \bxi_{i}\ra + \sum_{ i'= 1 }^n \zeta_{j,r,i'}^{(t)} \frac{\la \bxi_{i'}, \bxi_i \ra}{\| \bxi_{i'} \|_2^2} + \sum_{ i' = 1}^n \omega_{j,r,i'}^{(t)} \frac{\la \bxi_{i'}, \bxi_i \ra}{\| \bxi_{i'} \|_2^2} \Bigg) \cdot \| \bxi_{i} \|_2^2 \label{eq:update_omega2}
\end{align}
if $j = - y_i$, and $\omega_{j,r,i}^{(t)} = 0$ for all $t\geq 0$ if  $j = y_i$.

We will now show that the parameter of the signal-noise decomposition will stay a reasonable scale during a long time of training. Let us consider the learning period $0 \leq t \leq T^{*}$, where $T^{*} = \eta^{-1}\text{poly}(\epsilon^{-1}, \|\bmu\|_{2}^{-1}, d^{-1}\sigma_{p}^{-2},\sigma_{0}^{-1}, n, m, d)$ is the maximum admissible iterations. Note that we can consider any polynomial training time $T^*$.
Denote $\alpha = 4\log(T^{*})$. Here we list the exact conditions for $\eta, \sigma_{0}, d$ required by the proofs in this section, which are part of Condition~\ref{condition:d_sigma0_eta}:
\begin{align}
&\eta = O\Big(\min\{nm/(q\sigma_{p}^{2}d), nm/(q2^{q+2}\alpha^{q-2} \sigma_{p}^{2}d),  nm/(q2^{q+2}\alpha^{q-2}\|\bmu\|_{2}^{2})\}\Big), \label{eq: verify}\\
&\sigma_{0} \leq [16\sqrt{ \log(8mn/\delta)}]^{-1}\min\{\|\bmu\|_{2}^{-1}, (\sigma_{p}\sqrt{d})^{-1}\}, \label{eq: verifyy}\\
&d \geq 1024\log(4n^{2}/\delta)\alpha^{2}n^{2}. \label{eq: verifyyy}
\end{align}
Denote $\beta = 2 \max_{i,j,r}\{|\la \wb_{j,r}^{(0)}, \bmu\ra|,|\la \wb_{j,r}^{(0)}, \bxi_{i}\ra|\}$. By Lemma~\ref{lemma:initialization_norms}, with probability at least $1- \delta$, we can upper bound $\beta$ by $4\sqrt{ \log(8mn/\delta)} \cdot \sigma_0 \cdot \max\{\|\bmu\|_{2}, \sigma_p \sqrt{d} \}$. Then, by \eqref{eq: verifyy} and \eqref{eq: verifyyy}, it is straightforward to verify the following inequality: 
\begin{align}
4\max\bigg\{\beta, 8n\sqrt{\frac{\log(4n^{2}/\delta)}{d}}\alpha\bigg\} \leq 1. \label{eq:verify0}
\end{align}
Suppose the conditions listed in \eqref{eq: verify}, \eqref{eq: verifyy} and \eqref{eq: verifyyy} hold, we claim that for $0 \leq t \leq T^*$ the following property holds.

\begin{proposition}[Restatement of Proposition~\ref{Prop:main1}]\label{Prop:noise} 
Under Condition~\ref{condition:d_sigma0_eta}, for $0 \leq t\leq T^*$, we have that
\begin{align}
 &0\leq \gamma_{j,r}^{(t)}, \zeta_{j,r,i}^{(t)} \leq \alpha,\label{eq:gu0001}\\
&0\geq \omega_{j,r,i}^{(t)} \geq -\beta - 16n\sqrt{\frac{\log(4n^{2}/\delta)}{d}}\alpha \geq -\alpha\label{eq:gu0002}.
\end{align}
for all $r\in [m]$,  $j\in \{\pm 1\}$ and $i\in [n]$.
\end{proposition}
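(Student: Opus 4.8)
The plan is to prove Proposition~\ref{Prop:noise} by strong induction on the iteration count $t$, establishing \eqref{eq:gu0001} and \eqref{eq:gu0002} simultaneously for all $j,r,i$. The base case $t=0$ is immediate from \eqref{eq:update_initial} since all coefficients vanish at initialization. For the inductive step, I would assume that \eqref{eq:gu0001} and \eqref{eq:gu0002} hold for all $s \leq t$ and then derive them for $t+1$. The lower bounds $\gamma_{j,r}^{(t)},\zeta_{j,r,i}^{(t)} \geq 0$ and $\omega_{j,r,i}^{(t)} \leq 0$ follow directly from the iterative formulas \eqref{eq:update_gamma1}--\eqref{eq:update_omega1} together with the sign facts $\ell_i'^{(t)} < 0$ and $\sigma'(\cdot) \geq 0$ (the latter because $\sigma = \mathrm{ReLU}^q$ has a nonnegative derivative), exactly as noted in the remark after Lemma~\ref{lemma:coefficient_iterative}: $\gamma_{j,r}^{(t)}$ and $\zeta_{j,r,i}^{(t)}$ are monotonically nondecreasing from $0$, and $\omega_{j,r,i}^{(t)}$ is monotonically nonincreasing from $0$.

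The substantive work is the upper bound on $\gamma_{j,r}^{(t)},\zeta_{j,r,i}^{(t)}$ and the lower bound on $\omega_{j,r,i}^{(t)}$. For the $\omega$ lower bound I would iterate \eqref{eq:update_omega2}: using $|\ell_i'^{(t)}| \leq 1$ and $\eta \|\bxi_i\|_2^2/(nm) \leq \tilde O(1)$ from \eqref{eq: verify} and Lemma~\ref{lemma:data_innerproducts}, I need to control $\sigma'$ evaluated at $\langle \wb_{j,r}^{(0)},\bxi_i\rangle + \sum_{i'} \rho_{j,r,i'}^{(t)} \langle \bxi_{i'},\bxi_i\rangle/\|\bxi_{i'}\|_2^2$; by the induction hypothesis and Lemma~\ref{lemma:data_innerproducts} the cross terms contribute at most $n \cdot \alpha \cdot 2\sigma_p^2\sqrt{d\log(4n^2/\delta)}/(\sigma_p^2 d/2) = O(n\alpha\sqrt{\log(4n^2/\delta)/d})$, which by \eqref{eq: verifyyy} is small, and the diagonal/initialization term is bounded via $\beta$. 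Summing the per-step decrements and bounding the number of "active" steps (those where the argument of $\sigma'$ is positive) gives a telescoping bound; the key quantitative input is that each decrement is tiny and the argument inside $\sigma'$ stays $O(1)$, yielding the stated $-\beta - 16n\sqrt{\log(4n^2/\delta)/d}\,\alpha$, which is $\geq -\alpha$ by \eqref{eq:verify0}.

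For the upper bounds on $\gamma_{j,r}^{(t)}$ and $\zeta_{j,r,i}^{(t)}$, which I expect to be the main obstacle, I would argue by contradiction on the first time a coefficient exceeds $\alpha$. Suppose $t+1$ is the first iteration where, say, $\zeta_{j,r,i}^{(t+1)} > \alpha = 4\log(T^*)$. I would show that once $\zeta_{j,r,i}^{(s)}$ (hence $\langle \wb_{j,r}^{(s)},\bxi_i\rangle \approx \zeta_{j,r,i}^{(s)}$ up to the small perturbation controlled as above) grows moderately large, the factor $\sigma'(\langle \wb_{j,r}^{(s)},\bxi_i\rangle) \approx q (\zeta_{j,r,i}^{(s)})^{q-1}$ combined with the loss derivative $|\ell_i'^{(s)}| = |\ell'[y_i f(\Wb^{(s)},\xb_i)]| \leq \exp(-y_i f(\Wb^{(s)},\xb_i))$ and the lower bound $f(\Wb^{(s)},\xb_i) \gtrsim \frac{1}{m}(\zeta_{j,r,i}^{(s)})^q$ produces an update $\zeta_{j,r,i}^{(s+1)} - \zeta_{j,r,i}^{(s)} \lesssim \frac{\eta \sigma_p^2 d}{nm} q (\zeta_{j,r,i}^{(s)})^{q-1}\exp(-C(\zeta_{j,r,i}^{(s)})^q/m)$; this self-limiting exponential factor forces $\zeta_{j,r,i}^{(s)}$ to stay below a logarithmic threshold, and a careful accounting (e.g., splitting into the regime $\zeta \leq \alpha/2$ where the number of steps times the maximal increment is $\leq \alpha/2$ by the choice of $T^*$ and \eqref{eq: verify}, and the regime $\zeta > \alpha/2$ where the exponential damping makes the total further increase negligible) yields $\zeta_{j,r,i}^{(t+1)} \leq \alpha$, contradicting the assumption. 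The analogous argument with $\langle \wb_{j,r}^{(s)},\bmu\rangle$, $\|\bmu\|_2^2$ in place of the noise quantities handles $\gamma_{j,r}^{(s)}$. The delicate points are (i) propagating the near-equality $\zeta_{j,r,i}^{(t)} \approx \langle \wb_{j,r}^{(t)},\bxi_i\rangle$ using the induction hypothesis to keep the cross-term error under $1$ via \eqref{eq:verify0}, and (ii) getting a clean lower bound on $f(\Wb^{(s)},\xb_i)$ in terms of the single largest coefficient so that the exponential damping in $\ell_i'^{(s)}$ can be invoked — both of which rely crucially on the homogeneity of $\sigma$ and the scale separations guaranteed by Condition~\ref{condition:d_sigma0_eta}.
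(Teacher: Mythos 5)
Your proposal follows essentially the same route as the paper's proof: induction on $t$, monotonicity from \eqref{eq:update_gamma1}--\eqref{eq:update_omega1} for the sign constraints, the self-closing $\sigma'$ gate for the $\omega$ lower bound (once $\omega_{j,r,i}^{(t)}$ is negative enough, $\la \wb_{j,r}^{(t)},\bxi_i\ra\le 0$ and the coefficient freezes), and the exponential damping of $\ell_i'^{(t)}$ once a coefficient crosses a logarithmic threshold for the upper bounds on $\gamma_{j,r}^{(t)},\zeta_{j,r,i}^{(t)}$. One bookkeeping caveat: in the sub-threshold regime you cannot bound the total growth by ``number of steps times maximal increment'' --- a single increment can already be of order $\alpha/4$ under \eqref{eq: verify}, so that product over $T^*$ steps is far larger than $\alpha/2$; the paper instead lets $t_{j,r,i}$ be the \emph{last} time with $\zeta_{j,r,i}^{(t)}\le 0.5\alpha$ (so the value there is $\le 0.5\alpha$ by definition, with no summation needed), adds the single transition step $I_1\le 0.25\alpha$ via the step-size condition, and only then sums the exponentially damped tail $I_2\le 0.25\alpha$. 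With that correction your argument matches the paper's.
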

We will use induction to prove Proposition~\ref{Prop:noise}. 
We first introduce several technical lemmas that will be used for the proof of Proposition~\ref{Prop:noise}.
\begin{lemma}\label{lm: mubound}
For any $t\geq 0$, it holds that $\la \wb_{j,r}^{(t)} - \wb_{j,r}^{(0)}, \bmu \ra =  j\cdot\gamma_{j,r}^{(t)}$ for all $r\in [m]$,  $j\in \{\pm 1\}$.
\end{lemma}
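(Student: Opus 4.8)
The plan is to simply unfold the signal-noise decomposition of Definition~\ref{def:w_decomposition} and take an inner product with $\bmu$, using the fact that the noise vectors are orthogonal to the signal direction by construction. This is essentially the justification of the normalization factors mentioned right after \eqref{eq:w_decomposition}.

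First, by \eqref{eq:w_decomposition} we have
\begin{align*}
\wb_{j,r}^{(t)} - \wb_{j,r}^{(0)} = j \cdot \gamma_{j,r}^{(t)} \cdot \| \bmu \|_2^{-2} \cdot \bmu + \sum_{i=1}^n \big(\zeta_{j,r,i}^{(t)} + \omega_{j,r,i}^{(t)}\big)\cdot \| \bxi_i \|_2^{-2} \cdot \bxi_i,
\end{align*}
so that, taking the inner product with $\bmu$,
\begin{align*}
\la \wb_{j,r}^{(t)} - \wb_{j,r}^{(0)}, \bmu \ra = j \cdot \gamma_{j,r}^{(t)} \cdot \| \bmu \|_2^{-2} \cdot \| \bmu \|_2^2 + \sum_{i=1}^n \big(\zeta_{j,r,i}^{(t)} + \omega_{j,r,i}^{(t)}\big)\cdot \| \bxi_i \|_2^{-2} \cdot \la \bxi_i, \bmu \ra.
\end{align*}
Next I would invoke the data model in Definition~\ref{def:data}: since each $\bxi_i$ is drawn from $N(\mathbf{0}, \sigma_p^2 (\Ib - \bmu\bmu^\top \| \bmu \|_2^{-2}))$, its covariance annihilates $\bmu$, hence $\la \bxi_i, \bmu \ra = 0$ almost surely for every $i \in [n]$. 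Therefore the entire sum vanishes and the identity $\la \wb_{j,r}^{(t)} - \wb_{j,r}^{(0)}, \bmu \ra = j\cdot \gamma_{j,r}^{(t)}$ follows, for all $r \in [m]$ and $j \in \{\pm 1\}$.

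There is no real obstacle here; the only point worth stating explicitly is the orthogonality $\bxi_i \perp \bmu$, which is built into the definition of $\cD$ and is also part of the event $\cE_{\mathrm{prelim}}$ under which all statements in this section are made. (One could alternatively prove the claim by induction on $t$ using \eqref{eq:gdupdate} together with $\la \bxi_i, \bmu\ra = 0$, but the decomposition-based argument above is shorter and makes the role of the normalization factor $\|\bmu\|_2^{-2}$ transparent.)
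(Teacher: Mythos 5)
Your proof is correct and is essentially identical to the paper's: both expand the signal-noise decomposition, take the inner product with $\bmu$, and use the orthogonality $\la \bxi_i, \bmu\ra = 0$ built into Definition~\ref{def:data} to kill the noise terms. Nothing further is needed.
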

\begin{proof}[Proof of Lemma~\ref{lm: mubound}]
For any time $t\geq 0$, we have that
\begin{align*}
 \la \wb_{j,r}^{(t)} - \wb_{j,r}^{(0)}, \bmu \ra &= j\cdot\gamma_{j,r}^{(t)}  + \sum_{ i'= 1 }^n \zeta_{j,r,i'}^{(t)}  \| \bxi_{i'} \|_2^{-2} \cdot \la \bxi_{i'}, \bmu \ra + \sum_{ i' = 1}^n \omega_{j,r,i'}^{(t)}  \| \bxi_{i'} \|_2^{-2} \cdot \la \bxi_{i'}, \bmu \ra \\
 &= j\cdot\gamma_{j,r}^{(t)},
\end{align*}
where the equation is by our orthogonal assumption.  
\end{proof}
\begin{lemma}\label{lm:oppositebound}
Under Condition~\ref{condition:d_sigma0_eta},  suppose \eqref{eq:gu0001} and \eqref{eq:gu0002} hold at iteration $t$. Then
\begin{align*}
\omega_{j,r,i}^{(t)} - 8n\sqrt{\frac{\log(4n^{2}/\delta)}{d}}\alpha\leq \la \wb_{j,r}^{(t)} -  \wb_{j,r}^{(0)},\bxi_{i} \ra &\leq \omega_{j,r,i}^{(t)} + 8n\sqrt{\frac{\log(4n^{2}/\delta)}{d}}\alpha, ~j\not= y_{i},  \\  
\zeta_{j,r,i}^{(t)} - 8n\sqrt{\frac{\log(4n^{2}/\delta)}{d}}\alpha \leq \la\wb_{j,r}^{(t)} -  \wb_{j,r}^{(0)} ,\bxi_{i} \ra &\leq \zeta_{j,r,i}^{(t)} +  8n\sqrt{\frac{\log(4n^{2}/\delta)}{d}}\alpha, ~j = y_{i} 
\end{align*}
for all $r\in [m]$,  $j\in \{\pm 1\}$ and $i\in [n]$.
\end{lemma}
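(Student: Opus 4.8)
The plan is to prove Lemma~\ref{lm:oppositebound} by directly unfolding the signal-noise decomposition \eqref{eq:w_decomposition} and controlling the cross terms coming from the near-orthogonality of the noise vectors. Starting from the decomposition, for any $i$ we have
\[
\la \wb_{j,r}^{(t)} - \wb_{j,r}^{(0)}, \bxi_i \ra = j\cdot\gamma_{j,r}^{(t)}\cdot\|\bmu\|_2^{-2}\la\bmu,\bxi_i\ra + \sum_{i'=1}^n \zeta_{j,r,i'}^{(t)}\frac{\la\bxi_{i'},\bxi_i\ra}{\|\bxi_{i'}\|_2^2} + \sum_{i'=1}^n \omega_{j,r,i'}^{(t)}\frac{\la\bxi_{i'},\bxi_i\ra}{\|\bxi_{i'}\|_2^2}.
\]
The first term vanishes by the orthogonality assumption $\la\bmu,\bxi_i\ra = 0$ (as in Lemma~\ref{lm: mubound}). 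In the two remaining sums, the diagonal term $i' = i$ contributes exactly $\zeta_{j,r,i}^{(t)}$ when $j = y_i$ (since $\omega_{j,r,i}^{(t)} = 0$ in that case) and exactly $\omega_{j,r,i}^{(t)}$ when $j \neq y_i$ (since $\zeta_{j,r,i}^{(t)} = 0$ in that case). So it remains to bound the off-diagonal contribution $\sum_{i'\neq i}(\zeta_{j,r,i'}^{(t)} + \omega_{j,r,i'}^{(t)})\la\bxi_{i'},\bxi_i\ra/\|\bxi_{i'}\|_2^2$ in absolute value.

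For the off-diagonal terms I would invoke the preliminary estimates from Section~\ref{sec: initial}, which we are conditioning on: Lemma~\ref{lemma:data_innerproducts} gives $|\la\bxi_{i'},\bxi_i\ra| \le 2\sigma_p^2\sqrt{d\log(4n^2/\delta)}$ and $\|\bxi_{i'}\|_2^2 \ge \sigma_p^2 d/2$, so each ratio $|\la\bxi_{i'},\bxi_i\ra|/\|\bxi_{i'}\|_2^2 \le 4\sqrt{\log(4n^2/\delta)/d}$. Combining this with the inductive hypotheses \eqref{eq:gu0001}, \eqref{eq:gu0002}, which give $|\zeta_{j,r,i'}^{(t)}|, |\omega_{j,r,i'}^{(t)}| \le \alpha$, and summing over the at most $n$ indices $i' \neq i$, the off-diagonal sum is bounded in absolute value by $n\cdot 2\alpha\cdot 4\sqrt{\log(4n^2/\delta)/d} = 8n\sqrt{\log(4n^2/\delta)/d}\cdot\alpha$. (A slightly more careful accounting separating the $\zeta$ and $\omega$ sums and using that at most one of them is nonzero per index $i'$ gives the same bound; either way the stated constant is fine.) This immediately yields the two-sided inequalities in both cases $j = y_i$ and $j\neq y_i$.

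The main point to be careful about — rather than a genuine obstacle — is bookkeeping: ensuring the diagonal term is correctly identified with $\zeta_{j,r,i}^{(t)}$ or $\omega_{j,r,i}^{(t)}$ depending on whether $j = y_i$, using the facts established right after Lemma~\ref{lemma:coefficient_iterative_proof} that $\zeta_{j,r,i}^{(t)} \equiv 0$ when $j = -y_i$ and $\omega_{j,r,i}^{(t)} \equiv 0$ when $j = y_i$, so that $\zeta_{j,r,i}^{(t)} + \omega_{j,r,i}^{(t)} = \rho_{j,r,i}^{(t)}$ collapses to exactly one of the two. The rest is a routine triangle-inequality estimate, and since the statement is explicitly conditioned on $\cE_{\mathrm{prelim}}$ and on the induction hypotheses \eqref{eq:gu0001}–\eqref{eq:gu0002}, no new probabilistic argument is needed here. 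I would write this as a short self-contained computation, with the display above as the starting equation.
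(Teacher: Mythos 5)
Your proposal is correct and follows essentially the same route as the paper's proof: expand $\la \wb_{j,r}^{(t)}-\wb_{j,r}^{(0)},\bxi_i\ra$ via the signal-noise decomposition, drop the $\bmu$ term by orthogonality, isolate the diagonal term as $\zeta_{j,r,i}^{(t)}$ or $\omega_{j,r,i}^{(t)}$ according to whether $j=y_i$, and bound the off-diagonal cross terms by $8n\sqrt{\log(4n^2/\delta)/d}\cdot\alpha$ using Lemma~\ref{lemma:data_innerproducts} together with $|\zeta_{j,r,i'}^{(t)}|,|\omega_{j,r,i'}^{(t)}|\leq\alpha$ from \eqref{eq:gu0001}--\eqref{eq:gu0002}. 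No gaps.
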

\begin{proof}[Proof of Lemma~\ref{lm:oppositebound}]
For $j \not= y_{i}$, we have that $\zeta_{j,r,i}^{(t)} = 0$ and
\begin{align*}
\la \wb_{j,r}^{(t)} - \wb_{j,r}^{(0)} ,\bxi_{i} \ra &= \sum_{ i'= 1 }^n \zeta_{j,r,i'}^{(t)}  \| \bxi_{i'} \|_2^{-2} \cdot \la \bxi_{i'}, \bxi_i \ra + \sum_{ i' = 1}^n \omega_{j,r,i'}^{(t)}  \| \bxi_{i'} \|_2^{-2} \cdot \la \bxi_{i'}, \bxi_i \ra \\
&\leq  4\sqrt{\frac{\log(4n^{2}/\delta)}{d}}\sum_{i'\not= i}|\zeta_{j,r,i'}^{(t)}| +4\sqrt{\frac{\log(4n^{2}/\delta)}{d}}\sum_{i' \not = i}|\omega_{j,r,i'}^{(t)}| + \omega_{j,r,i}^{(t)} \\
&\leq \omega_{j,r,i}^{(t)} + 8n\sqrt{\frac{\log(4n^{2}/\delta)}{d}}\alpha,
\end{align*}
where the second inequality is by Lemma~\ref{lemma:data_innerproducts} and the last inequality is by $|\zeta^{(t)}_{j,r,i'}|, |\omega_{j,r,i'}^{(t)}| \leq \alpha$ in \eqref{eq:gu0001} Similarly, for $y_{i} = j$, we have that $\omega_{j,r,i}^{(t)} = 0$ and 
\begin{align*}
\la \wb_{j,r}^{(t)} - \wb_{j,r}^{(0)} ,\bxi_{i} \ra &= \sum_{ i'= 1 }^n \zeta_{j,r,i'}^{(t)}  \| \bxi_{i'} \|_2^{-2} \cdot \la \bxi_{i'}, \bxi_i \ra + \sum_{ i' = 1}^n \omega_{j,r,i'}^{(t)}  \| \bxi_{i'} \|_2^{-2} \cdot \la \bxi_{i'}, \bxi_i \ra \\
&\leq \zeta_{j,r,i}^{(t)} +  4\sqrt{\frac{\log(4n^{2}/\delta)}{d}}\sum_{ i'\not= i } |\zeta_{j,r,i'}^{(t)}| + 4\sqrt{\frac{\log(4n^{2}/\delta)}{d}}\sum_{ i' \not= i} |\omega_{j,r,i'}^{(t)}|\\
&\leq \zeta^{(t)}_{j,r,i} + 8n\sqrt{\frac{\log(4n^{2}/\delta)}{d}}\alpha,
\end{align*}
where the first inequality is by Lemma~\ref{lemma:numberofdata} and the second inequality is by $|\zeta^{(t)}_{j,r,i'}|, |\omega_{j,r,i'}^{(t)}| \leq \alpha$ in \eqref{eq:gu0001}. Similarly, we can show that $\la \wb_{j,r}^{(t)} -  \wb_{j,r}^{(0)},\bxi_{i} \ra \geq \omega_{j,r,i}^{(t)} - 8n\sqrt{\log(4n^{2}/\delta)/d}\cdot\alpha$ and $\la\wb_{j,r}^{(t)} -  \wb_{j,r}^{(0)} ,\bxi_{i} \ra \geq \zeta_{j,r,i}^{(t)} - 8n\sqrt{\log(4n^{2}/\delta)/d}\cdot\alpha$, which completes the proof.
\end{proof}
\begin{lemma}\label{lm: F-yi} Under Condition~\ref{condition:d_sigma0_eta}, suppose \eqref{eq:gu0001} and \eqref{eq:gu0002} hold at iteration $t$. Then 
\begin{align*}
\la \wb_{j,r}^{(t)}, y_{i}\bmu \ra &\leq \la \wb_{j,r}^{(0)}, y_{i}\bmu\ra, \\
\la \wb_{j,r}^{(t)}, \bxi_{i} \ra &\leq \la \wb_{j,r}^{(0)}, \bxi_{i}\ra + 8n\sqrt{\frac{\log(4n^{2}/\delta)}{d}}\alpha,\\
F_{j}(\Wb_{j}^{(t)}, \xb_{i})&\leq 1
\end{align*}
for all $r\in [m]$ and $j \not= y_{i}$.
\end{lemma}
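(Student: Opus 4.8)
The plan is to read all three inequalities off the structural facts already established, using that for $j \neq y_i$ the $j$-th channel receives no positive contribution from either the signal $y_i \bmu$ or the noise $\bxi_i$ of example $i$. For the signal inner product I would invoke Lemma~\ref{lm: mubound}: it gives $\la \wb_{j,r}^{(t)} - \wb_{j,r}^{(0)}, \bmu \ra = j \gamma_{j,r}^{(t)}$, so $\la \wb_{j,r}^{(t)}, y_i \bmu \ra = \la \wb_{j,r}^{(0)}, y_i \bmu \ra + j y_i \gamma_{j,r}^{(t)}$. Since $j, y_i \in \{\pm 1\}$ with $j \neq y_i$ we have $j y_i = -1$, and $\gamma_{j,r}^{(t)} \geq 0$ by \eqref{eq:gu0001}; hence the extra term is nonpositive and the first bound follows.

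For the noise inner product, the case $j \neq y_i$ is exactly the first alternative of Lemma~\ref{lm:oppositebound} (whose hypotheses \eqref{eq:gu0001} and \eqref{eq:gu0002} are assumed at iteration $t$), which gives $\la \wb_{j,r}^{(t)} - \wb_{j,r}^{(0)}, \bxi_i \ra \leq \omega_{j,r,i}^{(t)} + 8n\sqrt{\log(4n^2/\delta)/d}\,\alpha$; since $\omega_{j,r,i}^{(t)} \leq 0$ by \eqref{eq:gu0002} this yields the stated estimate. The structural point worth recording is that $\zeta_{j,r,i}^{(t)} = 0$ whenever $j \neq y_i$, which is exactly why only the nonpositive coefficient $\omega_{j,r,i}^{(t)}$ enters the bound.

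For the last claim I would combine the two bounds above with the initialization estimates. By the definition $\beta = 2\max_{i,j,r}\{|\la \wb_{j,r}^{(0)}, \bmu \ra|, |\la \wb_{j,r}^{(0)}, \bxi_i \ra|\}$ we have $\la \wb_{j,r}^{(0)}, y_i \bmu \ra \leq \beta/2$ and $\la \wb_{j,r}^{(0)}, \bxi_i \ra \leq \beta/2$, so both arguments of $\sigma$ in $F_j(\Wb_j^{(t)}, \xb_i) = \frac{1}{m}\sum_{r=1}^m[\sigma(\la \wb_{j,r}^{(t)}, y_i \bmu \ra) + \sigma(\la \wb_{j,r}^{(t)}, \bxi_i \ra)]$ are at most $\beta/2 + 8n\sqrt{\log(4n^2/\delta)/d}\,\alpha$. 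By \eqref{eq:verify0} this is at most $1/2$, so each $\sigma(\cdot) \leq (1/2)^q \leq 1/2$ since $q > 2$; summing the two terms gives at most $1$ for each $r$, and averaging over $r$ yields $F_j(\Wb_j^{(t)}, \xb_i) \leq 1$.

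There is no substantial obstacle: this lemma is a bookkeeping corollary of Lemmas~\ref{lm: mubound} and \ref{lm:oppositebound}. The only points requiring care are the sign computation $j y_i = -1$, the vanishing of $\zeta_{j,r,i}^{(t)}$ in the wrong channel, and checking that the off-diagonal correction $8n\sqrt{\log(4n^2/\delta)/d}\,\alpha$ remains below $1$, which is guaranteed by Condition~\ref{condition:d_sigma0_eta} via \eqref{eq:verify0}.
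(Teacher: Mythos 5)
Your proposal is correct and follows essentially the same route as the paper: the first bound via Lemma~\ref{lm: mubound} together with $jy_i=-1$ and $\gamma_{j,r}^{(t)}\geq 0$, the second via Lemma~\ref{lm:oppositebound} together with $\omega_{j,r,i}^{(t)}\leq 0$, and the third by bounding both arguments of $\sigma$ using the initialization scale and \eqref{eq:verify0}. The only difference is cosmetic: you bound each $\sigma(\cdot)$ by $(1/2)^q$ explicitly, while the paper absorbs the same estimate into a single $2^{q+1}\max\{\cdot\}^q\leq 1$ step.
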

\begin{proof}[Proof of Lemma~\ref{lm: F-yi}]
For $j \not= y_{i}$, we have that 
\begin{align}
\la \wb_{j,r}^{(t)}, y_{i}\bmu \ra = \la \wb_{j,r}^{(0)}, y_{i}\bmu\ra + y_{i}\cdot j \cdot \gamma_{j,r}^{(t)} \leq \la \wb_{j,r}^{(0)}, y_{i}\bmu\ra, \label{eq:F-yi1}
\end{align}
where the inequality is by $\gamma_{j,r}^{(t)} \geq 0$. 
In addition, we have
\begin{align}
\la \wb_{j,r}^{(t)}, \bxi_{i} \ra \leq \la \wb_{j,r}^{(0)}, \bxi_{i}\ra + \omega_{j,r,i}^{(t)} + 8n\sqrt{\frac{\log(4n^{2}/\delta)}{d}}\alpha \leq \la \wb_{j,r}^{(0)}, \bxi_{i}\ra + 8n\sqrt{\frac{\log(4n^{2}/\delta)}{d}}\alpha, \label{eq:F-yi2}
\end{align}
where the first inequality is by Lemma~\ref{lm:oppositebound} and the second inequality is due to $\omega_{j,r,i}^{(t)} \leq 0$.
Then we can get that 
\begin{align*}
F_{j}(\Wb_{j}^{(t)}, \xb_{i}) &= \frac{1}{m}\sum_{r=1}^{m}[\sigma(\la \wb_{j,r}^{(t)}, -j\cdot \bmu\ra) + \sigma(\la \wb_{j,r}^{(t)} ,\bxi_{i} \ra)]\\
&\leq 2^{q+1} \max_{j,r,i} \bigg\{|\la \wb_{j,r}^{(0)}, \bmu \ra|, |\la \wb_{j,r}^{(0)}, \bxi_{i}\ra|, 8n\sqrt{\frac{\log(4n^{2}/\delta)}{d}}\alpha\bigg\}^{q}\\
&\leq 1,
\end{align*}
where the first inequality is by \eqref{eq:F-yi1}, \eqref{eq:F-yi2} and the second inequality is by \eqref{eq:verify0}.
\end{proof}
\begin{lemma}\label{lm: Fyi}
Under Condition~\ref{condition:d_sigma0_eta}, suppose \eqref{eq:gu0001} and \eqref{eq:gu0002} hold at iteration $t$. Then
\begin{align*}
\la \wb_{j,r}^{(t)}, y_{i}\bmu \ra &= \la \wb_{j,r}^{(0)}, y_{i}\bmu\ra + \gamma_{j,r}^{(t)}, \\
\la \wb_{j,r}^{(t)}, \bxi_{i} \ra &\leq \la \wb_{j,r}^{(0)}, \bxi_{i}\ra + \zeta_{j,r,i}^{(t)} +  8n\sqrt{\frac{\log(4n^{2}/\delta)}{d}}\alpha
\end{align*}
for all $r\in [m]$, $j\in \{\pm 1\}$ and $i\in [n]$.
If $\max\{\gamma_{j,r}^{(t)}, \zeta_{j,r,i}^{(t)}\} = O(1)$, we further have that $F_{j}(\Wb_{j}^{(t)}, \xb_{i}) = O(1)$.
\end{lemma}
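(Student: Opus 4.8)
The plan is to read all three statements directly off the signal-noise decomposition \eqref{eq:w_decomposition}, using the orthogonality of the noise patches to $\bmu$ and the cross-term control that is already available. For the first identity I would take the inner product of \eqref{eq:w_decomposition} with $y_i\bmu$: since $\la\bxi_{i'},\bmu\ra = 0$ for every $i'$ by the construction in Definition~\ref{def:data}, the $\zeta$- and $\omega$-terms drop out (this is exactly the content of Lemma~\ref{lm: mubound}), and the signal term leaves $j\cdot\gamma_{j,r}^{(t)}\cdot\|\bmu\|_2^{-2}\cdot\la\bmu,y_i\bmu\ra = y_i j\cdot\gamma_{j,r}^{(t)}$. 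For the index combination relevant here, namely $j=y_i$, this equals $\gamma_{j,r}^{(t)}$, which gives $\la\wb_{j,r}^{(t)},y_i\bmu\ra = \la\wb_{j,r}^{(0)},y_i\bmu\ra + \gamma_{j,r}^{(t)}$.

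For the upper bound on $\la\wb_{j,r}^{(t)},\bxi_i\ra$ I would simply invoke Lemma~\ref{lm:oppositebound}, which is applicable because \eqref{eq:gu0001}--\eqref{eq:gu0002} are assumed at iteration $t$. In the case $j=y_i$ that lemma gives $\la\wb_{j,r}^{(t)}-\wb_{j,r}^{(0)},\bxi_i\ra \le \zeta_{j,r,i}^{(t)} + 8n\sqrt{\log(4n^{2}/\delta)/d}\,\alpha$, and rearranging yields the claim; in the case $j\neq y_i$ one has $\zeta_{j,r,i}^{(t)}=0$ and Lemma~\ref{lm:oppositebound} gives $\la\wb_{j,r}^{(t)}-\wb_{j,r}^{(0)},\bxi_i\ra \le \omega_{j,r,i}^{(t)} + 8n\sqrt{\log(4n^{2}/\delta)/d}\,\alpha \le 8n\sqrt{\log(4n^{2}/\delta)/d}\,\alpha$ since $\omega_{j,r,i}^{(t)}\le 0=\zeta_{j,r,i}^{(t)}$, so the stated inequality in fact holds for all $j$. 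The only inputs are the decomposition \eqref{eq:w_decomposition}, the bound on $|\la\bxi_{i'},\bxi_i\ra|$ from Lemma~\ref{lemma:data_innerproducts}, and $|\zeta_{j,r,i'}^{(t)}|,|\omega_{j,r,i'}^{(t)}|\le\alpha$ from \eqref{eq:gu0001}--\eqref{eq:gu0002}, all of which Lemma~\ref{lm:oppositebound} already packages.

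For the last claim I would expand $F_j(\Wb_j^{(t)},\xb_i) = \frac{1}{m}\sum_{r=1}^m\big[\sigma(\la\wb_{j,r}^{(t)},y_i\bmu\ra) + \sigma(\la\wb_{j,r}^{(t)},\bxi_i\ra)\big]$ and bound each summand with the two estimates above. Since $\sigma(z)=(\max\{0,z\})^q$ is non-decreasing and vanishes for $z\le 0$, it suffices to upper bound the two inner products, namely $\la\wb_{j,r}^{(t)},y_i\bmu\ra \le |\la\wb_{j,r}^{(0)},y_i\bmu\ra| + \gamma_{j,r}^{(t)}$ and $\la\wb_{j,r}^{(t)},\bxi_i\ra \le |\la\wb_{j,r}^{(0)},\bxi_i\ra| + \zeta_{j,r,i}^{(t)} + 8n\sqrt{\log(4n^{2}/\delta)/d}\,\alpha$. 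By \eqref{eq:verify0} the initialization inner products are at most $\beta/2\le 1/8$ and the accumulated cross-term error is at most $1/4$, while by hypothesis $\gamma_{j,r}^{(t)},\zeta_{j,r,i}^{(t)}=O(1)$; hence both arguments of $\sigma$ are $O(1)$, each $\sigma(\cdot)=O(1)$, and averaging over $r$ preserves this.

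I do not expect any genuine obstacle here: the lemma is a pure bookkeeping consequence of \eqref{eq:w_decomposition} together with Lemmas~\ref{lm: mubound}, \ref{lm:oppositebound}, \ref{lemma:initialization_norms} and the scale inequality \eqref{eq:verify0}. The only mildly delicate point is making sure the accumulated cross-term error $8n\sqrt{\log(4n^{2}/\delta)/d}\,\alpha$ and the initialization magnitude $\beta$ are both $O(1)$ (indeed bounded by absolute constants), which is precisely what the dimension lower bound in Condition~\ref{condition:d_sigma0_eta} guarantees through \eqref{eq:verify0}.
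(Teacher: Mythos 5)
Your proposal is correct and follows essentially the same route as the paper: the first identity is read off Lemma~\ref{lm: mubound} (and, as you rightly note, holds in the form stated only for $j=y_i$, which is the case the paper's own proof addresses), the second inequality is exactly Lemma~\ref{lm:oppositebound}, and the bound on $F_j(\Wb_j^{(t)},\xb_i)$ follows by plugging these estimates into $\frac{1}{m}\sum_r[\sigma(\cdot)+\sigma(\cdot)]$ and using \eqref{eq:verify0} to control the initialization and cross-term contributions. No gaps.
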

\begin{proof}[Proof of Lemma~\ref{lm: Fyi}]
For $j = y_{i}$, we have that 
\begin{align}
\la \wb_{j,r}^{(t)}, y_{i}\bmu \ra = \la \wb_{j,r}^{(0)}, y_{i}\bmu\ra +  \gamma_{j,r}^{(t)}, \label{eq:Fyi1}
\end{align}
where the equation is by Lemma~\ref{lm: mubound}. We also have that
\begin{align}
\la \wb_{j,r}^{(t)}, \bxi_{i} \ra \leq \la \wb_{j,r}^{(0)}, \bxi_{i}\ra + \zeta_{j,r,i}^{(t)} + 8n\sqrt{\frac{\log(4n^{2}/\delta)}{d}}\alpha, \label{eq:Fyi2}
\end{align}
where the inequality is by Lemma~\ref{lm:oppositebound}. 
If $\max\{\gamma_{j,r}^{(t)}, \zeta_{j,r,i}^{(t)}\} = O(1)$, we have following bound 
\begin{align*}
F_{j}(\Wb_{j}^{(t)}, \xb_{i}) &= \frac{1}{m}\sum_{r=1}^{m}[\sigma(\la \wb_{j,r}^{(t)}, -j\cdot \bmu) + \sigma(\la \wb_{j,r}^{(t)} ,\bxi_{i} \ra)]\\
&\leq 2\cdot 3^{q} \max_{j,r,i} \bigg\{\gamma_{j,r}^{(t)}, \zeta_{j,r,i}^{(t)}, |\la \wb_{j,r}^{(0)}, \bmu \ra|, |\la \wb_{j,r}^{(0)}, \bxi_{i}\ra|, 8n\sqrt{\frac{\log(4n^{2}/\delta)}{d}}\alpha\bigg\}^{q}\\
&= O(1),
\end{align*}
where the first inequality is by \eqref{eq:Fyi1}, \eqref{eq:Fyi2} and the second inequality is by \eqref{eq:verify0} where $\beta =  2\max_{i,j,r}\{|\la \wb_{j,r}^{(0)}, \bmu\ra|,|\la \wb_{j,r}^{(0)}, \bxi_{i}\ra|\}$.
\end{proof}

Now we are ready to prove Proposition~\ref{Prop:noise}.
\begin{proof}[Proof of Proposition~\ref{Prop:noise}] Our proof is based on induction. The results are obvious at $t = 0$ as all the coefficients are zero. 
Suppose that there exists $\tilde{T} \leq T^{*}$ such that the results in Proposition~\ref{Prop:noise} hold for all time $0 \leq t \leq \tilde{T}-1$. We aim to prove that they also hold for $t = \tilde{T}$.

We first prove that \eqref{eq:gu0002} holds for $t = \tilde{T}$, i.e., $\omega^{(t)}_{j,r,i} \geq -\beta - 16n\sqrt{\frac{\log(4n^{2}/\delta)}{d}}\alpha$ for $t = \tilde{T}$, $r\in [m]$,  $j\in \{\pm 1\}$ and $i\in [n]$.
Notice that $\omega_{j,r,i}^{(t)} = 0, \forall j = y_{i}$. Therefore, we only need to consider the case that $j \not= y_{i}$.
When $\omega_{j,r,i}^{(\tilde{T}-1)} \leq -0.5\beta - 8n\sqrt{\frac{\log(4n^{2}/\delta)}{d}}\alpha$, by Lemma~\ref{lm:oppositebound} we have that 
\begin{align*}
\la \wb_{j,r}^{(\tilde{T}-1)} ,\bxi_{i} \ra \leq  \omega_{j,r,i}^{(\tilde{T}-1)}  + \la \wb_{j,r}^{(0)}, \bxi_{i}\ra + 8n\sqrt{\frac{\log(4n^{2}/\delta)}{d}}\alpha \leq 0,
\end{align*}
and thus
\begin{align*}
\omega_{j,r,i}^{(\tilde{T})} &= \omega_{j,r,i}^{(\tilde{T}-1)} + \frac{\eta}{nm} \cdot \ell_i'^{(\tilde{T}-1)}\cdot \sigma'(\la\wb_{j,r}^{(\tilde{T}-1)}, \bxi_{i}\ra) \cdot \ind(y_{i} = -j)\|\bxi_{i}\|_{2}^{2}\\
&=  \omega_{j,r,i}^{(\tilde{T}-1)}\\
&\geq -\beta - 16n\sqrt{\frac{\log(4n^{2}/\delta)}{d}}\alpha,
\end{align*}
where the last inequality is by induction hypothesis. When $\omega_{j,r,i}^{(\tilde{T}-1)} \geq  -0.5\beta - 8n\sqrt{\frac{\log(4n^{2}/\delta)}{d}}\alpha$, we have that 
\begin{align*}
\omega_{j,r,i}^{(\tilde{T})} &= \omega_{j,r,i}^{(\tilde{T}-1)} + \frac{\eta}{nm} \cdot \ell_i'^{(\tilde{T}-1)}\cdot \sigma'(\la\wb_{j,r}^{(T-1)}, \bxi_{i}\ra) \cdot \ind(y_{i} = -j)\|\bxi_{i}\|_{2}^{2}\\
&\geq -0.5\beta - 8n\sqrt{\frac{\log(4n^{2}/\delta)}{d}}\alpha - O\bigg(\frac{\eta\sigma_{p}^{2}d}{nm}\bigg)\sigma'\bigg(0.5\beta + 8n\sqrt{\frac{\log(4n^{2}/\delta)}{d}}\alpha\bigg)\\
&\geq -0.5\beta - 8n\sqrt{\frac{\log(4n^{2}/\delta)}{d}}\alpha - O\bigg(\frac{\eta q\sigma_{p}^{2}d}{nm}\bigg)\bigg(0.5\beta + 8n\sqrt{\frac{\log(4n^{2}/\delta)}{d}}\alpha\bigg)\\
&\geq -\beta - 16n\sqrt{\frac{\log(4n^{2}/\delta)}{d}}\alpha, 
\end{align*}
where we use $\ell_i'^{(\tilde{T}-1)}\leq 1$ and $\|\bxi_{i}\|_{2} = O(\sigma_{p}^{2}d)$ in the first inequality, the second inequality is by $0.5\beta + 8n\sqrt{\frac{\log(4n^{2}/\delta)}{d}}\alpha \leq 1$, and the last inequality is by  $\eta = O\big(nm/(q\sigma_{p}^{2}d)\big)$ in  \eqref{eq: verify}.

Next we prove \eqref{eq:gu0001} holds for $t = \tilde{T}$. We have
\begin{align}
    |\ell_i'^{(t)}| &= \frac{1}{1 + \exp\{ y_i \cdot [F_{+1}(\Wb_{+1}^{(t)},\xb_i) - F_{-1}(\Wb_{-1}^{(t)},\xb_i)] \} }\notag\\
    & \leq \exp\{ -y_{i} \cdot [F_{+1}(\Wb_{+1}^{(t)},\xb_i) - F_{-1}(\Wb_{-1}^{(t)},\xb_i)]\}\notag\\
    & \leq \exp\{ -  F_{y_{i}}(\Wb_{y_{i}}^{(t)},\xb_i) + 1 \}. \label{eq:logit}
\end{align}
where the last inequality is due to Lemma~\ref{lm: F-yi}. Moreover, recall the update rule of $\gamma_{j,r}^{(t)}$ and $\zeta_{j,r,i}^{(t)}$,
\begin{align*}
   \gamma_{j,r}^{(t+1)} &= \gamma_{j,r}^{(t)} - \frac{\eta}{nm} \cdot \sum_{i=1}^n \ell_i'^{(t)} \cdot \sigma'(\la\wb_{j,r}^{(t)}, y_{i} \cdot \bmu\ra)\|\bmu\|_{2}^{2},\\
    \zeta_{j,r,i}^{(t+1)} &= \zeta_{j,r,i}^{(t)} - \frac{\eta}{nm} \cdot \ell_i'^{(t)}\cdot \sigma'(\la\wb_{j,r}^{(t)}, \bxi_{i}\ra) \cdot \ind(y_{i} = j)\|\bxi_{i}\|_{2}^{2}.
\end{align*}
Let $t_{j,r,i}$ to be the last time $t < T^{*}$ that $\zeta_{j,r,i}^{(t)} \leq 0.5 \alpha$.
Then we have that 
\begin{align}
\zeta_{j,r,i}^{(\tilde{T})} &= \zeta_{j,r,i}^{(t_{j,r,i})} - \underbrace{\frac{\eta}{nm} \cdot \ell_i'^{(t_{j,r,i})}\cdot \sigma'(\la\wb_{j,r}^{(t_{j,r,i})}, \bxi_{i}\ra) \cdot \ind(y_{i} = j)\|\bxi_{i}\|_{2}^{2}}_{I_{1}}\notag\\
&\qquad - \underbrace{\sum_{t_{j,r,i}<t<T}\frac{\eta}{nm} \cdot \ell_i'^{(t)}\cdot \sigma'(\la\wb_{j,r}^{(t)}, \bxi_{i}\ra) \cdot \ind(y_{i} = j)\|\bxi_{i}\|_{2}^{2}}_{I_{2}}.\label{eq:zeta}
\end{align}
We first bound $I_{1}$ as follows,
\begin{align*}
|I_{1}| \leq 2qn^{-1}m^{-1}\eta\bigg(\zeta_{j,r,i}^{(t_{j,r,i})} + 0.5\beta + 8n\sqrt{\frac{\log(4n^{2}/\delta)}{d}}\alpha\bigg)^{q-1}\sigma_{p}^{2}d \leq  q2^{q}n^{-1}m^{-1}\eta \alpha^{q-1} \sigma_{p}^{2}d \leq 0.25\alpha,
\end{align*}
where the first inequality is by Lemmas~\ref{lm:oppositebound} and~\ref{lemma:data_innerproducts}, the second inequality is by $\beta \leq 0.1\alpha$ and $8n\sqrt{\frac{\log(4n^{2}/\delta)}{d}}\alpha \leq 0.1\alpha$, the last inequality is by $\eta \leq nm/(q2^{q+2}\alpha^{q-2} \sigma_{p}^{2}d)$. 

Second, we bound $I_{2}$. For $t_{j,r,i}<t<\tilde{T}$ and $y_{i} = j$, we can lower bound $\la\wb_{j,r}^{(t)}, \bxi_{i}\ra$ as follows, 
 \begin{align*}
\la\wb_{j,r}^{(t)}, \bxi_{i}\ra &\geq \la\wb_{j,r}^{(0)},  \bxi_{i}\ra + \zeta_{j,r,i}^{(t)} - 8n\sqrt{\frac{\log(4n^{2}/\delta)}{d}}\alpha \\
&\geq - 0.5\beta + 0.5\alpha - 8n\sqrt{\frac{\log(4n^{2}/\delta)}{d}}\alpha\\
&\geq 0.25\alpha, 
\end{align*}
where the first inequality is by Lemma~\ref{lm:oppositebound}, the second inequality is by $\zeta_{j,r,i}^{(t)} > 0.5 \alpha$ and $\la\wb_{j,r}^{(0)},  \bxi_{i}\ra \geq - 0.5\beta$ due to the definition of $t_{j,r,i}$ and $\beta$, the last inequality is by $\beta \leq 0.1\alpha$ and $8n\sqrt{\frac{\log(4n^{2}/\delta)}{d}}\alpha \leq 0.1\alpha$. Similarly, for $t_{j,r,i}<t<\tilde{T}$ and $y_{i} = j$, we can also upper bound $\la\wb_{j,r}^{(t)}, \bxi_{i}\ra$ as follows, 
 \begin{align*}
\la\wb_{j,r}^{(t)}, \bxi_{i}\ra &\leq \la\wb_{j,r}^{(0)},  \bxi_{i}\ra + \zeta_{j,r,i}^{(t)} + 8n\sqrt{\frac{\log(4n^{2}/\delta)}{d}}\alpha \\
&\leq 0.5\beta + \alpha + 8n\sqrt{\frac{\log(4n^{2}/\delta)}{d}}\alpha\\
&\leq 2\alpha, 
\end{align*}
where the first inequality is by Lemma~\ref{lm:oppositebound}, the second inequality is by induction hypothesis $\zeta_{j,r,i}^{(t)} \leq \alpha$, the last inequality is by $\beta \leq 0.1\alpha$ and $8n\sqrt{\frac{\log(4n^{2}/\delta)}{d}}\alpha \leq 0.1\alpha$. Thus, plugging the upper and lower bounds of $\la\wb_{j,r}^{(t)}, \bxi_{i}\ra$ into $I_{2}$ gives
\begin{align*}
|I_{2}| &\leq \sum_{t_{j,r,i}<t<\tilde{T}}\frac{\eta}{nm} \cdot \exp(- \sigma(\la\wb_{j,r}^{(t)}, \bxi_{i}\ra) + 1)\cdot \sigma'(\la\wb_{j,r}^{(t)}, \bxi_{i}\ra) \cdot \ind(y_{i} = j)\|\bxi_{i}\|_{2}^{2}\\
&\leq \frac{eq2^{q}\eta T^{*}}{n}\exp(-\alpha^{q}/4^{q})\alpha^{q-1}\sigma_{p}^{2}d\\
&\leq 0.25 T^{*}\exp(-\alpha^{q}/4^{q})\alpha \\
&\leq 0.25 T^{*}\exp(-\log(T^{*})^{q})\alpha \\
&\leq 0.25\alpha,
\end{align*}
where the first inequality is by 
\eqref{eq:logit}, the second inequality is by Lemma~\ref{lemma:data_innerproducts}, the third inequality is by  $\eta = O\big( nm/(q2^{q+2}\alpha^{q-2} \sigma_{p}^{2}d)\big)$ in \eqref{eq: verify}, the fourth inequality is by our choice of $\alpha = 4\log(T^{*})$ and the last inequality is due to the fact that $\log(T^{*})^{q} \geq \log(T^{*})$. Plugging the bound of $I_{1}, I_{2}$ into \eqref{eq:zeta} completes the proof for $\zeta$. Similarly, we can prove that $\gamma_{j,r}^{(\tilde{T})} \leq \alpha$ using $\eta = O\big( nm/(q2^{q+2}\alpha^{q-2}\|\bmu\|_{2}^{2})\big)$ in \eqref{eq: verify}. 
Therefore Proposition~\ref{Prop:noise} holds for $t= \tilde{T}$, which completes the induction. 
\end{proof}
Based on Proposition~\ref{Prop:noise}, we introduce some important properties of the training loss function for $0 \leq t\leq T^{*}$.
\begin{lemma}[Restatement of Lemma~\ref{lm: gradient upbound sketch}]\label{lm: gradient upbound}
Under Condition~\ref{condition:d_sigma0_eta}, for $0 \leq t\leq T^{*}$, the following result holds.
\begin{align*}
\|\nabla L_{S}(\Wb^{(t)})\|_{F}^{2} \leq  O(\max\{\|\bmu\|_{2}^{2}, \sigma_{p}^{2}d\}) L_{S}(\Wb^{(t)}).   
\end{align*}
\end{lemma}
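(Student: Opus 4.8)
The plan is to compute $\nabla_{\wb_{j,r}} L_S(\Wb^{(t)})$ directly from the chain rule, bound its Frobenius norm, and exploit the fact that $|\ell_i'^{(t)}|$ itself is comparable to $\ell_i'^{(t)}$ up to the boundedness of $\ell'$ (indeed $|\ell'(z)| \le \ell(z)$ for the logistic loss, or at least $|\ell'(z)| = O(\ell(z))$ when $z$ is bounded below). Concretely, from \eqref{eq:gdupdate} we have
\begin{align*}
\nabla_{\wb_{j,r}} L_S(\Wb^{(t)}) = \frac{1}{nm} \sum_{i=1}^n \ell_i'^{(t)} \cdot \Big[ \sigma'(\la \wb_{j,r}^{(t)}, \bxi_i\ra) \cdot j y_i \bxi_i + \sigma'(\la \wb_{j,r}^{(t)}, y_i \bmu\ra) \cdot j \bmu \Big].
\end{align*}
Taking norms, using the triangle inequality and $\|j y_i \bxi_i\|_2 = \|\bxi_i\|_2 = O(\sigma_p \sqrt d)$, $\|\bmu\|_2 = \|\bmu\|_2$, I would bound $\|\nabla_{\wb_{j,r}} L_S(\Wb^{(t)})\|_2$ by $O(n^{-1}m^{-1}\max\{\|\bmu\|_2, \sigma_p\sqrt d\}) \cdot \sum_{i=1}^n |\ell_i'^{(t)}| \cdot \max_i |\sigma'(\cdot)|$ — but this is too lossy because of the extra factor $\max_i|\sigma'(\cdot)|$.

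**Key steps.** The cleaner route, and the one I would actually carry out, is: (i) use Proposition~\ref{Prop:noise} together with Lemmas~\ref{lm: F-yi} and~\ref{lm: Fyi} to show that all the activation inputs $\la \wb_{j,r}^{(t)}, \bxi_i\ra$ and $\la \wb_{j,r}^{(t)}, y_i\bmu\ra$ are bounded in absolute value by $O(\alpha) = O(\polylog)$, hence $\sigma'(\cdot) = O(\alpha^{q-1}) = \tilde O(1)$; (ii) bound $\|\nabla_{\wb_{j,r}} L_S\|_2 \le \tilde O(n^{-1}m^{-1}\max\{\|\bmu\|_2, \sigma_p\sqrt d\}) \sum_{i=1}^n |\ell_i'^{(t)}|$ — actually I must be a bit more careful to avoid squaring a sum, so instead I would write $\|\nabla_{\wb_{j,r}} L_S\|_2^2 \le \tilde O(n^{-2}m^{-2})\big(\sum_i |\ell_i'^{(t)}| \|\bxi_i\|_2\big)^2 + \text{(signal term)}$ and use near-orthogonality of the $\bxi_i$ (Lemma~\ref{lemma:data_innerproducts}) to turn $\|\sum_i c_i \bxi_i\|_2^2$ into $\sum_i c_i^2 \|\bxi_i\|_2^2 + \text{cross terms} = O(\sigma_p^2 d)\sum_i c_i^2$ provided $d \gg n^2$, which holds by Condition~\ref{condition:d_sigma0_eta}; (iii) sum over $j \in \{\pm1\}$ and $r \in [m]$, which contributes a factor $m$ against the $m^{-2}$, giving overall $\tilde O(m^{-1} n^{-1} \max\{\|\bmu\|_2^2, \sigma_p^2 d\}) \sum_i (\ell_i'^{(t)})^2$; (iv) finally use $|\ell_i'^{(t)}| \le 1$ and the standard logistic-loss inequality $|\ell'(z)| \le C \ell(z)$ (valid since $z = y_i f(\Wb^{(t)},\xb_i)$ is bounded below, again by Proposition~\ref{Prop:noise}) to replace $\sum_i (\ell_i'^{(t)})^2 \le \sum_i |\ell_i'^{(t)}| \le C\sum_i \ell_i'^{(t)}$'s magnitude $\le C' n L_S(\Wb^{(t)})$, absorbing the $n$.

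**Main obstacle.** The delicate point is step (ii): controlling $\|\sum_{i=1}^n \ell_i'^{(t)} \sigma'(\la\wb_{j,r}^{(t)},\bxi_i\ra) j y_i \bxi_i\|_2^2$ without losing a factor of $n$. Expanding the square produces $n$ diagonal terms of size $O(\sigma_p^2 d)$ each and $n^2$ off-diagonal terms each of size $O(\sigma_p^2\sqrt{d\log n})$ by Lemma~\ref{lemma:data_innerproducts}; the off-diagonal contribution is $O(n^2 \sigma_p^2 \sqrt{d\log n})$, which must be dominated by the diagonal contribution $O(\sigma_p^2 d) \sum_i |c_i|^2$ — this requires $d = \tilde\Omega(n^2)$ relative to $\sum_i |c_i|^2$ versus $(\sum_i |c_i|)^2$, exactly the kind of estimate permitted by Condition~\ref{condition:d_sigma0_eta}'s requirement $d = \tilde\Omega(n^4)$. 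I expect this near-orthogonality bookkeeping, combined with correctly matching the $\ell'$ versus $\ell$ comparison so that the final bound is linear (not quadratic) in $L_S(\Wb^{(t)})$, to be where all the care is needed; the rest is routine.
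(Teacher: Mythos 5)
Your argument is correct, but it is genuinely different from the one in the paper. You control the gradient by first showing, via Proposition~\ref{Prop:main1} and Lemmas~\ref{lm: F-yi}--\ref{lm: Fyi}, that every pre-activation $\la\wb_{j,r}^{(t)},y_i\bmu\ra$ and $\la\wb_{j,r}^{(t)},\bxi_i\ra$ is at most $O(\alpha)=O(\log T^*)$, so that $\sigma'(\cdot)=\tilde O(1)$ uniformly, and then you pass from $\sum_i(\ell_i'^{(t)})^2$ to $L_S$ via $|\ell'|\le 1$ and $-\ell'\le\ell$. The paper instead never bounds $\sigma'$ pointwise: it proves the per-sample ``self-bounding'' estimate $-\ell_i'^{(t)}\cdot\|\nabla f(\Wb^{(t)},\xb_i)\|_F^2=O(\max\{\|\bmu\|_2^2,\sigma_p^2d\})$ by using Jensen's inequality to write $\frac1m\sum_r\sigma'(\cdot)\le q\,F_{j}^{(q-1)/q}$ and letting the exponential tail of $\ell'$ (together with $F_{-y_i}\le 1$ from Lemma~\ref{lm: F-yi}) absorb the polynomial growth of $F_{+1}^{(q-1)/q}$; it then applies Cauchy--Schwarz to $\frac1n\sum_i\sqrt{-\ell_i'^{(t)}}$ and uses $-\ell'\le\ell$. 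Two consequences of the difference: (i) the paper's route yields the clean constant $O(\max\{\|\bmu\|_2^2,\sigma_p^2d\})$ as stated, whereas yours carries an extra $\alpha^{2(q-1)}=\polylog$ factor (offset by a bonus factor of $m^{-1}$); this is harmless for the only downstream use in Lemmas~\ref{lemma:signal_stage2_homogeneity} and \ref{lemma:noise_stage2_homogeneity}, where the step-size condition $\eta\le\tilde O(\min\{\|\bmu\|_2^{-2},\sigma_p^{-2}d^{-1}\})$ has enough slack, but strictly speaking you prove a $\tilde O(\cdot)$ version of the lemma. (ii) Your ``main obstacle'' is not actually an obstacle: the near-orthogonality expansion of $\|\sum_i c_i\bxi_i\|_2^2$ is unnecessary, since the crude triangle-inequality bound $(\sum_i|\ell_i'^{(t)}|)^2\le n\sum_i|\ell_i'^{(t)}|\le n^2L_S(\Wb^{(t)})$ already cancels the $n^{-2}$ prefactor exactly; the orthogonality only buys an additional (unneeded) factor of $n^{-1}$ on the noise term. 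Your handling of it is nonetheless correct, so this is a matter of economy rather than validity.
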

\begin{proof}[Proof of Lemma~\ref{lm: gradient upbound}]
We first prove that 
\begin{align}
-\ell'\big(y_{i}f(\Wb^{(t)}, \xb_{i})) \cdot \|\nabla f(\Wb^{(t)}, \xb_{i}\big)\|_{F}^2 = O(\max\{\|\bmu\|_{2}^{2}, \sigma_{p}^{2}d\}).  \label{eq: exptail2}  
\end{align}
Without loss of generality,  
we suppose that $y_{i} = 1$ and $\xb_{i} = [\bmu^{\top},\bxi_{i}]$. Then we have that 
\begin{align*}
\|\nabla f(\Wb^{(t)}, \xb_{i}) \|_{F}&\leq \frac{1}{m}\sum_{j,r}\bigg\|  \big[\sigma'(\la\wb_{j,r}^{(t)},\bmu\ra)\bmu + \sigma'(\la\wb_{j,r}^{(t)}, \bxi_i\ra)\bxi_i\big] \bigg\|_{2}\\
&\leq \frac{1}{m}\sum_{j,r}\sigma'(\la\wb_{j,r}^{(t)}, \bmu\ra)\|\bmu\|_{2} +   \frac{1}{m}\sum_{j,r}\sigma'(\la\wb_{j,r}^{(t)}, \bxi_{i}\ra)\|\bxi_{i}\|_{2}\\
&\leq 2q\bigg[F_{+1}(\Wb^{(t)}_{+1}, \xb_{i})\bigg]^{(q-1)/q}\max\{\|\bmu\|_{2}, 2\sigma_{p}\sqrt{d}\} \\
&\qquad + 2q\bigg[F_{-1}(\Wb^{(t)}_{-1}, \xb_{i})\bigg]^{(q-1)/q}\max\{\|\bmu\|_{2}, 2\sigma_{p}\sqrt{d}\}\\
&\leq 2q\bigg[F_{+1}(\Wb^{(t)}_{+1}, \xb_{i})\bigg]^{(q-1)/q}\max\{\|\bmu\|_{2}, 2\sigma_{p}\sqrt{d}\} + 2q\max\{\|\bmu\|_{2}, 2\sigma_{p}\sqrt{d}\} ,
\end{align*}
where the first and second inequalities are by triangle inequality, the third inequality is by Jensen's inequality and Lemma~\ref{lemma:data_innerproducts}, and the last inequality is due to Lemma~\ref{lm: F-yi}.
Denote $A = F_{+1}(\Wb_{+1}^{(t)}, \xb_{i})$. Then we have that $A \geq 0$, and besides, $F_{-1}(\Wb^{(t)}_{-1}, \xb_{i}) \leq 1$ by Lemma~\ref{lm: F-yi}. Then we have that
\begin{align*}
&-\ell'\big(y_{i}f(\Wb^{(t)},\xb_{i})\big) \cdot \|\nabla f(\Wb^{(t)}, \xb_{i})\|_{F}^2\\
&\qquad\leq -\ell'(A - 1)\bigg(2q\cdot A^{(q-1)/q}\max\{\|\bmu\|_{2}, 2\sigma_{p}\sqrt{d}\} + 2q\cdot \max\{\|\bmu\|_{2}, 2\sigma_{p}\sqrt{d}\}\bigg)^{2}\\
&\qquad = -4q^{2}\ell'(A-1)(A^{(q-1)/q} + 1)^{2}\cdot \max\{\|\bmu\|_{2}^{2},4 \sigma_{p}^{2}d\}\\
&\qquad\leq \big(\max_{z>0}-4q^{2}\ell'(z-1)(z^{(q-1)/q} + 1)^{2}\big)\max\{\|\bmu\|_{2}^{2},4 \sigma_{p}^{2}d\}\\
&\qquad \overset{(i)}{=} O(\max\{\|\bmu\|_{2}^{2}, \sigma_{p}^{2}d\}),
\end{align*}
where (i) is by $\max_{z\geq 0}-4q^{2}\ell'(z-1)(z^{(q-1)/q} + 1)^{2}< \infty$ because $\ell'$ has an exponentially decaying tail. 
Now we can upper bound the gradient norm $\|\nabla L_{S}(\Wb^{(t)})\|_{F}$ as follows,
\begin{align*}
\|\nabla L_{S}(\Wb^{(t)})\|_{F}^{2} &\leq \bigg[\frac{1}{n}\sum_{i=1}^{n}\ell'\big(y_{i}f(\Wb^{(t)},\xb_{i})\big)\|\nabla f(\Wb^{(t)}, \xb_{i})\|_{F}\bigg]^{2}\\
&\leq \bigg[\frac{1}{n}\sum_{i=1}^{n}\sqrt{-O(\max\{\|\bmu\|_{2}^{2}, \sigma_{p}^{2}d\})\ell'\big(y_{i}f(\Wb^{(t)},\xb_{i})\big)}\bigg]^{2}\\
&\leq O(\max\{\|\bmu\|_{2}^{2}, \sigma_{p}^{2}d\}) \cdot \frac{1}{n}\sum_{i=1}^{n}-\ell'\big(y_{i}f(\Wb^{(t)},\xb_{i})\big)\\
&\leq O(\max\{\|\bmu\|_{2}^{2}, \sigma_{p}^{2}d\}) L_{S}(\Wb^{(t)}),
\end{align*}
where the first inequality is by triangle inequality, the second inequality is by \eqref{eq: exptail2}, the third inequality is by Cauchy-Schwartz inequality and the last inequality is due to the property of the cross entropy loss $-\ell' \leq \ell$.
\end{proof}

\section{Signal Learning}
In this section, we consider the signal learning case under the condition that $n\|\bmu\|_{2}^{q} \geq \tilde{\Omega}(\sigma_{p}^{q}(\sqrt{d})^{q})$. We remind the readers that the proofs in this section are based on the results in Section~\ref{sec: initial}, which hold with high probability.

\subsection{First stage}




\begin{lemma}[Restatement of Lemma~\ref{lemma:phase1_main_sketch}]\label{lemma:phase1_main}
Under the same conditions as Theorem~\ref{thm:signal_learning_main}, in particular if we choose
\begin{align}
n \cdot \mathrm{SNR}^{q} \geq C\log(6/\sigma_{0}\|\bmu\|_{2})2^{2q+6}[4\log(8mn/\delta)]^{(q-1)/2},\label{eq:explicit condition}
\end{align}
where $C = O(1)$ is a positive constant, there exists time 
\begin{align*}
T_1 = \frac{C\log(6/\sigma_{0}\|\bmu\|_{2})2^{q+1}m}{\eta\sigma_{0}^{q-2}\|\bmu\|_{2}^{q}}
\end{align*}
such that 
\begin{itemize}
\item $\max_{ r}\gamma_{j, r}^{(T_{1})} \geq 2$ for $j\in \{\pm 1\}$.
\item $|\rho_{j,r,i}^{(t)}| \leq \sigma_0 \sigma_p \sqrt{d} / 2$ for all $j\in \{\pm 1\}, r\in[m]$, $i \in [n]$ and $0 \leq t \leq T_{1}$. 
\end{itemize}
\end{lemma}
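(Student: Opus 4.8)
The plan is to run a coupled induction over $t\in[0,T_1]$ that simultaneously (i) keeps every noise coefficient $\rho_{j,r,i}^{(t)}$ at the initialization scale $\tfrac12\sigma_0\sigma_p\sqrt d$ and (ii) shows that, for each class $j$, the neuron that is most aligned with $\bmu$ at initialization drives $\max_r\gamma_{j,r}^{(t)}$ past $2$ before iteration $T_1$. The two parts are almost decoupled: the noise bound uses only $|\ell_i'^{(t)}|\le 1$, while the signal-growth argument only needs a constant lower bound on $-\ell_i'^{(t)}$ for the $i$ with $y_i=j$, and that holds as long as $\max_r\gamma_{j,r}^{(t)}$ has not yet reached $2$.

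First I would control the noise coefficients. Fix $(j,r,i)$ and assume the induction hypothesis $|\rho_{j,r,i'}^{(s)}|\le\sigma_0\sigma_p\sqrt d/2$ for all $i'$ and all $s\le t$. Using Lemma~\ref{lm:oppositebound} to express $\langle\wb_{j,r}^{(t)},\bxi_i\rangle$ in terms of $\rho_{j,r,i}^{(t)}$ up to the cross-term error $8n\sqrt{\log(4n^2/\delta)/d}\,\alpha$, together with Lemma~\ref{lemma:initialization_norms}, Lemma~\ref{lemma:data_innerproducts}, and $|\ell_i'^{(t)}|\le1$, the updates \eqref{eq:update_zeta2}--\eqref{eq:update_omega2} give a one-step change of $|\rho_{j,r,i}^{(t)}|$ of order $\eta n^{-1}m^{-1}q\,[\tilde O(\sigma_0\sigma_p\sqrt d)]^{q-1}\sigma_p^2 d$ (Condition~\ref{condition:d_sigma0_eta} makes both $8n\sqrt{\log(4n^2/\delta)/d}\,\alpha$ and $\sigma_0\sigma_p\sqrt d$ low-order). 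Summing over the $T_1=\tilde\Theta(m\eta^{-1}\sigma_0^{2-q}\|\bmu\|_2^{-q})$ iterations, the total growth is $\tilde O\big(n^{-1}(\sigma_p\sqrt d)^q\|\bmu\|_2^{-q}\big)\cdot\sigma_0\sigma_p\sqrt d=\tilde O(n^{-1}\mathrm{SNR}^{-q})\cdot\sigma_0\sigma_p\sqrt d$, which is at most $\sigma_0\sigma_p\sqrt d/2$ exactly under the explicit condition \eqref{eq:explicit condition}. This closes the induction and proves the second bullet; note it never invokes a lower bound on $|\ell_i'^{(t)}|$.

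Next I would establish signal growth. For each $j$ pick $r_j^*\in\arg\max_r j\langle\wb_{j,r}^{(0)},\bmu\rangle$; Lemma~\ref{lemma:initialization_norms} gives $j\langle\wb_{j,r_j^*}^{(0)},\bmu\rangle\ge\sigma_0\|\bmu\|_2/2>0$. Let $T_1^{(j)}$ be the first iteration with $\max_r\gamma_{j,r}^{(t)}\ge2$; on $[0,T_1^{(j)})$ all $\gamma_{j,r}^{(t)}<2$, so with the noise bound just proved and Lemma~\ref{lm: Fyi} we get $F_j(\Wb_j^{(t)},\xb_i)=O(1)$ for every $i$ with $y_i=j$, hence $y_i f(\Wb^{(t)},\xb_i)\le F_j(\Wb_j^{(t)},\xb_i)=O(1)$ and therefore $-\ell_i'^{(t)}\ge c_0$ for an absolute constant $c_0>0$. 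Substituting into \eqref{eq:update_gamma2}, retaining only the (nonnegative) terms with $y_i=j$, and using $\sigma'(z)=qz_+^{q-1}$ with $|\{i:y_i=j\}|\ge n/4$ (Lemma~\ref{lemma:numberofdata}), I obtain the scalar recursion
\[
x_{t+1}\ \ge\ x_t+\frac{c_0 q\eta\|\bmu\|_2^2}{4m}\,x_t^{q-1},\qquad x_t:=j\langle\wb_{j,r_j^*}^{(0)},\bmu\rangle+\gamma_{j,r_j^*}^{(t)},\quad x_0\ge\tfrac12\sigma_0\|\bmu\|_2 .
\]
This is a discrete analogue of the finite-time blow-up of $\dot x=cx^{q-1}$; a tortoise argument (traverse the geometrically growing windows $[2^k x_0,2^{k+1}x_0]$, each in $O\big((2^k x_0)^{2-q}m/(c_0 q\eta\|\bmu\|_2^2)\big)$ steps) shows that $x_t$, hence $\gamma_{j,r_j^*}^{(t)}$, reaches $2$ within the stated number $T_1$ of iterations, so $T_1^{(j)}\le T_1$. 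Since every $\gamma_{j,r}^{(t)}$ is nondecreasing (the increment in \eqref{eq:update_gamma1} has the sign of $-\ell_i'^{(t)}>0$), it follows that $\max_r\gamma_{j,r}^{(T_1)}\ge\max_r\gamma_{j,r}^{(T_1^{(j)})}\ge2$ for both $j$, which is the first bullet.

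The main obstacle is the super-linear discrete recursion together with keeping the two inductions mutually consistent: one must ensure that on $[0,T_1^{(j)})$ \emph{every} coefficient $\gamma_{j,r}^{(t)}$ — not just the leading one — stays below the level at which the estimate $-\ell_i'^{(t)}=\Theta(1)$ (for $y_i=j$) would fail, which is precisely why $T_1^{(j)}$ is defined through $\max_r$, and that the one-step jump of $\gamma_{j,r}^{(t)}$ is itself $o(1)$ (via $\eta=\tilde O(\|\bmu\|_2^{-2})$) so $\gamma_{j,r_j^*}$ does not overshoot $2$ by more than a constant. The remaining work is bookkeeping: checking that the error terms ($8n\sqrt{\log(4n^2/\delta)/d}\,\alpha$, the initialization inner products, the $\Theta(\sigma_p^2 d)$ fluctuations of $\|\bxi_i\|_2^2$) are dominated under Condition~\ref{condition:d_sigma0_eta}, and that the noise-coefficient bound closes with the explicit constants appearing in \eqref{eq:explicit condition}.
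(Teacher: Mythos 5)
Your proposal is correct and follows essentially the same route as the paper: an induction bounding $\Psi^{(t)}=\max_{j,r,i}|\rho_{j,r,i}^{(t)}|$ using only $|\ell_i'^{(t)}|\le 1$ (with the SNR condition ensuring the accumulated growth over $T_1$ steps stays below $\sigma_0\sigma_p\sqrt d/2$), followed by a signal-growth argument that uses $-\ell_i'^{(t)}\ge c_0$ while $\max_r\gamma_{j,r}^{(t)}\le 2$ to drive the recursion $x_{t+1}\ge x_t+\Theta(\eta m^{-1}\|\bmu\|_2^2)x_t^{q-1}$ from $x_0\ge\sigma_0\|\bmu\|_2/2$ up to a constant. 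The only (harmless) variation is in solving that recursion: the paper linearizes via $x_t^{q-1}\ge x_0^{q-2}x_t$ to get exponential growth over $\log(1/\sigma_0\|\bmu\|_2)$ rounds, whereas your doubling-window argument exploits the super-linearity directly and gives the same (in fact slightly sharper) bound on $T_1$.
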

\begin{proof}[Proof of Lemma~\ref{lemma:phase1_main}]
Let 
\begin{align}
    T_1^{+} = \frac{nm\eta^{-1}\sigma_{0}^{2-q}\sigma_{p}^{-q}d^{-q/2}}{2^{q+4}q[4\log(8mn/\delta)]^{(q-1)/2}}. \label{eq:T1upper}
\end{align}
We first prove the second bullet. 
Define $\Psi^{(t)} = \max_{j,r,i} |\rho_{j,r,i}^{(t)}|=  \max_{j,r,i}\{ \zeta_{j,r,i}^{(t)},  -\omega_{j,r,i}^{(t)}\} $. We use induction to show that
\begin{align}
    \Psi^{(t)} \leq \sigma_0 \sigma_p \sqrt{d} / 2 \label{eq:Psi_induction}
\end{align}
for all $0 \leq t \leq T_{1}^{+}$. By definition, clearly we have $\Psi^{(0)} = 0$. Now suppose that there exists some $\tilde{T} \leq T_1^+$ such that \eqref{eq:Psi_induction} holds for $0 < t \leq \tilde{T}-1$. Then by \eqref{eq:update_zeta2} and \eqref{eq:update_omega2} we have
\begin{align*}
    \Psi^{(t+1)} &\leq \Psi^{(t)} + \max_{j,r,i}\bigg\{\frac{\eta}{nm} \cdot |\ell_i'^{(t)}|\cdot  \sigma'\Bigg(\la\wb_{j,r}^{(0)}, \bxi_{i}\ra + \sum_{ i'= 1 }^n \Psi^{(t)} \cdot \frac{ |\la \bxi_{i'}, \bxi_i \ra|}{ \| \bxi_{i'} \|_2^2} + \sum_{ i' = 1}^n \Psi^{(t)} \cdot \frac{|\la \bxi_{i'}, \bxi_i \ra|}{ \| \bxi_{i'} \|_2^2} \Bigg)\cdot \| \bxi_{i} \|_2^2 \bigg\} \\
    &\leq \Psi^{(t)} + \max_{j,r,i}\bigg\{\frac{\eta}{nm} \cdot  \sigma'\Bigg(\la\wb_{j,r}^{(0)}, \bxi_{i}\ra + 2\cdot \sum_{ i'= 1 }^n \Psi^{(t)} \cdot \frac{ |\la \bxi_{i'}, \bxi_i \ra| }{ \| \bxi_{i'} \|_2^2}  \Bigg)\cdot \| \bxi_{i} \|_2^2\bigg\} \\
    &= \Psi^{(t)} + \max_{j,r,i}\bigg\{\frac{\eta}{nm} \cdot  \sigma'\Bigg(\la\wb_{j,r}^{(0)}, \bxi_{i}\ra + 2\Psi^{(t)}  + 2\cdot \sum_{ i' \neq i }^n \Psi^{(t)} \cdot \frac{ |\la \bxi_{i'}, \bxi_i \ra| }{ \| \bxi_{i'} \|_2^2} \Bigg)\cdot \| \bxi_{i} \|_2^2\bigg\} \\
    &\leq \Psi^{(t)} + \frac{\eta q}{nm} \cdot  \Bigg[2\cdot \sqrt{ \log(8mn/\delta)} \cdot \sigma_0 \sigma_p \sqrt{d} + \Bigg( 2 + \frac{4n \sigma_p^2 \cdot \sqrt{d \log(4n^2 / \delta)} }{ \sigma_p^2 d /2} \Bigg) \cdot \Psi^{(t)}  \Bigg]^{q-1}\cdot 2 \sigma_p^2 d\\
    &\leq \Psi^{(t)} + \frac{\eta q}{nm} \cdot  \big(2\cdot \sqrt{ \log(8mn/\delta)} \cdot \sigma_0 \sigma_p \sqrt{d} + 4 \Psi^{(t)}  \big)^{q-1}\cdot 2 \sigma_p^2 d\\
    &\leq \Psi^{(t)} + \frac{\eta q}{nm} \cdot \big(4\cdot \sqrt{ \log(8mn/\delta)} \cdot \sigma_0 \sigma_p \sqrt{d} \big)^{q-1} \cdot 2 \sigma_p^2 d,
 \end{align*}
where the second inequality is by $|\ell_i'^{(t)}| \leq 1$, the third inequality is due to Lemmas~\ref{lemma:data_innerproducts} and \ref{lemma:initialization_norms}, the fourth inequality follows by the condition that $d \geq 16 n^2 \log (4n^2/\delta)$, and the last inequality follows by the induction hypothesis \eqref{eq:Psi_induction}. Taking a telescoping sum over $t=0,1,\ldots, \tilde{T}-1$ then gives
\begin{align*}
    \Psi^{(\tilde{T})} 
    &\leq \tilde{T} \cdot\frac{\eta q}{nm} \cdot \big(4\cdot \sqrt{ \log(8mn/\delta)} \cdot \sigma_0 \sigma_p \sqrt{d} \big)^{q-1} \cdot 2 \sigma_p^2 d\\
    &\leq T_{1}^+\cdot\frac{\eta q}{nm} \cdot \big(4\cdot \sqrt{ \log(8mn/\delta)} \cdot \sigma_0 \sigma_p \sqrt{d} \big)^{q-1} \cdot 2 \sigma_p^2 d\\
     &\leq \frac{\sigma_0 \sigma_p \sqrt{d} }{2},
\end{align*}
where the second inequality follows by $\tilde{T} \leq T_1^+$ in our induction hypothesis. Therefore, by induction, we have $ \Psi^{(t)} \leq \sigma_0 \sigma_p \sqrt{d} / 2$ for all $t \leq T_{1}^{+}$. 

Now, without loss of generality, let us consider $j = 1$ first. Denote by $T_{1,1}$ the last time for $t$ in the period $[0, T_1^{+}]$ satisfying that $\max_{r}\gamma_{1,r}^{(t)}\leq 2$. Then for $t \leq T_{1,1}$, $\max_{j,r,i}\{ |\rho_{j,r,i}^{(t)}|\} = O(\sigma_{0}\sigma_{p}\sqrt{d})= O(1)$ and $\max_{r}\gamma_{1,r}^{(t)} \leq 2$. Therefore, by Lemmas~\ref{lm: F-yi} and~\ref{lm: Fyi}, we know that $F_{-1}(\Wb_{-1}^{(t)},\xb_{i}), F_{+1}(\Wb_{+1}^{(t)},\xb_{i}) = O(1)$ for all $i$ with $y_{i} = 1$. Thus there exists a positive constant $C_{1}$ such that $-\ell'^{(t)}_{i} \geq C_{1}$ for all $i$ with $y_{i} = 1$.

By \eqref{eq:update_gamma2}, for $t\leq T_{1,1}$ we have
\begin{align*}
    \gamma_{1,r}^{(t+1)} &= \gamma_{1,r}^{(t)} - \frac{\eta}{nm} \cdot \sum_{i=1}^n \ell_i'^{(t)} \cdot \sigma'( y_{i} \cdot  \la\wb_{1,r}^{(0)}, \bmu \ra +  y_{i}  \cdot \gamma_{1,r}^{(t)} )\cdot \|\bmu\|_{2}^{2}\\
    &\geq \gamma_{1,r}^{(t)} + \frac{C_{1}\eta}{nm} \cdot \sum_{y_i=1}  \sigma'(  \la\wb_{1,r}^{(0)}, \bmu \ra +   \gamma_{1,r}^{(t)} )\cdot \|\bmu\|_{2}^{2}.
\end{align*}
Denote $\hat{\gamma}_{1,r}^{(t)} = \gamma_{1,r}^{(t)} + \la \wb_{1,r}^{(0)}, \bmu \ra $ and let $A^{(t)} = \max_{r}\hat{\gamma}_{1,r}^{(t)}$. Then we have 
\begin{align*}
   A^{(t+1)}&\geq A^{(t)} + \frac{C_{1}\eta}{nm} \cdot \sum_{y_i=1} \sigma'(A^{(t)})\cdot \|\bmu\|_{2}^{2}\\
    &\geq A^{(t)} +\frac{C_{1} \eta q\|\bmu\|_{2}^{2}}{4m}  \big[A^{(t)}\big]^{q-1}\\
    &\geq \bigg(1 + \frac{C_{1} \eta q\|\bmu\|_{2}^{2}}{4m}\big[A^{(0)}\big]^{q-2}\bigg) A^{(t)}\\
    &\geq \bigg(1 + \frac{C_{1}\eta q\sigma_{0}^{q-2}\|\bmu\|_{2}^{q}}{2^{q}m} \bigg) A^{(t)},
\end{align*}
where the second inequality is by the lower bound on the number of positive data  in Lemma~\ref{lemma:numberofdata}, the third inequality is due to the fact that $A^{(t)}$ is an increasing sequence, and the last inequality follows by $A^{(0)} = \max_{r} \la \wb_{1,r}^{(0)}, \bmu \ra \geq \sigma_0 \|\bmu\|_2/2$ proved in Lemma~\ref{lemma:initialization_norms}. Therefore, the sequence $A^{(t)}$ will exponentially grow and we have that 
\begin{align*}
   A^{(t)}&\geq\bigg(1 + \frac{C_{1}\eta q\sigma_{0}^{q-2}\|\bmu\|_{2}^{q}}{2^{q}m} \bigg)^{t} A^{(0)}\geq \exp\bigg(\frac{C_{1}\eta q\sigma_{0}^{q-2}\|\bmu\|_{2}^{q}}{2^{q+1}m}t\bigg)A^{(0)}\geq \exp\bigg(\frac{C_{1}\eta q\sigma_{0}^{q-2}\|\bmu\|_{2}^{q}}{2^{q+1}m}t\bigg)\frac{\sigma_{0}\|\bmu\|_{2}}{2},
\end{align*}
where the second inequality is due to the fact that $1+z \geq \exp(z/2)$ for $z \leq 2$ and our condition of $\eta$ and $\sigma_{0}$ listed in Condition~\ref{condition:d_sigma0_eta}, and the last inequality follows by Lemma~\ref{lemma:initialization_norms} and $A^{(0)} = \max_{r} \la \wb_{1,r}^{(0)}, \bmu \ra$. Therefore, $A^{(t)}$ will reach $3$ within $T_{1} = \frac{\log(6/\sigma_{0}\|\bmu\|_{2})2^{q+1}m}{C_{1}\eta q\sigma_{0}^{q-2}\|\bmu\|_{2}^{q}}$ iterations. Since $\max_{r}\gamma_{1,r}^{(t)} \geq A^{(t)} - \max_{r}|\la \wb_{1,r}^{(0)}, \bmu \ra| \geq A^{(t)} - 1$,  $\max_{r}\gamma_{1,r}^{(t)}$ will reach $2$ within $T_{1}$ iterations. We can next verify that 
\begin{align*}
T_{1} = \frac{\log(6/\sigma_{0}\|\bmu\|_{2})2^{q+1}m}{C_{1}\eta q\sigma_{0}^{q-2}\|\bmu\|_{2}^{q}} \leq \frac{nm\eta^{-1}\sigma_{0}^{2-q}\sigma_{p}^{-q}d^{-q/2}}{2^{q+5}q[4\log(8mn/\delta)]^{(q-1)/2}} =  T_{1}^{+}/2 ,
\end{align*}
where the inequality holds due to our SNR condition in \eqref{eq:explicit condition}. Therefore, by the definition of $T_{1,1}$, we have $T_{1,1} \leq T_{1} \leq T_{1}^{+}/2$, where we use the non-decreasing property of $\gamma$. The proof for $j=-1$ is similar, and we can prove that $\max_{r}\gamma_{-1,r}^{(T_{1,-1})} \geq 2$ while $T_{1,-1} \leq T_{1} \leq T_{1}^{+}/2$, which completes the proof.
\end{proof}

\subsection{Second Stage}

By the results we get in the first stage we know that 
\begin{align*}
\wb_{j,r}^{(T_{1})} &= \wb_{j,r}^{(0)} + j \cdot \gamma_{j,r}^{(T_{1})} \cdot \frac{\bmu}{\|\bmu\|_{2}^{2}} + \sum_{ i = 1}^n \zeta_{j,r,i}^{(T_{1})} \cdot \frac{\bxi_{i}}{\|\bxi_{i}\|_{2}^{2}} + \sum_{ i = 1}^n \omega_{j,r,i}^{(T_{1})} \cdot \frac{\bxi_{i}}{\|\bxi_{i}\|_{2}^{2}}.
\end{align*}
And at the beginning of the second stage, we have following property holds:
\begin{itemize}
\item $\max_{r}\gamma_{j, r}^{(T_{1})} \geq 2, \forall j \in \{\pm 1\}$. 
\item $\max_{j,r,i}|\rho_{j,r,i}^{(T_{1})}| \leq \hat{\beta}$ where $\hat{\beta} = \sigma_0 \sigma_p \sqrt{d} / 2$. 
\end{itemize}
Lemma~\ref{lemma:coefficient_iterative} implies that the learned feature $\gamma_{j,r}^{(t)}$ will not get worse, i.e., for $t \geq T_{1}$, we have that $\gamma_{j,r}^{(t+1)} \geq \gamma_{j,r}^{(t)} $, and therefore $\max_{ r}\gamma_{j, r}^{(t)} \geq 2$. Now we choose $\Wb^{*}$ as follows:
\begin{align*}
\wb^{*}_{j,r} = \wb_{j,r}^{(0)} + 2qm\log(2q/\epsilon) \cdot j \cdot  \frac{\bmu}{\|\bmu\|_{2}^{2}}. 
\end{align*}
Based on the above definition of $\Wb^{*}$, we have the following lemma.
\begin{lemma}\label{lm:distance1}
Under the same conditions as Theorem~\ref{thm:noise_memorization_main}, we have that $\|\Wb^{(T_{1})} - \Wb^{*}\|_{F} \leq \tilde{O}(m^{3/2}\|\bmu\|_{2}^{-1})$.
\end{lemma}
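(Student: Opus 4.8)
The plan is to bound the two pieces of the triangle inequality $\|\Wb^{(T_1)}-\Wb^*\|_F \le \|\Wb^{(T_1)}-\Wb^{(0)}\|_F + \|\Wb^{(0)}-\Wb^*\|_F$ separately, and to observe that the dominant contribution is the second one, which is exactly of order $m^{3/2}\|\bmu\|_2^{-1}$, while the first is only $\tilde O(m^{1/2}\|\bmu\|_2^{-1})$.

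First I would handle $\|\Wb^{(0)}-\Wb^*\|_F$. By the definition of $\Wb^*$ given just before the lemma, $\wb_{j,r}^* - \wb_{j,r}^{(0)} = 2qm\log(2q/\epsilon)\cdot j\cdot \|\bmu\|_2^{-2}\bmu$ for every $(j,r)$, so each of the $2m$ filter blocks has the same Euclidean norm $2qm\log(2q/\epsilon)\|\bmu\|_2^{-1}$. Hence $\|\Wb^{(0)}-\Wb^*\|_F = \sqrt{2m}\cdot 2qm\log(2q/\epsilon)\|\bmu\|_2^{-1} = \tilde O(m^{3/2}\|\bmu\|_2^{-1})$. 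The point that requires care here is to compute the Frobenius norm directly, summing $2m$ equal-norm blocks inside a square root, rather than via a naive triangle inequality over the $2m$ blocks; the latter would give the looser $m^2\|\bmu\|_2^{-1}$ and the claimed rate would be lost.

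Next I would bound $\|\Wb^{(T_1)}-\Wb^{(0)}\|_F$ using the signal-noise decomposition \eqref{eq:w_decomposition}. Orthogonality of $\bmu$ to the $\bxi_i$'s is not needed for an upper bound, so the triangle inequality gives $\|\wb_{j,r}^{(T_1)}-\wb_{j,r}^{(0)}\|_2 \le \gamma_{j,r}^{(T_1)}\|\bmu\|_2^{-1} + \sum_{i=1}^n\big(|\zeta_{j,r,i}^{(T_1)}| + |\omega_{j,r,i}^{(T_1)}|\big)\|\bxi_i\|_2^{-1}$. Since $T_1 = \tilde O(\eta^{-1}m\sigma_0^{2-q}\|\bmu\|_2^{-q})$ from Lemma~\ref{lemma:phase1_main} has the admissible polynomial form, Proposition~\ref{Prop:noise} applies and yields $\gamma_{j,r}^{(T_1)}\le\alpha = 4\log T^* = \tilde O(1)$; Lemma~\ref{lemma:phase1_main} gives $|\rho_{j,r,i}^{(T_1)}|\le \sigma_0\sigma_p\sqrt d/2$ (and exactly one of $\zeta_{j,r,i}^{(T_1)},\omega_{j,r,i}^{(T_1)}$ is nonzero for each $i$); and Lemma~\ref{lemma:data_innerproducts} gives $\|\bxi_i\|_2 \ge \sigma_p\sqrt{d/2}$. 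Combining these, $\|\wb_{j,r}^{(T_1)}-\wb_{j,r}^{(0)}\|_2 \le \tilde O(\|\bmu\|_2^{-1}) + O(n\sigma_0)$, hence $\|\Wb^{(T_1)}-\Wb^{(0)}\|_F \le \sqrt{2m}\big[\tilde O(\|\bmu\|_2^{-1}) + O(n\sigma_0)\big]$. Finally I would invoke the upper bound on $\sigma_0$ in Condition~\ref{condition:d_sigma0_eta}, which forces $n\sigma_0 \le \tilde O(\|\bmu\|_2^{-1})$ because $\sigma_0 \le \tilde O(m^{-2/(q-2)}n^{-1})\|\bmu\|_2^{-1}$ and $m^{-2/(q-2)}\le 1$ for $q>2$; this yields $\|\Wb^{(T_1)}-\Wb^{(0)}\|_F = \tilde O(m^{1/2}\|\bmu\|_2^{-1})$. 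Adding the two bounds gives $\|\Wb^{(T_1)}-\Wb^*\|_F = \tilde O(m^{3/2}\|\bmu\|_2^{-1})$.

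I do not anticipate a genuine obstacle here: this lemma is essentially a bookkeeping exercise once the coefficient bounds of Proposition~\ref{Prop:noise} and Lemma~\ref{lemma:phase1_main} are in hand. The only two places demanding attention are (i) retaining the $\sqrt{2m}$ rather than $2m$ factor when passing from per-block norms to the Frobenius norm, since this is precisely what produces the $m^{3/2}$ (not $m^2$) scaling, and (ii) verifying that the noise-coefficient contributions $\zeta,\omega$ are genuinely lower order, which is exactly what the initialization-scale part of Condition~\ref{condition:d_sigma0_eta} is designed to guarantee.
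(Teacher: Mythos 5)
Your proof is correct and follows essentially the same route as the paper: split via the triangle inequality into $\|\Wb^{(T_1)}-\Wb^{(0)}\|_F+\|\Wb^{(0)}-\Wb^*\|_F$, bound the first term through the signal-noise decomposition using Proposition~\ref{Prop:noise}, Lemma~\ref{lemma:phase1_main} and the $\sigma_0$ condition, and compute the second term directly from the definition of $\Wb^*$, which supplies the dominant $\tilde O(m^{3/2}\|\bmu\|_2^{-1})$ contribution. Your quadrature accounting of the $2m$ blocks in the first term is marginally tighter than the paper's plain sum ($\tilde O(m^{1/2})$ vs.\ $\tilde O(m)$ times $\|\bmu\|_2^{-1}$), but both are absorbed into the same final bound.
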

\begin{proof}[Proof of Lemma~\ref{lm:distance1}] We have
\begin{align*}
\|\Wb^{(T_{1})} - \Wb^{*}\|_{F} &\leq \|\Wb^{(T_{1})} - \Wb^{(0)}\|_{F} + \|\Wb^{(0)} - \Wb^{*}\|_{F} \\
&\leq\sum_{j,r} \frac{\gamma_{j,r}^{(T_{1})}}{\|\bmu\|_{2}} + \sum_{j,r,i}\frac{|\zeta_{j,r,i}^{(T_{1})}|}{\|\bxi_{i}\|_{2}} + \sum_{j,r,i}\frac{|\omega_{j,r,i}^{(T_{1})}|}{\|\bxi_{i}\|_{2}} + O(m^{3/2}\log(1/\epsilon))\|\bmu\|_{2}^{-1}\\
&\leq \tilde{O}(m\|\bmu\|^{-1}) + O(nm\sigma_{0}) +  O(m^{3/2}\log(1/\epsilon))\|\bmu\|_{2}^{-1}\\
&\leq \tilde{O}(m^{3/2}\|\bmu\|_{2}^{-1}),
\end{align*}
where the first inequality is by triangle inequality, the second inequality is by our decomposition of $\Wb^{(T_{1})}$ and the definition of $\Wb^{*}$, the third inequality is by Proposition~\ref{Prop:noise} and Lemma~\ref{lemma:phase1_main}, and the last inequality is by our condition of $\sigma_{0}$ in Condition~\ref{condition:d_sigma0_eta}.
\end{proof}

\begin{lemma}\label{lm: Gradient Stable}
Under the same conditions as Theorem~\ref{thm:signal_learning_main},  we have that $y_{i}\la \nabla f(\Wb^{(t)}, \xb_{i}), \Wb^{*} \ra \geq q\log(2q/\epsilon)$ for all $i \in [n]$ and $ T_{1} \leq t\leq T^{*}$.
\end{lemma}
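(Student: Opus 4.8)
The plan is to evaluate $y_i\langle\nabla f(\Wb^{(t)},\xb_i),\Wb^{*}\rangle$ in closed form and isolate the contribution of the signal part of $\wb^{*}_{j,r}$. Since $\nabla_{\wb_{j,r}}f(\Wb^{(t)},\xb_i)=(j/m)\big[\sigma'(\langle\wb_{j,r}^{(t)},y_i\bmu\rangle)\,y_i\bmu+\sigma'(\langle\wb_{j,r}^{(t)},\bxi_i\rangle)\,\bxi_i\big]$ and $\bmu\perp\bxi_i$, plugging in $\wb^{*}_{j,r}=\wb_{j,r}^{(0)}+2qm\log(2q/\epsilon)\cdot j\cdot\|\bmu\|_2^{-2}\bmu$ and using $j^2y_i^2=1$ on the cross term gives
\begin{align*}
y_i\langle\nabla f(\Wb^{(t)},\xb_i),\Wb^{*}\rangle
&=\underbrace{2q\log(2q/\epsilon)\sum_{j,r}\sigma'(\langle\wb_{j,r}^{(t)},y_i\bmu\rangle)}_{=:M}\\
&\quad+\underbrace{\frac1m\sum_{j,r}j\,\sigma'(\langle\wb_{j,r}^{(t)},y_i\bmu\rangle)\langle\wb_{j,r}^{(0)},\bmu\rangle}_{=:I_1}
+\underbrace{\frac{y_i}{m}\sum_{j,r}j\,\sigma'(\langle\wb_{j,r}^{(t)},\bxi_i\rangle)\langle\wb_{j,r}^{(0)},\bxi_i\rangle}_{=:I_2},
\end{align*}
so it remains to show $M+I_1+I_2\ge q\log(2q/\epsilon)$ for $T_1\le t\le T^{*}$.

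For the dominant term, Lemma~\ref{lemma:phase1_main} together with the monotonicity of $\gamma_{j,r}^{(t)}$ guarantees, for every $t\ge T_1$, an index $r^{\star}$ with $\gamma_{y_i,r^{\star}}^{(t)}\ge 2$; by Lemma~\ref{lm: mubound} and Lemma~\ref{lemma:initialization_norms} we get $\langle\wb_{y_i,r^{\star}}^{(t)},y_i\bmu\rangle=\langle\wb_{y_i,r^{\star}}^{(0)},y_i\bmu\rangle+\gamma_{y_i,r^{\star}}^{(t)}\ge 2-\sqrt{2\log(8m/\delta)}\,\sigma_0\|\bmu\|_2\ge 1$ by the bound \eqref{eq: verifyy} on $\sigma_0$, hence $\sigma'(\langle\wb_{y_i,r^{\star}}^{(t)},y_i\bmu\rangle)\ge q$, and since $\sigma'\ge 0$ everywhere, $\sum_{j,r}\sigma'(\langle\wb_{j,r}^{(t)},y_i\bmu\rangle)\ge q$. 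The term $I_1$ I would absorb into $M$ neuron by neuron: because $|\langle\wb_{j,r}^{(0)},\bmu\rangle|\le\sqrt{2\log(8m/\delta)}\,\sigma_0\|\bmu\|_2\le 1\le q\log(2q/\epsilon)$, each $(j,r)$-summand of $M$ plus the corresponding summand of $I_1$ is at least $q\log(2q/\epsilon)\,\sigma'(\langle\wb_{j,r}^{(t)},y_i\bmu\rangle)$, so $M+I_1\ge q\log(2q/\epsilon)\sum_{j,r}\sigma'(\langle\wb_{j,r}^{(t)},y_i\bmu\rangle)\ge q^2\log(2q/\epsilon)$. Finally $I_2$, which cannot be absorbed (there is no $\sigma'(\langle\,\cdot\,,\bxi_i\rangle)$ factor in $M$), I would bound crudely: by Lemma~\ref{lm: Fyi}, Lemma~\ref{lm: F-yi} and Proposition~\ref{Prop:noise}, $\langle\wb_{j,r}^{(t)},\bxi_i\rangle\le|\langle\wb_{j,r}^{(0)},\bxi_i\rangle|+\zeta_{j,r,i}^{(t)}+8n\sqrt{\log(4n^2/\delta)/d}\,\alpha\le 2\alpha$, and $|\langle\wb_{j,r}^{(0)},\bxi_i\rangle|\le 2\sqrt{\log(8mn/\delta)}\,\sigma_0\sigma_p\sqrt d$ by Lemma~\ref{lemma:initialization_norms}, so $|I_2|\le 4q(2\alpha)^{q-1}\sqrt{\log(8mn/\delta)}\,\sigma_0\sigma_p\sqrt d$; under Condition~\ref{condition:d_sigma0_eta} this is negligible, in particular at most $q(q-1)\log(2q/\epsilon)$, whence $y_i\langle\nabla f(\Wb^{(t)},\xb_i),\Wb^{*}\rangle\ge q^2\log(2q/\epsilon)-q(q-1)\log(2q/\epsilon)=q\log(2q/\epsilon)$.

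The main obstacle is the error term $I_2$. Whereas $I_1$ shares its $\sigma'$ factor with $M$ and can be cancelled neuron-by-neuron without ever estimating $\sigma'$, the factor $\sigma'(\langle\wb_{j,r}^{(t)},\bxi_i\rangle)$ in $I_2$ has no counterpart in $M$ and must be controlled uniformly over $t\in[T_1,T^{*}]$. At this stage only the coarse estimate $\zeta_{j,r,i}^{(t)},|\omega_{j,r,i}^{(t)}|\le\alpha=4\log T^{*}$ of Proposition~\ref{Prop:noise} is available — the sharp bound $|\rho_{j,r,i}^{(t)}|\le\sigma_0\sigma_p\sqrt d$ of Lemma~\ref{lemma:signal_proof_sketch} is established afterwards and relies on the present lemma — so one must verify with care that $(2\alpha)^{q-1}\sigma_0\sigma_p\sqrt d$ is still small relative to $\log(2q/\epsilon)$ despite $\alpha$ growing like $\log(\epsilon^{-1})$; this is exactly where the $\polylog(\epsilon^{-1})$ slack recorded in the footnote of Theorem~\ref{thm:signal_learning_main} is used.
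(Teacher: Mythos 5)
Your proposal is correct and follows essentially the same route as the paper: the identical decomposition into the dominant signal term plus the two initialization cross terms, the lower bound $\max_r\la\wb_{y_i,r}^{(t)},y_i\bmu\ra\geq 1$ from $\max_r\gamma_{y_i,r}^{(t)}\geq 2$ and monotonicity, and the control of the remainders via Lemma~\ref{lemma:initialization_norms} and Proposition~\ref{Prop:main1} under the $\sigma_0$ condition. The only (harmless) stylistic difference is that you absorb the $\wb^{(0)}$--$\bmu$ cross term neuron-by-neuron into the dominant term, whereas the paper bounds it uniformly as $\tilde{O}(\sigma_0\|\bmu\|_2)\cdot\frac{1}{m}\sum_{j,r}\sigma'(\la\wb_{j,r}^{(t)},y_i\bmu\ra)$.
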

\begin{proof}[Proof of Lemma~\ref{lm: Gradient Stable}] Recall that $f(\Wb^{(t)}, \xb_{i}) = (1/m) {\sum_{j,r}}j\cdot \big[\sigma(\la\wb_{j,r},y_{i}\cdot\bmu\ra) + \sigma(\la\wb_{j,r}, \bxi_{i}\ra)\big] $, so we have 
\begin{align}
y_{i}\la \nabla f(\Wb^{(t)}, \xb_{i}), \Wb^{*} \ra 
&= \frac{1}{m}\sum_{j,r}\sigma'(\la \wb_{j,r}^{(t)}, y_{i}\bmu\ra)\la \bmu,  j\wb_{j,r}^{*}\ra + \frac{1}{m}\sum_{j,r}\sigma'(\la \wb_{j,r}^{(t)}, \bxi_{i}\ra)\la y_{i}\bxi_{i},  j\wb_{j,r}^{*}\ra \notag\\
&= \frac{1}{m}\sum_{j,r}\sigma'(\la \wb_{j,r}^{(t)}, y_{i}\bmu\ra)2qm\log(2q/\epsilon) + \frac{1}{m}\sum_{j,r}\sigma'(\la \wb_{j,r}^{(t)}, y_{i}\bmu\ra)\la \bmu,  j\wb_{j,r}^{(0)}\ra \notag\\
&\qquad + \frac{1}{m}\sum_{j,r}\sigma'(\la \wb_{j,r}^{(t)}, \bxi_{i}\ra)\la y_{i}\bxi_{i},  j\wb_{j,r}^{(0)}\ra \notag\\
&\geq \frac{1}{m}\sum_{j,r}\sigma'(\la \wb_{j,r}^{(t)}, y_{i}\bmu\ra)2qm\log(2q/\epsilon) -  \frac{1}{m}\sum_{j,r}\sigma'(\la \wb_{j,r}^{(t)}, y_{i}\bmu\ra)\tilde{O}(\sigma_{0}\|\bmu\|_{2}) \notag\\
&\qquad - \frac{1}{m}\sum_{j,r}\sigma'(\la \wb_{j,r}^{(t)}, \bxi_{i}\ra)\tilde{O}(\sigma_{0}\sigma_{p}\sqrt{d}),
\label{eq: inner}
\end{align}
where the inequality is by Lemma~\ref{lemma:initialization_norms}. Next we will bound the inner-product terms in \eqref{eq: inner} respectively. By Lemma~\ref{lm: Fyi}, we have that for $j = y_{i}$
\begin{align}
\max_{r}\{\la \wb_{j,r}^{(t)}, y_{i}\bmu \ra\} = \max_{r}\{\gamma_{j,r}^{(t)} + \la \wb_{j,r}^{(0)}, y_{i}\bmu \ra\} \geq 2 - \tilde{O}(\sigma_{0}\|\bmu\|_{2}) \geq 1. \label{eq:GS1}
\end{align}
We can also get the upper bound of the inner products between the parameter and the signal (noise) as follows,
\begin{align}
&|\la \wb_{j,r}^{(t)},  \bmu\ra| \overset{(i)}{\leq} |\la\wb_{j,r}^{(0)}, \bmu\ra| + |\gamma_{j,r}^{(t)}| \overset{(ii)}{\leq} \tilde{O}(1) \notag\\
&|\la \wb_{j,r}^{(t)},  \bxi_{i}\ra| \overset{(iii)}{\leq} |\la\wb_{j,r}^{(0)}, \bxi_{i}\ra| + |\omega_{j,r,i}^{(t)}| + |\zeta_{j,r,i}^{(t)}| + 8n\sqrt{\frac{\log(4n^{2}/\delta)}{d}}\alpha \overset{(iv)}{\leq} \tilde{O}(1), \label{eq:GS2}
\end{align}
where (i) is by Lemma~\ref{lm: mubound}, (iii) is by Lemma~\ref{lm:oppositebound}, (ii) and (iv) are due to Proposition~\ref{Prop:noise}. Plugging \eqref{eq:GS1} and \eqref{eq:GS2} into \eqref{eq: inner} gives,
\begin{align*}
y_{i}\la \nabla f(\Wb^{(t)}, \xb_{i}), \Wb^{*} \ra 
&\geq 2q\log(2q/\epsilon) - \tilde{O}(\sigma_{0}\|\bmu\|_{2}) - \tilde{O}(\sigma_{0}\sigma_{p}\sqrt{d}) \geq q\log(2q/\epsilon),
\end{align*}
where the last inequality is by $\sigma_{0} \leq \tilde{O}(m^{-2/(q-2)}n^{-1})\cdot\min\{(\sigma_{p}\sqrt{d})^{-1}, \|\bmu\|_{2}^{-1}\}$ in Condition~\ref{condition:d_sigma0_eta}. This completes the proof.
\end{proof}

\begin{lemma}\label{lemma:signal_stage2_homogeneity}
Under the same conditions as Theorem~\ref{thm:signal_learning_main}, we have that  
\begin{align*}
\|\Wb^{(t)} - \Wb^{*}\|_{F}^{2} - \|\Wb^{(t+1)} - \Wb^{*}\|_{F}^{2} \geq (2q-1)\eta L_{S}(\Wb^{(t)}) - \eta\epsilon
\end{align*}
for all $ T_{1} \leq t\leq T^{*}$.
\end{lemma}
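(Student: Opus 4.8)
The plan is to treat this as a one-step descent inequality along the gradient descent trajectory. Writing the update as $\Wb^{(t+1)} = \Wb^{(t)} - \eta\nabla L_S(\Wb^{(t)})$ and expanding the square gives the exact identity
\begin{align*}
\|\Wb^{(t)} - \Wb^{*}\|_{F}^{2} - \|\Wb^{(t+1)} - \Wb^{*}\|_{F}^{2} = 2\eta\langle \nabla L_{S}(\Wb^{(t)}), \Wb^{(t)} - \Wb^{*}\rangle - \eta^{2}\|\nabla L_{S}(\Wb^{(t)})\|_{F}^{2},
\end{align*}
so it suffices to lower bound the first term and upper bound the second. For the first term I would reuse the computation already outlined in Section~\ref{section:tech_overview}: since $f(\Wb,\xb)$ is $q$-homogeneous in $\Wb$ we have $\langle \nabla f(\Wb^{(t)},\xb_i),\Wb^{(t)}\rangle = q\cdot f(\Wb^{(t)},\xb_i)$, and combining this with Lemma~\ref{lm: Gradient Stable} (which guarantees $y_i\langle\nabla f(\Wb^{(t)},\xb_i),\Wb^{*}\rangle \ge q\log(2q/\epsilon)$ for all $i$ and all $T_1\le t\le T^{*}$) together with the convexity and monotonicity of $\ell$ yields
\begin{align*}
\langle \nabla L_{S}(\Wb^{(t)}), \Wb^{(t)} - \Wb^{*}\rangle \ge q\cdot L_{S}(\Wb^{(t)}) - \epsilon/2 .
\end{align*}
For the second term, Lemma~\ref{lm: gradient upbound} gives $\|\nabla L_{S}(\Wb^{(t)})\|_{F}^{2} \le O(\max\{\|\bmu\|_{2}^{2},\sigma_{p}^{2}d\})\cdot L_{S}(\Wb^{(t)})$ for all $t\le T^{*}$.

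Substituting both bounds into the identity above yields
\begin{align*}
\|\Wb^{(t)} - \Wb^{*}\|_{F}^{2} - \|\Wb^{(t+1)} - \Wb^{*}\|_{F}^{2} \ge 2q\eta L_{S}(\Wb^{(t)}) - \eta\epsilon - \eta^{2}\cdot O(\max\{\|\bmu\|_{2}^{2},\sigma_{p}^{2}d\})\cdot L_{S}(\Wb^{(t)}) .
\end{align*}
The final step is to absorb the $\eta^{2}$ term. By the step size condition in Condition~\ref{condition:d_sigma0_eta}(3), namely $\eta \le \tilde O(\min\{\|\bmu\|_{2}^{-2},\sigma_{p}^{-2}d^{-1}\})$, the quantity $\eta\cdot O(\max\{\|\bmu\|_{2}^{2},\sigma_{p}^{2}d\})$ is at most $1$, so the last term is bounded in absolute value by $\eta L_{S}(\Wb^{(t)})$, and we obtain $(2q-1)\eta L_{S}(\Wb^{(t)}) - \eta\epsilon$ as claimed.

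I do not anticipate a serious obstacle, since essentially all the nontrivial work (construction of the comparator $\Wb^{*}$, the homogeneity-plus-convexity argument, the gradient upper bound, and the boundedness of the signal-noise coefficients via Proposition~\ref{Prop:noise}) is supplied by the preceding lemmas. The only points that need care are: verifying that $t\in[T_1,T^{*}]$ so that both Lemma~\ref{lm: gradient upbound} and Lemma~\ref{lm: Gradient Stable} are applicable at iteration $t$; and checking that the constant (up to polylogarithmic factors) hidden in Condition~\ref{condition:d_sigma0_eta}(3) is chosen small enough to make $\eta\cdot O(\max\{\|\bmu\|_{2}^{2},\sigma_{p}^{2}d\}) \le 1$ rather than merely bounded by an arbitrary constant — this is exactly why the $\tilde O(\cdot)$ rather than $O(\cdot)$ appears in that part of the condition.
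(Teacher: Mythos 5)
Your proposal is correct and follows essentially the same route as the paper's proof: the expand-the-square identity, the $q$-homogeneity identity $\la \nabla f(\Wb^{(t)},\xb_i),\Wb^{(t)}\ra = q f(\Wb^{(t)},\xb_i)$ combined with Lemma~\ref{lm: Gradient Stable} and convexity of $\ell$ to lower bound the inner-product term, and Lemma~\ref{lm: gradient upbound} plus the step-size condition to absorb the $\eta^{2}\|\nabla L_{S}(\Wb^{(t)})\|_{F}^{2}$ term. The bookkeeping ($q L_S - \epsilon/2$ for the first term, losing one factor of $\eta L_S$ to the gradient-norm term) matches the paper exactly.
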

\begin{proof}[Proof of Lemma~\ref{lemma:signal_stage2_homogeneity}]
We first apply a proof technique similar to Lemma 2.6 in \citet{ji2019polylogarithmic}. The difference between our analysis and \citet{ji2019polylogarithmic} is that here the neural network is $q$ homogeneous rather than 1 homogeneous.
\begin{align*}
&\|\Wb^{(t)} - \Wb^{*}\|_{F}^{2} - \|\Wb^{(t+1)} - \Wb^{*}\|_{F}^{2}\\
&=  2\eta \la \nabla L_{S}(\Wb^{(t)}), \Wb^{(t)} - \Wb^{*}\ra - \eta^{2}\|\nabla L_{S}(\Wb^{(t)})\|_{F}^{2}\\
&=  \frac{2\eta}{n}\sum_{i=1}^{n}\ell'^{(t)}_{i}[qy_{i}f(\Wb^{(t)},\xb_{i}) - \la \nabla f(\Wb^{(t)}, \xb_{i}), \Wb^{*} \ra] - \eta^{2}\|\nabla L_{S}(\Wb^{(t)})\|_{F}^{2}\\
&\geq   \frac{2\eta}{n}\sum_{i=1}^{n}\ell'^{(t)}_{i}[qy_{i}f(\Wb^{(t)},\xb_{i}) - q\log(2q/\epsilon)] - \eta^{2}\|\nabla L_{S}(\Wb^{(t)})\|_{F}^{2}\\
&\geq \frac{2q\eta}{n}\sum_{i=1}^{n}[\ell\big(y_{i}f(\Wb^{(t)}, \xb_{i})\big) - \epsilon/(2q)] - \eta^{2}\|\nabla L_{S}(\Wb^{(t)})\|_{F}^{2}\\
&\geq (2q-1)\eta L_{S}(\Wb^{(t)}) - \eta\epsilon,
\end{align*}
where the first inequality is by Lemma~\ref{lm: Gradient Stable}, the second inequality is due to the convexity of the cross entropy function, and the last inequality is due to Lemma~\ref{lm: gradient upbound}.
\end{proof}

\begin{lemma}[Restatement of Lemma~\ref{lemma:signal_proof_sketch}]\label{thm:signal_proof}
Under the same conditions as Theorem~\ref{thm:signal_learning_main}, let $T = T_{1} + \Big\lfloor \frac{\|\Wb^{(T_{1})} - \Wb^{*}\|_{F}^{2}}{2\eta \epsilon}
\Big\rfloor = T_{1} + \tilde{O}(m^{3}\eta^{-1}\epsilon^{-1}\|\bmu\|_{2}^{-2})$. Then we have $\max_{j,r,i}|\rho_{j,r,i}^{(t)}| \leq 2\hat{\beta} = \sigma_{0}\sigma_{p}\sqrt{d}$ for all $T_{1} \leq t\leq T$. Besides,
\begin{align*}
\frac{1}{t - T_{1} + 1}\sum_{s=T_{1}}^{t}L_{S}(\Wb^{(s)}) \leq  \frac{\|\Wb^{(T_{1})} - \Wb^{*}\|_{F}^{2}}{(2q-1) \eta(t - T_{1} + 1)} + \frac{\epsilon}{2q-1} 
\end{align*}
for all $T_{1} \leq t\leq T$, and we can find an iterate with training loss smaller than $\epsilon$ within $T $ iterations.
\end{lemma}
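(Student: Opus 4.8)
The statement packages two essentially separate claims: an optimization bound on the running average of $L_S$ over $[T_1,T]$ (from which the existence of an iterate with loss below $\epsilon$ is immediate), and the bound $\max_{j,r,i}|\rho_{j,r,i}^{(t)}|\le\sigma_0\sigma_p\sqrt d$ throughout $[T_1,T]$. My plan is to prove the optimization bound first, purely by telescoping Lemma~\ref{lemma:signal_stage2_homogeneity}, and then feed the resulting loss summability into an induction for the $\rho$-bound. The first thing to record is that $T=T_1+\tilde O(m^3\eta^{-1}\epsilon^{-1}\|\bmu\|_2^{-2})$ is still of the form $\eta^{-1}\poly(\cdot)$, so $T\le T^*$ and all global estimates of Section~\ref{section:decompositionproof} (Proposition~\ref{Prop:noise}, Lemmas~\ref{lm:oppositebound} and~\ref{lm: gradient upbound}) together with the stage-1 conclusions (Lemma~\ref{lemma:phase1_main}) and the monotonicity of $\gamma_{j,r}^{(t)}$ hold on all of $[T_1,T]$; in particular Lemmas~\ref{lm:distance1}, \ref{lm: Gradient Stable} and~\ref{lemma:signal_stage2_homogeneity} hold there \emph{without} invoking the refined $\rho$-bound, which is what resolves the apparent circularity.

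For the optimization bound I would sum the inequality of Lemma~\ref{lemma:signal_stage2_homogeneity} over $s=T_1,\dots,t$, so that the left side telescopes and, dropping the nonnegative term $\|\Wb^{(t+1)}-\Wb^*\|_F^2$,
\[
(2q-1)\eta\sum_{s=T_1}^{t}L_S(\Wb^{(s)})\le \|\Wb^{(T_1)}-\Wb^*\|_F^2+(t-T_1+1)\eta\epsilon .
\]
Dividing by $(2q-1)\eta(t-T_1+1)$ gives the claimed averaged bound; the closed form $T=T_1+\tilde O(m^3\eta^{-1}\epsilon^{-1}\|\bmu\|_2^{-2})$ then follows from $\|\Wb^{(T_1)}-\Wb^*\|_F^2=\tilde O(m^3\|\bmu\|_2^{-2})$ in Lemma~\ref{lm:distance1}. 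Taking $t=T$ and using $T-T_1+1>\|\Wb^{(T_1)}-\Wb^*\|_F^2/(2\eta\epsilon)$ bounds the first term by $2\epsilon/(2q-1)$, so the average — hence the minimum — of $L_S(\Wb^{(s)})$ over $s\in[T_1,T]$ is below $3\epsilon/(2q-1)\le\epsilon$ for $q>2$, producing the promised iterate. As a by-product, since $-\ell_i'^{(s)}\le\ell(y_if(\Wb^{(s)},\xb_i))$ and $\sum_i\ell(y_if(\Wb^{(s)},\xb_i))=nL_S(\Wb^{(s)})$, for every fixed $i$ one gets the summability estimate $\sum_{s=T_1}^{t}(-\ell_i'^{(s)})\le n\sum_{s=T_1}^{t}L_S(\Wb^{(s)})\le\tilde O(n\eta^{-1}m^3\|\bmu\|_2^{-2})$.

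For the $\rho$-bound I would run an induction on $t\in[T_1,T]$ with hypothesis $\max_{j,r,i}|\rho_{j,r,i}^{(t)}|\le\sigma_0\sigma_p\sqrt d$; the base case $t=T_1$ is Lemma~\ref{lemma:phase1_main} ($|\rho_{j,r,i}^{(T_1)}|\le\hat\beta=\sigma_0\sigma_p\sqrt d/2$). Under the hypothesis up to time $s\le t$, combining Lemma~\ref{lm:oppositebound}, Lemma~\ref{lemma:initialization_norms} and the condition $d=\tilde\Omega(n^4\cdots)$ yields $|\la\wb_{j,r}^{(s)},\bxi_i\ra|\le\tilde O(\sigma_0\sigma_p\sqrt d)$ for all $j,r$, so by the update rules \eqref{eq:update_zeta2}, \eqref{eq:update_omega2} and $\|\bxi_i\|_2^2\le\tfrac32\sigma_p^2d$,
\[
|\rho_{j,r,i}^{(s+1)}-\rho_{j,r,i}^{(s)}|\le\frac{\eta}{nm}\,(-\ell_i'^{(s)})\cdot q\big(\tilde O(\sigma_0\sigma_p\sqrt d)\big)^{q-1}\cdot\tfrac32\sigma_p^2d .
\]
Summing over $s$ from $T_1$ to $t$ and plugging in the summability estimate above gives $|\rho_{j,r,i}^{(t+1)}-\rho_{j,r,i}^{(T_1)}|\le\tilde O\big(qm^2\sigma_p^2d\,\|\bmu\|_2^{-2}(\sigma_0\sigma_p\sqrt d)^{q-1}\big)$, and the upper bound on $\sigma_0$ in Condition~\ref{condition:d_sigma0_eta} (which amounts, up to logarithmic factors, to $\sigma_0^{q-2}\lesssim\|\bmu\|_2^2m^{-2}(\sigma_p\sqrt d)^{-q}$) forces this to be at most $\hat\beta$. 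Hence $|\rho_{j,r,i}^{(t+1)}|\le\hat\beta+\hat\beta=\sigma_0\sigma_p\sqrt d$, closing the induction.

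I expect the $\rho$-bound step to be the main obstacle: it is where all parts of Condition~\ref{condition:d_sigma0_eta} have to be balanced simultaneously, and one must be disciplined about the logical order — the loss-summability input controlling $\sum_s(-\ell_i'^{(s)})$ must be established before and independently of the refined $\rho$-bound (it uses only the global Proposition~\ref{Prop:noise}, the stage-1 output, and homogeneity through Lemma~\ref{lm: Gradient Stable}), otherwise the argument is circular. By contrast the optimization part is comparatively routine telescoping.
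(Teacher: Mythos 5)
Your proposal is correct and follows essentially the same route as the paper: telescope Lemma~\ref{lemma:signal_stage2_homogeneity} to get the averaged loss bound and the existence of a good iterate, then convert the resulting loss summability into summability of $-\ell_i'^{(s)}$ via $|\ell'|\le\ell$ and feed it into an induction that keeps $\max_{j,r,i}|\rho_{j,r,i}^{(t)}|\le 2\hat\beta$ on $[T_1,T]$, closing with the SNR and $\sigma_0$ conditions. The non-circularity point you flag is handled the same way in the paper's proof.
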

\begin{proof}[Proof of Lemma~\ref{thm:signal_proof}]
By Lemma~\ref{lemma:signal_stage2_homogeneity}, for any $t\in[T_1,T]$, we have that  
\begin{align*}
\|\Wb^{(s)} - \Wb^{*}\|_{F}^{2} - \|\Wb^{(s+1)} - \Wb^{*}\|_{F}^{2} \geq (2q-1)\eta L_{S}(\Wb^{(s)}) - \eta\epsilon 
\end{align*}
holds for $s \leq t$. Taking a summation, we obtain that 
\begin{align}
\sum_{s=T_{1}}^{t}L_{S}(\Wb^{(s)}) &\leq \frac{\|\Wb^{(T_{1})} - \Wb^{*}\|_{F}^{2} + \eta\epsilon (t - T_{1} + 1)}{(2q-1) \eta}\label{eq:vanillasum}
\end{align}
for all $T_{1} \leq t\leq T$. 
Dividing $(t-T_{1}+1)$ on both side of \eqref{eq:vanillasum} gives that 
\begin{align*}
\frac{1}{t - T_{1} + 1}\sum_{s=T_{1}}^{t}L_{S}(\Wb^{(s)}) \leq  \frac{\|\Wb^{(T_{1})} - \Wb^{*}\|_{F}^{2}}{(2q-1) \eta(t - T_{1} + 1)} + \frac{\epsilon}{2q-1}.  
\end{align*}
Then we can take $t= T$ and have that 
\begin{align*}
\frac{1}{T - T_{1} + 1}\sum_{s=T_{1}}^{T}L_{S}(\Wb^{(s)}) \leq  \frac{\|\Wb^{(T_{1})} - \Wb^{*}\|_{F}^{2}}{(2q-1) \eta(T - T_{1} + 1)} + \frac{\epsilon}{2q-1} \leq \frac{3\epsilon}{2q-1} < \epsilon, 
\end{align*}
where we use the fact that $q> 2$ and our choice that  $T  = T_{1} + \Big\lfloor \frac{\|\Wb^{(T_{1})} - \Wb^{*}\|_{F}^{2}}{2\eta \epsilon}
\Big\rfloor$. Because the mean is smaller than $\epsilon$, we can conclude that there exist $T_{1} \leq t \leq T$ such that $L_{S}(\Wb^{(t)}) < \epsilon$. 

Finally, we will prove that $\max_{j,r,i}|\rho_{j,r,i}^{(t)}| \leq 2\hat{\beta}$ for all $ t \in [T_1, T]$. Plugging $T  = T_{1} + \Big\lfloor \frac{\|\Wb^{(T_{1})} - \Wb^{*}\|_{F}^{2}}{2\eta \epsilon}
\Big\rfloor$ into \eqref{eq:vanillasum} gives that 
\begin{align}
\sum_{s=T_{1}}^{T}L_{S}(\Wb^{(s)}) &\leq \frac{2\|\Wb^{(T_{1})} - \Wb^{*}\|_{F}^{2}}{(2q-1) \eta}  = \tilde{O}(\eta^{-1}m^{3}\|\bmu\|_{2}^{2}), \label{eq: sum1}
\end{align}
where the inequality is due to $\|\Wb^{(T_{1})} - \Wb^{*}\|_{F} \leq \tilde{O}(m^{3/2}\|\bmu\|_{2}^{-1})$ in Lemma~\ref{lm:distance1}. Define $\Psi^{(t)} =  \max_{j,r,i}|\rho_{j,r,i}^{(t)}|$. We will  use induction to prove $\Psi^{(t)} \leq 2\hat{\beta}$ for all $ t \in [T_1, T]$. At $t = T_1$, by the definition of $\hat\beta$, clearly we have $\Psi^{(T_1)} \leq \hat{\beta} \leq 2\hat{\beta}$. Now suppose that there exists $\tilde{T} \in [T_1, T]$ such that $\Psi^{(t)} \leq 2\hat{\beta}$ for all $t \in [T_1, \tilde{T}-1]$. Then for $t \in [T_1, \tilde{T}-1]$, by \eqref{eq:update_zeta2} and \eqref{eq:update_omega2} we have
\begin{align*}
    \Psi^{(t+1)} &\leq \Psi^{(t)} + \max_{j,r,i}\bigg\{\frac{\eta }{nm} \cdot |\ell_i'^{(t)}| \cdot \sigma'\Bigg(\la\wb_{j,r}^{(0)}, \bxi_{i}\ra + 2\sum_{ i'= 1 }^n \Psi^{(t)} \cdot \frac{ |\la \bxi_{i'}, \bxi_i \ra|}{ \| \bxi_{i'} \|_2^2}  \Bigg)\cdot \| \bxi_{i'} \|_2^2 \bigg\} \\
    &= \Psi^{(t)} + \max_{j,r,i}\bigg\{\frac{\eta }{nm} \cdot |\ell_i'^{(t)}| \cdot \sigma'\Bigg(\la\wb_{j,r}^{(0)}, \bxi_{i}\ra + 2\Psi^{(t)} + 2\sum_{ i'\neq i }^n \Psi^{(t)} \cdot \frac{ |\la \bxi_{i'}, \bxi_i \ra|}{ \| \bxi_{i'} \|_2^2}  \Bigg)\cdot \| \bxi_{i'} \|_2^2 \bigg\} \\
    &\leq \Psi^{(t)} + \frac{\eta q}{nm} \cdot \max_{i}|\ell_i'^{(t)}|\cdot \Bigg[2\cdot\sqrt{ \log(8mn/\delta)} \cdot \sigma_0 \sigma_p \sqrt{d} \\
    &\qquad+ \Bigg( 2+ \frac{4n \sigma_p^2 \cdot \sqrt{d \log(4n^2 / \delta)} }{ \sigma_p^2 d /2} \Bigg) \cdot \Psi^{(t)}  \Bigg]^{q-1}\cdot 2 \sigma_p^2 d\\
    &\leq \Psi^{(t)} + \frac{\eta q}{nm} \cdot \max_{i}|\ell_i'^{(t)}|\cdot \big(2\cdot \sqrt{ \log(8mn/\delta) } \cdot \sigma_0 \sigma_p \sqrt{d} + 4 \cdot \Psi^{(t)}  \big)^{q-1}\cdot 2 \sigma_p^2 d,
\end{align*}
where the second inequality is due to  Lemmas~\ref{lemma:data_innerproducts} and \ref{lemma:initialization_norms}, and the last inequality follows by the assumption that $d \geq 16n^2 \log (4n^2/\delta)$. Taking a telescoping sum over $t=0,1,\ldots, \tilde{T}-1$, we have that 
\begin{align*}
    \Psi^{(T)} 
    &\overset{(i)}{\leq} \Psi^{(T_{1})} + \frac{\eta q}{nm} \sum_{s=T_{1}}^{\tilde{T}-1}\max_{i} |\ell_i'^{(s)}|\tilde{O}(\sigma_{p}^{2}d)\hat{\beta}^{q-1}\\
    &\overset{(ii)}{\leq} \Psi^{(T_{1})} + \frac{\eta q}{nm}\tilde{O}(\sigma_{p}^{2}d)\hat{\beta}^{q-1} \sum_{s=T_{1}}^{\tilde{T}-1}\max_{i} \ell_{i}^{(s)}\\
    &\overset{(iii)}{\leq} \Psi^{(T_{1})} + \tilde{O}(\eta m^{-1}\sigma_{p}^{2}d)\hat{\beta}^{q-1} \sum_{s=T_{1}}^{\tilde{T}-1}L_{S}(\Wb^{(s)})\\
    &\overset{(iv)}{\leq} \Psi^{(T_{1})} + \tilde{O}(m^{2}\mathrm{SNR}^{-2})\hat{\beta}^{q-1}\\
    &\overset{(v)}{\leq} \hat{\beta} + \tilde{O}(m^{2}n^{2/q}\hat{\beta}^{q-2})\hat{\beta}\\
    &\overset{(vi)}{\leq} 2\hat{\beta}, 
\end{align*}
where (i) is by out induction hypothesis that $\Psi^{(t)} \leq 2\hat{\beta}$, (ii) is by $|\ell'| \leq \ell$, (iii) is by $\max_{i}\ell_{i}^{(s)} \leq \sum_{i}\ell_{i}^{(s)} = n L_{S}(\Wb^{(s)})$, (iv) is due to $\sum_{s=T_{1}}^{\tilde{T} - 1}L_{S}(\Wb^{(s)}) \leq \sum_{s=T_{1}}^{T}L_{S}(\Wb^{(s)})  = \tilde{O}(\eta^{-1}m^{3}\|\bmu\|_{2}^{2})$ in \eqref{eq: sum1}, (v) is by $n \mathrm{SNR}^{q} \geq \tilde{\Omega}(1)$, and (vi) is by the definition that $\hat{\beta} = \sigma_{0}\sigma_{p}\sqrt{d} / 2$ and $\tilde{O}(m^{2}n^{2/q}\hat{\beta}^{q-2}) = \tilde{O}(m^{2}n^{2/q}(\sigma_{0}\sigma_{p}\sqrt{d})^{q-2}) \leq 1$ by Condition~\ref{condition:d_sigma0_eta}. Therefore, $\Psi^{(\tilde{T})} \leq 2\hat{\beta}$, which completes the induction. 
\end{proof}


\subsection{Population Loss}
Consider a new data point $(\xb,y)$ drawn from the distribution defined in Definition \ref{def:data}. Without loss of generality, we suppose that the first patch is the signal patch and the second patch is the noise patch, i.e., $\xb = [y\bmu, \bxi]$. Moreover, by the signal-noise decomposition, the learned neural network has parameter
\begin{align*}
\wb_{j,r}^{(t)} &= \wb_{j,r}^{(0)} + j \cdot \gamma_{j,r}^{(t)} \cdot \frac{\bmu}{\|\bmu\|_{2}^{2}} + \sum_{ i = 1}^n \zeta_{j,r,i}^{(t)} \cdot \frac{\bxi_{i}}{\|\bxi_{i}\|_{2}^{2}} + \sum_{ i = 1}^n \omega_{j,r,i}^{(t)} \cdot \frac{\bxi_{i}}{\|\bxi_{i}\|_{2}^{2}}
\end{align*}
for $j\in\{\pm 1\}$ and $r\in [m]$.

\begin{lemma}\label{lm:signalg2}
Under the same conditions as Theorem~\ref{thm:signal_learning_main}, we have that $\max_{j,r}|\la \wb_{j,r}^{(t)}, \bxi_{i} \ra| \leq 1/2$ for all $0 \leq t \leq T$.
\end{lemma}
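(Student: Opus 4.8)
The plan is to expand the inner product $\la \wb_{j,r}^{(t)},\bxi_i\ra$ using the signal-noise decomposition \eqref{eq:w_decomposition}. Since $\bmu$ is orthogonal to every noise vector $\bxi_i$, the signal term $j\cdot\gamma_{j,r}^{(t)}\cdot\|\bmu\|_2^{-2}\la\bmu,\bxi_i\ra$ vanishes, and only the initialization term $\la\wb_{j,r}^{(0)},\bxi_i\ra$ and the noise-memorization terms $\sum_{i'}(\zeta_{j,r,i'}^{(t)}+\omega_{j,r,i'}^{(t)})\|\bxi_{i'}\|_2^{-2}\la\bxi_{i'},\bxi_i\ra$ remain. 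It therefore suffices to show that each of these contributions is much smaller than $1/2$.

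First I would observe that Proposition~\ref{Prop:noise} holds on the whole interval $[0,T]$ (as $T\le T^\ast$), so that the hypotheses \eqref{eq:gu0001}--\eqref{eq:gu0002} of Lemma~\ref{lm:oppositebound} are satisfied at every $t\in[0,T]$. Applying Lemma~\ref{lm:oppositebound}, and using that $\rho_{j,r,i}^{(t)}$ equals $\zeta_{j,r,i}^{(t)}$ when $j=y_i$ and $\omega_{j,r,i}^{(t)}$ when $j\ne y_i$, gives
\[
|\la \wb_{j,r}^{(t)},\bxi_i\ra|\ \le\ |\la \wb_{j,r}^{(0)},\bxi_i\ra|\ +\ |\rho_{j,r,i}^{(t)}|\ +\ 8n\sqrt{\tfrac{\log(4n^{2}/\delta)}{d}}\,\alpha .
\]
The third term is at most $1/4$ by \eqref{eq:verify0}, and the first term is at most $\beta/2\le 1/8$ by the definition of $\beta$ together with \eqref{eq:verify0} (equivalently, directly by Lemma~\ref{lemma:initialization_norms}).

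The one substantive step is a uniform bound on $|\rho_{j,r,i}^{(t)}|$ over $[0,T]$. For $0\le t\le T_1$ this is precisely the second bullet of Lemma~\ref{lemma:phase1_main}, which yields $|\rho_{j,r,i}^{(t)}|\le \sigma_0\sigma_p\sqrt d/2$; for $T_1\le t\le T$ it is the first conclusion of Lemma~\ref{thm:signal_proof} (the restatement of Lemma~\ref{lemma:signal_proof_sketch}), which yields $|\rho_{j,r,i}^{(t)}|\le\sigma_0\sigma_p\sqrt d$. Hence $|\rho_{j,r,i}^{(t)}|\le\sigma_0\sigma_p\sqrt d\le 1/16$ for all $0\le t\le T$, where the last inequality uses the upper bound on $\sigma_0$ in Condition~\ref{condition:d_sigma0_eta} (concretely \eqref{eq: verifyy}). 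Putting the three estimates together gives $|\la\wb_{j,r}^{(t)},\bxi_i\ra|\le 1/8+1/16+1/4<1/2$, which is the claim.

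I do not anticipate a real obstacle: the lemma simply propagates the tight coefficient control already in hand into a bound on the pre-activations, and the only thing to check is that the scale conditions on $\sigma_0$ and $d$ in Condition~\ref{condition:d_sigma0_eta} dominate the three additive constants --- which is exactly what \eqref{eq: verifyy}, \eqref{eq: verifyyy} and \eqref{eq:verify0} provide. (Should the object in the statement actually be a fresh test noise $\bxi$ rather than a training $\bxi_i$, the same three-term split applies once the deterministic estimates coming from $\cE_{\mathrm{prelim}}$ are replaced by the corresponding Gaussian concentration bounds for $\la\wb_{j,r}^{(0)},\bxi\ra$ and $\la\bxi_{i'},\bxi\ra$, which are of the same order.)
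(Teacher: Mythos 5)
Your proposal is correct and follows essentially the same route as the paper: a triangle-inequality split via Lemma~\ref{lm:oppositebound} into the initialization term (bounded by Lemma~\ref{lemma:initialization_norms}), the coefficient $|\rho_{j,r,i}^{(t)}|$ (bounded by Lemma~\ref{lemma:phase1_main} on $[0,T_1]$ and Lemma~\ref{thm:signal_proof} on $[T_1,T]$), and the cross-noise term $8n\sqrt{\log(4n^2/\delta)/d}\,\alpha$, each controlled by the scale conditions in Condition~\ref{condition:d_sigma0_eta}. (Minor aside: the paper's citation of Lemma~\ref{lm: mubound} for step (i) is a typo for Lemma~\ref{lm:oppositebound}, which is the lemma you correctly invoke.)
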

\begin{proof}
We can get the upper bound of the inner products between the parameter and the noise as follows:
\begin{align*}
|\la \wb_{j,r}^{(t)},  \bxi_{i}\ra| &\overset{(i)}{\leq} |\la\wb_{j,r}^{(0)}, \bxi_{i}\ra| + |\omega_{j,r,i}^{(t)}| + |\zeta_{j,r,i}^{(t)}| + 8n\sqrt{\frac{\log(4n^{2}/\delta)}{d}}\alpha \\
&\overset{(ii)}{\leq} 2\sqrt{\log(8mn/\delta)}\cdot \sigma_{0}\sigma_{p}\sqrt{d} + \sigma_{0}\sigma_{p}\sqrt{d} + 8n\sqrt{\frac{\log(4n^{2}/\delta)}{d}}\alpha\\
&\overset{(iii)}{\leq} 1/2
\end{align*}
for all $j\in \{\pm 1\}$, $r\in [m]$ and $i\in[n]$,
where (i) is by Lemma~\ref{lm: mubound}, (ii) is due to  $|\la\wb_{j,r}^{(0)}, \bxi_{i}\ra| \leq 2\sqrt{\log(8mn/\delta)}\cdot \sigma_{0}\sigma_{p}\sqrt{d}$ in  Lemma~\ref{lemma:initialization_norms} and $\max_{j,r,i}|\rho_{j,r,i}^{(t)}| \leq  \sigma_{0}\sigma_{p}\sqrt{d}$ in Lemma~\ref{thm:signal_proof}, and (iii) is due to our condition of $\sigma_{0}\leq \tilde{O}(m^{-2/(q-2)}n^{-1})\cdot (\sigma_{p}\sqrt{d})^{-1}$ and $d\geq \tilde{\Omega}(m^{2}n^{4})$ in Condition~\ref{condition:d_sigma0_eta}.
\end{proof}

\begin{lemma}\label{lm:signalg}
Under the same conditions as Theorem~\ref{thm:signal_learning_main}, with probability at least $1 - 4mT \cdot \exp(-C_{2}^{-1}\sigma_{0}^{-2}\sigma_{p}^{-2}d^{-1})$, we have that $\max_{j,r}|\la \wb_{j,r}^{(t)}, \bxi \ra| \leq 1/2$ for all $0 \leq t \leq T$, where $C_{2} = \tilde{O}(1)$.
\end{lemma}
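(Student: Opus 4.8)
The plan is to exploit that the test-point noise vector $\bxi$ is drawn independently of the training set $S$, and therefore of every trained filter $\wb_{j,r}^{(t)}$. Conditioning on $S$, the inner product $\la \wb_{j,r}^{(t)},\bxi\ra$ is then a mean-zero Gaussian random variable whose variance is controlled by the norm of $\wb_{j,r}^{(t)}$ projected orthogonally to $\bmu$; a Gaussian tail bound followed by a union bound over $j$, $r$ and $0\le t\le T$ will give the claim.

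Concretely, I would let $P = \Ib - \bmu\bmu^\top\|\bmu\|_2^{-2}$, so that $\bxi\sim N(\mathbf 0,\sigma_p^2 P)$ and $P\bxi_i = \bxi_i$ by the orthogonality built into Definition~\ref{def:data}. Applying $P$ to the signal-noise decomposition \eqref{eq:w_decomposition} gives $P\wb_{j,r}^{(t)} = P\wb_{j,r}^{(0)} + \sum_{i=1}^n \rho_{j,r,i}^{(t)}\|\bxi_i\|_2^{-2}\bxi_i$, and hence, conditional on $S$,
\begin{align*}
\la \wb_{j,r}^{(t)},\bxi\ra = \la P\wb_{j,r}^{(t)},\bxi\ra \sim N\big(0,\ \sigma_p^2\|P\wb_{j,r}^{(t)}\|_2^2\big).
\end{align*}
The first substantive step is to bound this variance uniformly over $0\le t\le T$. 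Using $\|\wb_{j,r}^{(0)}\|_2 = O(\sigma_0\sqrt d)$ for all $j,r$ (a standard $\chi^2$-concentration bound, whose complement has probability $O(m\exp(-\Omega(d)))$), the estimate $|\rho_{j,r,i}^{(t)}|\le \sigma_0\sigma_p\sqrt d$ from Lemmas~\ref{lemma:phase1_main} and~\ref{thm:signal_proof}, and $\|\bxi_i\|_2\ge\sigma_p\sqrt{d}/\sqrt 2$ from Lemma~\ref{lemma:data_innerproducts}, the triangle inequality yields
\begin{align*}
\|P\wb_{j,r}^{(t)}\|_2 \le \|\wb_{j,r}^{(0)}\|_2 + \sum_{i=1}^n\frac{|\rho_{j,r,i}^{(t)}|}{\|\bxi_i\|_2} \le O(\sigma_0\sqrt d) + \sqrt 2\, n\sigma_0 = \tilde O(\sigma_0\sqrt d),
\end{align*}
where the last bound uses $d = \tilde\Omega(n^2)$ from Condition~\ref{condition:d_sigma0_eta}; thus $\sigma_p^2\|P\wb_{j,r}^{(t)}\|_2^2 \le C_2\,\sigma_0^2\sigma_p^2 d$ for some $C_2 = \tilde O(1)$.

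With the variance bound in hand, the Gaussian tail bound gives, for each fixed $j$, $r$ and $t$,
\begin{align*}
\PP\big(|\la\wb_{j,r}^{(t)},\bxi\ra|>1/2 \mid S\big) \le 2\exp\Big(-\frac{1}{8\sigma_p^2\|P\wb_{j,r}^{(t)}\|_2^2}\Big) \le 2\exp\big(-C_2^{-1}\sigma_0^{-2}\sigma_p^{-2}d^{-1}\big)
\end{align*}
after enlarging $C_2$ to absorb the constant $8$. Taking a union bound over $j\in\{\pm 1\}$, $r\in[m]$ and $t\in\{0,\dots,T\}$ — at most $4mT$ events — and integrating out the training randomness on $\cE_{\mathrm{prelim}}$ intersected with the event $\{\|\wb_{j,r}^{(0)}\|_2 = O(\sigma_0\sqrt d)\ \forall j,r\}$ (whose complement is of strictly smaller order, since the lower bound on $\sigma_0$ in Condition~\ref{condition:d_sigma0_eta} forces $\sigma_0^{-2}\sigma_p^{-2}d^{-1} = \tilde O(d)$) then completes the proof.

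I expect the main obstacle to be the uniform variance estimate $\|P\wb_{j,r}^{(t)}\|_2 = \tilde O(\sigma_0\sqrt d)$ for all $0\le t\le T$: this is exactly where the over-parameterization $d=\tilde\Omega(n^2)$ and the prescribed scaling of $\sigma_0$ enter, as they ensure the noise-memorization contribution $n\sigma_0$ never overtakes the initialization-scale term $\sigma_0\sqrt d$. Everything else — the conditional Gaussianity, the tail bound and the union bound — is routine; the one point that needs care is recording the independence of $\bxi$ from each $\wb_{j,r}^{(t)}$, which is precisely what makes the conditional law of $\la \wb_{j,r}^{(t)},\bxi\ra$ Gaussian.
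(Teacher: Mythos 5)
Your proposal is correct and follows essentially the same route as the paper: the paper removes the $\bmu$-component by defining $\tilde{\wb}_{j,r}^{(t)} = \wb_{j,r}^{(t)} - j\gamma_{j,r}^{(t)}\bmu/\|\bmu\|_2^2$ (which has the same inner product with $\bxi$ as your projection $P\wb_{j,r}^{(t)}$), bounds its norm by $\tilde O(\sigma_0\sqrt d + n\sigma_0) = \tilde O(\sigma_0\sqrt d)$ using the same coefficient estimates, and then applies the conditional Gaussian tail bound and a union bound over $j$, $r$, $t$. Your explicit bookkeeping of the $\chi^2$ concentration for $\|\wb_{j,r}^{(0)}\|_2$ and of the conditional-independence step is a welcome clarification but does not change the argument.
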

\begin{proof}[Proof of Lemma~\ref{lm:signalg}]
Let $\tilde{\wb}_{j,r}^{(t)} = \wb_{j,r}^{(t)} - j \cdot \gamma_{j,r}^{(t)} \cdot \frac{\bmu}{\|\bmu\|_{2}^{2}}$, then we have that $\la\tilde{\wb}_{j,r}^{(t)}, \bxi\ra = \la\wb_{j,r}^{(t)}, \bxi\ra$ and
\begin{align}
\|\tilde{\wb}_{j,r}^{(t)}\|_{2} \leq \tilde{O}(\sigma_{0}\sqrt{d} + n\sigma_{0}) =  \tilde{O}(\sigma_{0}\sqrt{d}), \label{eq:tildew}
\end{align} 
where the equality is due to $d \geq \tilde{\Omega}(m^{2}n^{4})$ by Condition~\ref{condition:d_sigma0_eta}.

By \eqref{eq:tildew}, $\max_{j,r}\|\tilde{\wb}_{j,r}^{(t)}\|_{2} \leq C_{1}\sigma_{0}\sqrt{d}$, where $C_{1} = \tilde{O}(1)$. Clearly $\la \tilde{\wb}_{j,r}^{(t)}, \bxi \ra $ is a Gaussian distribution with mean zero and standard deviation smaller than $C_{1}\sigma_{0}\sigma_{p}\sqrt{d}$. Therefore, the probability is bounded by 
\begin{align*}
\mathbb{P}\big(|\la \tilde{\wb}_{j,r}^{(t)}, \bxi \ra|\geq 1/2\big) \leq 2\exp\bigg(- \frac{1}{8C_{1}^{2}\sigma_{0}^{2}\sigma_{p}^{2}d}\bigg).
\end{align*}
Applying a union bound over $j,r, t$ completes the proof.
\end{proof}


\begin{lemma}[Restatement of Lemma~\ref{lemma:signal_polulation_loss_main}]\label{lemma:signal_polulation_loss}
Let $T$ be defined in Lemma~\ref{lemma:phase1_main_sketch} respectively. Under the same conditions as Theorem~\ref{thm:signal_learning_main}, for any $0 \leq t \leq T$ with $L_{S}(\Wb^{(t)}) \leq 1$, it holds that $L_{\cD}(\Wb^{(t)}) \leq 6 \cdot L_{S}(\Wb^{(t)}) + \exp(- n^{2})$.
\end{lemma}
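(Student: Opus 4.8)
The plan is to compare the margin of a fresh example with the margins on the training set through the \emph{signal-only} part of the prediction, which, by the patch-symmetry of $f$, depends on an example only through its label. For $y\in\{\pm 1\}$ write $q_y := y\cdot\frac1m\sum_{j,r}j\,\sigma(\la\wb_{j,r}^{(t)},y\bmu\ra)$; this is exactly the contribution of the signal patch to $y\cdot f(\Wb^{(t)},\xb)$ for \emph{every} data point carrying label $y$, training or test. Let $\cE$ be the event, over a fresh noise vector $\bxi$, that $\max_{j,r}|\la\wb_{j,r}^{(t)},\bxi\ra|\le 1/2$ for all $0\le t\le T$. Lemma~\ref{lm:signalg}, together with the fact that Condition~\ref{condition:d_sigma0_eta} forces $\sigma_0\sigma_p\sqrt d\le \tilde O(n^{-1})$ (so that $C_2^{-1}\sigma_0^{-2}\sigma_p^{-2}d^{-1}=\tilde\Omega(n^2)$, and the prefactor $4mT$ is absorbed using $n=\Omega(\polylog(d))$ and that $T$ is polynomially bounded), gives $\PP(\cE^c)\le \exp(-\tilde\Omega(n^2))$. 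We then split
\[
L_{\cD}(\Wb^{(t)})=\EE\big[\ell(y f(\Wb^{(t)},\xb))\mathbf{1}_\cE\big]+\EE\big[\ell(y f(\Wb^{(t)},\xb))\mathbf{1}_{\cE^c}\big]
\]
and bound the two terms separately.

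For the first term, on $\cE$ we have $0\le \tfrac1m\sum_r\sigma(\la\wb_{j,r}^{(t)},\bxi\ra)\le (1/2)^q\le 1/2$ for each $j$, so the noise patch perturbs the margin by at most $1/2$, i.e.\ $y\cdot f(\Wb^{(t)},\xb)\ge q_y-1/2$. Since $\ell$ is decreasing and nonnegative and $y$ is uniform on $\{\pm 1\}$ and independent of $\bxi$,
\[
\EE\big[\ell(y f(\Wb^{(t)},\xb))\mathbf{1}_\cE\big]\le \EE\big[\ell(q_y-1/2)\big]=\tfrac12\ell(q_{+1}-1/2)+\tfrac12\ell(q_{-1}-1/2).
\]
Symmetrically, Lemma~\ref{lm:signalg2} gives $|\la\wb_{j,r}^{(t)},\bxi_i\ra|\le 1/2$ for every training point, hence $y_i f(\Wb^{(t)},\xb_i)\le q_{y_i}+1/2$ and $\ell(y_i f(\Wb^{(t)},\xb_i))\ge \ell(q_{y_i}+1/2)$; averaging over $i$ and using Lemma~\ref{lemma:numberofdata} (at least $n/4$ training points carry each label) yields $L_S(\Wb^{(t)})\ge \tfrac14[\ell(q_{+1}+1/2)+\ell(q_{-1}+1/2)]$. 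The two displays are linked by the elementary inequality $\ell(z-a)\le e^{2a}\ell(z+a)$, which follows from $\tfrac{d}{dz}\log\ell(z)=\ell'(z)/\ell(z)\ge -1$ (a consequence of $-\ell'(z)=\tfrac{1}{1+e^{z}}\le \log(1+e^{-z})=\ell(z)$): taking $a=1/2$ gives $\ell(q_y-1/2)\le e\,\ell(q_y+1/2)$, so the first term is at most $\tfrac{e}{2}[\ell(q_{+1}+1/2)+\ell(q_{-1}+1/2)]\le 2e\,L_S(\Wb^{(t)})\le 6 L_S(\Wb^{(t)})$.

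For the second term we use $\ell(z)\le 1+|z|$ together with the norm bound $\|\wb_{j,r}^{(t)}\|_2\le \tilde O(\sigma_0\sqrt d+\|\bmu\|_2^{-1})$ (from the signal-noise decomposition with Proposition~\ref{Prop:noise}, Lemma~\ref{thm:signal_proof}, and the initialization estimates of Lemma~\ref{lemma:initialization_norms}) and the fact that the signal-patch activations are $\tilde O(1)$, to see that $\EE[\ell(y f(\Wb^{(t)},\xb))^2]$ is polynomially bounded in the problem parameters via Gaussian moment estimates for $\bxi$. Cauchy--Schwarz then gives $\EE[\ell(y f(\Wb^{(t)},\xb))\mathbf{1}_{\cE^c}]\le \sqrt{\EE[\ell^2]\cdot\PP(\cE^c)}\le \exp(-n^2)$, the polynomial and $\log T$ factors being absorbed into $\exp(-\tilde\Omega(n^2))$ using $n=\Omega(\polylog(d))$ (and the footnote convention absorbing $\polylog(\epsilon^{-1})$ factors). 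Adding the two estimates yields $L_{\cD}(\Wb^{(t)})\le 6 L_S(\Wb^{(t)})+\exp(-n^2)$.

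The main obstacle is the second term: one must verify that the rare-event probability from Lemma~\ref{lm:signalg} is small enough to dominate both the training horizon $T$ (which may be $\poly(\epsilon^{-1})$) and the second moment of the loss on $\cE^c$, and this is exactly where the upper bound on $\sigma_0$ and the lower bound $n=\Omega(\polylog(d))$ in Condition~\ref{condition:d_sigma0_eta} are used. The hypothesis $L_S(\Wb^{(t)})\le 1$ corresponds to the regime of interest in which the lemma is subsequently applied, where $6L_S(\Wb^{(t)})=O(1)$.
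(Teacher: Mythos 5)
Your proof is correct, and its skeleton --- conditioning on the event $\cE$ of Lemma~\ref{lm:signalg}, bounding the main term by comparison with training margins, and killing the tail by Cauchy--Schwarz against $\PP(\cE^c)=\exp(-\tilde\Omega(n^2))$ --- coincides with the paper's. The treatment of the main term, however, is genuinely different. The paper selects a single training example with $\ell(y_if(\Wb^{(t)},\xb_i))\le L_S(\Wb^{(t)})\le 1$ (this is precisely where the hypothesis $L_S\le 1$ enters, to guarantee $y_if(\Wb^{(t)},\xb_i)\ge 0$ and hence $\exp(-y_if(\Wb^{(t)},\xb_i))\le 2\ell(y_if(\Wb^{(t)},\xb_i))$), and then bounds $|yf(\Wb^{(t)},\xb)-y_if(\Wb^{(t)},\xb_i)|\le 1$ by the two noise-patch contributions; strictly speaking that comparison requires the signal contributions of the test point and the chosen training point to cancel, which holds only when their labels agree --- a point the paper elides. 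Your label-conditional argument via $q_y$ handles this cleanly: you compare the test margin to the margins of the at least $n/4$ training examples carrying the same label (Lemma~\ref{lemma:numberofdata}), and replace the paper's $\exp(-z)\le 2\ell(z)$ trick by the $1$-Lipschitzness of $\log\ell$ (equivalently $-\ell'\le\ell$), which you verify correctly. Both routes produce the constant $2e\le 6$, and yours has the added virtue of not actually needing the hypothesis $L_S(\Wb^{(t)})\le 1$. The tail bounds agree up to cosmetics: the paper uses $\ell(yf)\le 1+F_{-y}(\Wb^{(t)},\xb)$ and works with the filters after removing the signal component (which is orthogonal to $\bxi$), while you use $\ell(z)\le 1+|z|$ and the full filter norm; both reduce to a polynomially bounded second moment multiplied by $\sqrt{\PP(\cE^c)}$, with the polynomial and $mT$ prefactors absorbed exactly as in the paper using the upper bound on $\sigma_0$ and $n=\Omega(\polylog(d))$.
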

\begin{proof}[Proof of Lemma~\ref{lemma:signal_polulation_loss}]
Let event $\cE$ to be the event that Lemma~\ref{lm:signalg} holds. Then we can divide $L_{\cD}(\Wb^{(t)})$ into two parts:

\begin{align}
\EE\big[\ell\big(yf(\Wb^{(t)}, \xb)\big)\big] &= \underbrace{\EE[\ind(\cE)\ell\big(yf(\Wb^{(t)}, \xb)\big)]}_{I_{1}} + \underbrace{\EE[\ind(\cE^{c})\ell\big(yf(\Wb^{(t)}, \xb)\big)]}_{I_{2}}.\label{generalize all}
\end{align}
In the following, we bound $I_1$ and $I_2$ respectively.

\noindent\textbf{Bounding $I_1$:} Since $L_{S}(\Wb^{(t)}) \leq 1$, there must exist one $(\xb_{i},y_{i})$ such that $\ell\big(y_{i}f(\Wb^{(t)}, \xb_{i})\big)\leq L_{S}(\Wb^{(t)}) \leq 1$, which implies that $y_{i}f(\Wb^{(t)},\xb_{i}) \geq 0$. Therefore, we have that
\begin{align}
\exp(-y_{i}f(\Wb^{(t)},\xb_{i}))\overset{(i)}{\leq} 2\log\big(1+\exp(-y_{i}f(\Wb^{(t)},\xb_{i}))\big) = 2\ell\big(y_{i}f(\Wb^{(t)}, \xb_{i})\big) \leq 2L_{S}(\Wb^{(t)}),\label{eq:I1bbd}    
\end{align}
where (i) is by $z \leq 2\log(1+z), \forall z \leq 1$. If event $\cE$ holds, we have that 
\begin{align}
|yf(\Wb^{(t)}, \xb) - y_{i}f(\Wb^{(t)}, \xb_{i})| \nonumber
&\leq \frac{1}{m}\sum_{j,r}\sigma(\la\wb_{j,r}, \bxi_{i}\ra) + \frac{1}{m}\sum_{j,r}\sigma(\la\wb_{j,r}, \bxi\ra ) \nonumber\\
&\leq \frac{1}{m}\sum_{j,r}\sigma(1/2) + \frac{1}{m}\sum_{j,r}\sigma(1/2)\nonumber\\
&\leq 1,\label{eq:I1bbd2} 
\end{align}
where the second inequality is by $\max_{j,r}|\la \wb_{j,r}^{(t)}, \bxi \ra| \leq 1/2$ in Lemma~\ref{lm:signalg} and $\max_{j,r}|\la \wb_{j,r}^{(t)}, \bxi_{i} \ra| \leq 1/2$ in Lemma~\ref{lm:signalg2}.
Thus we have that 
\begin{align*}
I_{1}&\leq \EE[\ind(\cE) \exp(-yf(\Wb^{(t)}, \xb))] \\   
&\leq e\cdot\EE[\ind(\cE) \exp(-y_{i}f(\Wb^{(t)}, \xb_{i}))] \\
&\leq 2e\cdot\EE[\ind(\cE)  L_{S}(\Wb^{(t)})],
\end{align*}
where the first inequality is by the property of cross-entropy loss that $\ell(z) \leq \exp(-z)$ for all $z$, the second inequality is by \eqref{eq:I1bbd2}, and the third inequality is by \eqref{eq:I1bbd}. Dropping the event in the expectation gives $I_{1} \leq 6L_{S}(\Wb^{(t)})$. 

\noindent\textbf{Bounding $I_2$:} Next we bound the second term $I_{2}$. We choose an arbitrary training data $(\xb_{i'},y_{i'})$ such that $y_{i'} = y$. Then we have 
\begin{align}
\ell\big(yf(\Wb^{(t)}, \xb)\big) &\leq \log(1 + \exp(F_{-y}(\Wb^{(t)}, \xb))) \notag\\    
&\leq 1 + F_{-y}(\Wb^{(t)}, \xb)\notag\\
&= 1 + \frac{1}{m}\sum_{j=-y,r\in [m]}\sigma(\la\wb_{j,r}^{(t)}, y\bmu\ra) + \frac{1}{m}\sum_{j=-y,r\in [m]}\sigma(\la\wb_{j,r}^{(t)}, \bxi\ra)\notag\\
&\leq 1 + F_{-y_{i]}}(\Wb_{-y_{i'}}, \xb_{i'}) + \frac{1}{m}\sum_{j=-y,r\in [m]}\sigma(\la\wb_{j,r}^{(t)}, \bxi\ra) \notag\\
&\leq 2  + \frac{1}{m}\sum_{j=-y,r\in [m]}\sigma(\la\wb_{j,r}^{(t)}, \bxi\ra)\notag\\
&\leq 2 + \tilde{O}((\sigma_{0}\sqrt{d})^{q})\|\bxi\|^{q}\label{eq:1.1},
\end{align}
where the first inequality is due to $F_{y}(\Wb^{(t)}, \xb) \geq 0$, the second inequality is by the property of cross-entropy loss, i.e., $\log(1+\exp(z)) \leq 1+z$ for all $z\geq 0$, the third inequality is by $ \frac{1}{m}\sum_{j=-y,r\in [m]}\sigma(\la\wb_{j,r}^{(t)}, y\bmu\ra) \leq F_{-y}(\Wb_{-y},\xb_{i'}) = F_{-y_{i'}}(\Wb_{-y_{i'}},\xb_{i'})$, the fourth inequality is by $F_{-y_{i'}}(\Wb_{-y_{i'}},\xb_{i'}) \leq 1$ in Lemma~\ref{lm: F-yi}, and the last inequality is due to $\la\tilde{\wb}_{j,r}^{(t)}, \bxi\ra = \la\wb_{j,r}^{(t)}, \bxi\ra \leq \|\tilde{\wb}_{j,r}^{(t)}\|_{2}\|\bxi\|_{2} \leq \tilde{O}(\sigma_{0}\sqrt{d})\|\bxi\|_{2}$ in \eqref{eq:tildew}. Then we further have that
\begin{align*}
I_{2} &\leq   \sqrt{\EE[\ind(\cE^{c})]} \cdot \sqrt{\EE\Big[\ell\big(yf(\Wb^{(t)}, \xb)\big)^{2}\Big]}\\
&\leq \sqrt{\mathbb{P}(\cE^{c})} \cdot \sqrt{4 + \tilde{O}((\sigma_{0}\sqrt{d})^{2q})\EE[\|\bxi\|_{2}^{2q}]}\\
&\leq \exp [ -\tilde\Omega(\sigma_{0}^{-2}\sigma_{p}^{-2}d^{-1}) +\text{polylog}(d) ]\\
&\leq \exp(- n^{2}),
\end{align*}
where the first inequality is by Cauchy-Schwartz inequality, the second inequality is by \eqref{eq:1.1}, the third inequality is by Lemma~\ref{lm:signalg} and the fact that $\sqrt{4 + \tilde{O}((\sigma_{0}\sqrt{d})^{2q})\EE[\|\bxi\|_{2}^{2q}]} = O(\poly(d))$, and the last inequality is by our condition $\sigma_{0}\leq \tilde{O}(m^{-2/(q-2)}n^{-1})\cdot (\sigma_{p}\sqrt{d})^{-1}$ in Condition~\ref{condition:d_sigma0_eta}. Plugging the bounds of $I_{1}$, $I_{2}$ into \eqref{generalize all} completes the proof.
\end{proof}

\section{Noise Memorization}
In this section, we will consider the noise memorization case under the condition that $\sigma_{p}^{q}(\sqrt{d})^{q}  \geq \tilde{\Omega}(n\|\bmu\|_{2}^{q})$. We remind the readers that the proofs in this section are based on the results in Section~\ref{sec: initial}, which hold with high probability. 

We also remind readers that $\alpha = 4\log(T^{*})$ is defined in Appendix~\ref{section:decompositionproof}. 
Denote $\bar{\beta} = \min_{i}\max_{r}\la \wb_{y_{i},r}^{(0)},\bxi_{i}\ra$. The following lemma provides a lower bound of $\bar{\beta}$.

\begin{lemma}\label{lm: betabarlower}
Under the same conditions as Theorem~\ref{thm:noise_memorization_main}, if in particular
\begin{align}
\sigma_{0} \geq  80n\sqrt{\frac{\log(4n^{2}/\delta)}{d}}\alpha \cdot \min\{(\sigma_{p}\sqrt{d})^{-1}, \|\bmu\|_{2}^{-1}\}, \label{eq:exactsigma}
\end{align}
then we have that  $\bar{\beta} \geq \sigma_{0}\sigma_{p}\sqrt{d}/4 \geq  20n\sqrt{\frac{\log(4n^{2}/\delta)}{d}}\alpha$.
\end{lemma}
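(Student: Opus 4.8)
The plan is to obtain this lemma as an essentially immediate corollary of the initialization estimate in Lemma~\ref{lemma:initialization_norms} together with the lower bound on $\sigma_0$ imposed in \eqref{eq:exactsigma}. Concretely, I would prove the two inequalities separately: (i) $\bar{\beta} \ge \sigma_0\sigma_p\sqrt{d}/4$, and (ii) $\sigma_0\sigma_p\sqrt{d}/4 \ge 20 n\sqrt{\log(4n^2/\delta)/d}\,\alpha$; chaining them yields the statement.

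For (i), recall $\bar{\beta} = \min_i \max_r \langle \wb_{y_i,r}^{(0)}, \bxi_i\rangle$. On the event $\cE_{\mathrm{prelim}}$, the lower-bound half of Lemma~\ref{lemma:initialization_norms} gives $\max_{r\in[m]} j\cdot\langle \wb_{j,r}^{(0)}, \bxi_i\rangle \ge \sigma_0\sigma_p\sqrt{d}/4$ for every $j\in\{\pm 1\}$ and $i\in[n]$. Fixing $i$ and taking $j=y_i$ — and using that $j\cdot\langle\wb_{j,r}^{(0)},\bxi_i\rangle$ and $\langle\wb_{j,r}^{(0)},\bxi_i\rangle$ have the same $N(0,\sigma_0^2\|\bxi_i\|_2^2)$ law, so the maximal-inner-product estimate in the proof of Lemma~\ref{lemma:initialization_norms} applies verbatim without the sign factor — we obtain $\max_{r}\langle\wb_{y_i,r}^{(0)},\bxi_i\rangle \ge \sigma_0\sigma_p\sqrt{d}/4$. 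Minimizing over $i\in[n]$ gives $\bar{\beta}\ge\sigma_0\sigma_p\sqrt{d}/4$.

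For (ii), I would first note that we are in the noise-memorization regime of Theorem~\ref{thm:noise_memorization_main}, where $n^{-1}\cdot\mathrm{SNR}^{-q}=\tilde{\Omega}(1)$; combined with $n=\Omega(\mathrm{polylog}(d))$ this forces $\mathrm{SNR}=\|\bmu\|_2/(\sigma_p\sqrt{d})\le 1$, i.e.\ $\|\bmu\|_2\le\sigma_p\sqrt{d}$, so that $\min\{(\sigma_p\sqrt{d})^{-1},\|\bmu\|_2^{-1}\}=(\sigma_p\sqrt{d})^{-1}$. Substituting this identity into the hypothesis \eqref{eq:exactsigma} gives $\sigma_0\ge 80 n\sqrt{\log(4n^2/\delta)/d}\,\alpha\cdot(\sigma_p\sqrt{d})^{-1}$; multiplying through by $\sigma_p\sqrt{d}/4>0$ yields $\sigma_0\sigma_p\sqrt{d}/4\ge 20 n\sqrt{\log(4n^2/\delta)/d}\,\alpha$, which is (ii). Combining (i) and (ii) completes the proof.

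There is no genuine obstacle here — the lemma is a one-line consequence of Lemma~\ref{lemma:initialization_norms}. The only two points that merit a sentence of care are the sign-matching in step~(i) (harmless, since the Gaussian coordinates are symmetric, so the max-inner-product estimate is insensitive to the $j$ factor) and checking that the $\min$ in \eqref{eq:exactsigma} indeed collapses to $(\sigma_p\sqrt{d})^{-1}$, which is precisely guaranteed by the SNR assumption under which Theorem~\ref{thm:noise_memorization_main} (and hence this lemma) is stated.
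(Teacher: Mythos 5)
Your proposal is correct and follows essentially the same route as the paper's own proof: the first inequality comes from the initialization estimate in Lemma~\ref{lemma:initialization_norms}, and the second comes from observing that the noise-memorization SNR condition forces $\sigma_p\sqrt{d}\ge\|\bmu\|_2$, so the $\min$ in \eqref{eq:exactsigma} collapses to $(\sigma_p\sqrt{d})^{-1}$ and the lower bound on $\sigma_0$ does the rest. Your extra remark about the sign factor $j$ in the max-inner-product bound is a point the paper glosses over, and your symmetry argument resolves it correctly.
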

\begin{proof}[Proof of Lemma~\ref{lm: betabarlower}]
Because $\sigma_{p}^{q}(\sqrt{d})^{q}  \geq \tilde{\Omega}(n\|\bmu\|_{2}^{q})$, we have that $\sigma_p\sqrt{d} \geq \|\bmu\|_{2}$. Therefore we have that 
\begin{align*}
\bar{\beta}&\geq \sigma_{0}\sigma_{p}\sqrt{d}/4\\
&=\sigma_{0}/4\cdot\max\{\sigma_{p}\sqrt{d}, \|\bmu\|_{2}\}\\
&\geq 20n\sqrt{\frac{\log(4n^{2}/\delta)}{d}}\alpha,
\end{align*}
where the first inequality is by Lemma~\ref{lemma:initialization_norms} and the last inequality is by our lower bound condition of $\sigma_{0}$ in \eqref{eq:exactsigma}.
\end{proof}

\subsection{First Stage}
\begin{lemma}\label{lemma:phase1_main_noise}
Under the same conditions as Theorem~\ref{thm:noise_memorization_main}, in particular if we choose 
\begin{align}
n^{-1}\mathrm{SNR}^{-q} \geq \frac{C2^{q+2}\log\big(20/(\sigma_{0}\sigma_{p}\sqrt{d})\big) (\sqrt{2 \log(8m/\delta)})^{q-2}}{ 0.15^{q-2}  }, \label{eq:explicit condition2} 
\end{align}
where $C = O(1)$ is a positive constant, then 
there exist 
\begin{align*}
T_{1} = \frac{C\log\big(20/(\sigma_{0}\sigma_{p}\sqrt{d})\big)4mn}{0.15^{q-2}\eta q\sigma_{0}^{q-2}(\sigma_{p}^{2}\sqrt{d})^{q}}  
\end{align*}
such that
\begin{itemize}
\item $\max_{j,r}\zeta_{j,r,i}^{(T_{1})} \geq 2$ for all $i \in [n]$.
\item $\max_{j,r}\gamma_{j,r}^{(t)} = \tilde{O}(\sigma_{0}\|\bmu\|_{2})$ for all $0 \leq t \leq T_{1}$.
\item $\max_{j,r,i}|\omega_{j,r,i}^{(t)}| = \tilde{O}(\sigma_{0}\sigma_{p}\sqrt{d})$ for all $0 \leq t \leq T_{1}$.
\end{itemize}
\end{lemma}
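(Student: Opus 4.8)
The plan is to mirror the signal-learning first-stage analysis (Lemma~\ref{lemma:phase1_main}) with the roles of signal and noise exchanged. We are in the noise regime, where $\sigma_p\sqrt d\ge\|\bmu\|_2$ and $n\cdot\mathrm{SNR}^q=\tilde O(1)$; under these conditions I will show that for each sample $i$ the self-coefficient $\zeta_{y_i,r,i}^{(t)}$ grows like a tensor-power iteration up to $\Theta(1)$, while the signal coefficients $\gamma_{j,r}^{(t)}$ and the opposite-sign noise coefficients $\omega_{j,r,i}^{(t)}$ stay at the initialization scale. All arguments are conditional on $\cE_{\mathrm{prelim}}$ and use: Proposition~\ref{Prop:noise} (every coefficient is bounded by $\alpha$, and $\omega_{j,r,i}^{(t)}\ge-\beta-16n\sqrt{\log(4n^2/\delta)/d}\,\alpha$); the near-orthogonality estimates of Lemmas~\ref{lemma:data_innerproducts} and~\ref{lemma:initialization_norms}; the lower bound $\bar\beta\ge\sigma_0\sigma_p\sqrt d/4$ of Lemma~\ref{lm: betabarlower}; and the output upper bounds $F_{-y_i}(\Wb_{-y_i}^{(t)},\xb_i)\le1$ (Lemma~\ref{lm: F-yi}) together with $F_{j}(\Wb_j^{(t)},\xb_i)=O(1)$ when the relevant coefficients are $O(1)$ (Lemma~\ref{lm: Fyi}). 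I also fix an upper iteration horizon $T_1^+=\Theta(T_1)$ over which all of the above remains valid; since $T_1$ is polynomial this is immediate.

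\textbf{Parts 2 and 3.} Part 3 is a direct consequence of Proposition~\ref{Prop:noise}: since $\sigma_p\sqrt d\ge\|\bmu\|_2$ we have $\beta\le4\sqrt{\log(8mn/\delta)}\,\sigma_0\sigma_p\sqrt d$, and the lower bound $\sigma_0\ge80n\sqrt{\log(4n^2/\delta)/d}\,\alpha\cdot(\sigma_p\sqrt d)^{-1}$ from \eqref{eq:exactsigma} gives $16n\sqrt{\log(4n^2/\delta)/d}\,\alpha\le\sigma_0\sigma_p\sqrt d/5$, so $|\omega_{j,r,i}^{(t)}|=\tilde O(\sigma_0\sigma_p\sqrt d)$ for all $t\le T_1$. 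For Part 2 I induct on the update \eqref{eq:update_gamma2}: using $|\ell_i'^{(t)}|\le1$ and $\sigma'(z)=q(z\vee0)^{q-1}\le q|z|^{q-1}$, each step changes $\gamma_{j,r}^{(t)}$ by at most $\frac{\eta q}{m}\big(\gamma_{j,r}^{(t)}+|\la\wb_{j,r}^{(0)},\bmu\ra|\big)^{q-1}\|\bmu\|_2^2$. Writing $\beta_\mu:=\max_{j,r}|\la\wb_{j,r}^{(0)},\bmu\ra|=\tilde O(\sigma_0\|\bmu\|_2)$ and assuming inductively $\gamma_{j,r}^{(t)}\le\beta_\mu$, the total change over the at most $T_1$ steps is bounded by $T_1\cdot\frac{\eta q}{m}(2\beta_\mu)^{q-1}\|\bmu\|_2^2=\tilde O\big(n\cdot\mathrm{SNR}^q\big)\cdot\beta_\mu$, which is $\le\beta_\mu$ precisely because of the SNR threshold \eqref{eq:explicit condition2} (whose explicit constant, including the $(\sqrt{2\log(8m/\delta)})^{q-2}$ factor, is exactly what turns the $\tilde O$ into the literal bound). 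Hence $\gamma_{j,r}^{(t)}\le\beta_\mu=\tilde O(\sigma_0\|\bmu\|_2)$ for all $t\le T_1$, closing the induction.

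\textbf{Part 1.} Fix $i$, set $j=y_i$, and let $T_{1,i}$ be the first time $t$ with $\max_r\zeta_{j,r,i}^{(t)}>2$ (and $T_{1,i}=\infty$ if this never happens). For $t<T_{1,i}$, the bounds just established on $\gamma$ and $\omega$ together with $\max_r\zeta_{j,r,i}^{(t)}\le2$, Lemma~\ref{lm: Fyi}, and Lemma~\ref{lm: F-yi} give $F_{+1}(\Wb_{+1}^{(t)},\xb_i),F_{-1}(\Wb_{-1}^{(t)},\xb_i)=O(1)$, hence $-\ell_i'^{(t)}\ge C_1$ for an absolute constant $C_1>0$. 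Put $B_i^{(t)}=\max_r\big(\zeta_{j,r,i}^{(t)}+\la\wb_{j,r}^{(0)},\bxi_i\ra\big)$, so $B_i^{(0)}\ge\bar\beta\ge\sigma_0\sigma_p\sqrt d/4$. In \eqref{eq:update_zeta2} the cross terms $\sum_{i'\ne i}\big(\zeta_{j,r,i'}^{(t)}+\omega_{j,r,i'}^{(t)}\big)\la\bxi_{i'},\bxi_i\ra\|\bxi_{i'}\|_2^{-2}$ are at most $4n\sqrt{\log(4n^2/\delta)/d}\,\alpha\le\bar\beta/5$ in absolute value by Lemma~\ref{lm: betabarlower} (this is where the lower bound \eqref{eq:exactsigma} is used), so the argument of $\sigma'$ at the maximizing neuron is at least $0.8\,B_i^{(t)}\ge0.15\,B_i^{(t)}$ whenever $B_i^{(t)}\ge\bar\beta$; using $\|\bxi_i\|_2^2\ge\sigma_p^2d/2$ and the monotonicity $B_i^{(t)}\ge B_i^{(0)}$,
\begin{align*}
B_i^{(t+1)}\ \ge\ B_i^{(t)}+\frac{C_1\eta q(0.15)^{q-1}\sigma_p^2d}{2nm}\big[B_i^{(t)}\big]^{q-1}\ \ge\ \Big(1+\frac{c\,\eta q\,\sigma_0^{q-2}(\sigma_p\sqrt d)^{q}}{nm}\Big)B_i^{(t)}
\end{align*}
for an absolute constant $c>0$. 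Condition~\ref{condition:d_sigma0_eta} makes the per-step factor at most $2$, so $1+z\ge e^{z/2}$ yields $B_i^{(t)}\ge\exp\big(c'\eta q\sigma_0^{q-2}(\sigma_p\sqrt d)^q t/(nm)\big)\cdot\sigma_0\sigma_p\sqrt d/4$, which exceeds $3$ once $t\ge T_1=\Theta\big(mn\log(1/(\sigma_0\sigma_p\sqrt d))/(\eta q\sigma_0^{q-2}(\sigma_p\sqrt d)^q)\big)$, matching the stated $T_1$. Since $\max_r\zeta_{j,r,i}^{(t)}\ge B_i^{(t)}-\max_r|\la\wb_{j,r}^{(0)},\bxi_i\ra|\ge3-\tilde O(\sigma_0\sigma_p\sqrt d)\ge2$, we must have $T_{1,i}\le T_1$, and monotonicity of $\zeta_{j,r,i}^{(t)}$ gives $\max_r\zeta_{j,r,i}^{(T_1)}\ge2$. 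Since neither $\bar\beta$ nor $T_1$ depends on $i$, this holds for all $i\in[n]$ simultaneously, which is Part 1.

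\textbf{Main obstacle.} The delicate point is decoupling the three coupled dynamics without circularity: the exponential growth of $\zeta_{y_i,r,i}^{(t)}$ relies on $-\ell_i'^{(t)}=\Theta(1)$, which in turn needs $\gamma$ and $\omega$ negligible and $\zeta$ not yet saturated; the $\gamma$ bound needs only $|\ell'|\le1$, and the $\omega$ bound is inherited from Proposition~\ref{Prop:noise}. Introducing the stopping time $T_{1,i}$ makes all the implications one-directional. The remaining care is quantitative: the constant $0.15$, the lower bound on $\sigma_0$ in \eqref{eq:exactsigma}, and the precise SNR threshold \eqref{eq:explicit condition2} are exactly the slack needed to dominate, respectively, the noise-vector cross terms, the random-initialization terms, and the spurious growth of the signal coefficient, uniformly over $i\in[n]$.
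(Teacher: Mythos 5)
Your proposal is correct and follows essentially the same route as the paper's proof: the $\omega$ bound is read off from Proposition~\ref{Prop:noise}, the $\gamma$ bound comes from a telescoping/induction argument with $|\ell_i'^{(t)}|\le 1$ whose closure is exactly the SNR threshold \eqref{eq:explicit condition2}, and the $\zeta$ growth is driven by the shifted quantity $B_i^{(t)}$ with $B_i^{(0)}\ge\Omega(\bar\beta)$, constant loss derivatives before the stopping time, and a tensor-power/exponential recursion that reaches $\Theta(1)$ by time $T_1\le T_1^+/2$. The only differences are cosmetic (you omit the $-0.4\bar\beta$ shift in $B_i^{(t)}$ and absorb the cross-term loss into a $0.8$ factor instead), so no changes are needed.
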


\begin{proof}[Proof of Lemma~\ref{lemma:phase1_main_noise}]
Let 
\begin{align}
T_{1}^{+} = \frac{m}{\eta q 2^{q-1} (\sqrt{2 \log(8m/\delta)})^{q-2} \sigma_0^{q-2}\|\bmu\|_{2}^q}. \label{eq: T1+}
\end{align}
By Proposition~\ref{Prop:noise}, we have that $\omega_{j,r,i}^{(t)} \geq -\beta - 16n\sqrt{\frac{\log(4n^{2}/\delta)}{d}}\alpha \geq -\beta - \bar{\beta}$ for all $j\in \{\pm 1\}$, $r\in [m]$, $i\in [n]$ and $0 \leq t\leq T^*$. Since $\omega_{j,r,i}^{(t)} \leq 0$ and $\bar{\beta} \leq \beta = \tilde{O}(\sigma_{0}\sigma_{p}\sqrt{d})$, we have that $\max_{j,r,i}|\omega_{j,r,i}^{(t)}| = \tilde{O}(\sigma_{0}\sigma_{p}\sqrt{d})$.
Next, we will carefully compute the growth of the $\gamma_{j,r}^{(t)}$.
\begin{align*}
    \gamma_{j,r}^{(t+1)} &= \gamma_{j,r}^{(t)} - \frac{\eta}{nm} \cdot \sum_{i=1}^n \ell_i'^{(t)} \cdot \sigma'(\la\wb_{j,r}^{(t)}, y_{i} \cdot \bmu\ra)\|\bmu\|_{2}^{2}\\
    &\leq \gamma_{j,r}^{(t)} + \frac{\eta}{nm}\cdot \sum_{i=1}^{n} \sigma'(|\la\wb_{j,r}^{(0)}, \bmu\ra| + \gamma_{j,r}^{(t)})\|\bmu\|_{2}^{2},
\end{align*}
where the inequality is by $|\ell'|\leq 1$. Let $A^{(t)} = \max_{j,r}\{\gamma_{j,r}^{(t)} + |\la\wb_{j,r}^{(0)}, \bmu\ra|\}$, then we have that
\begin{align}
A^{(t+1)} \leq A^{(t)} + \frac{\eta q \|\bmu\|_{2}^2}{m} [A^{(t)}]^{q-1}.\label{telescope initial}
\end{align}
We will use induction to prove that $A^{(t)} \leq 2A^{(0)}$ for $t\leq T_{1}^{+}$. By definition, clearly we have that $A^{(0)} \leq 2A^{(0)}$. Now suppose that there exists some $\tilde{T} \leq T_{1}^{+}$ such that $A^{(t)} \leq 2A^{(0)}$ holds for $0 \leq t\leq \tilde{T}-1$. Taking a telescoping sum of \eqref{telescope initial} gives that 
\begin{align*}
A^{(\tilde{T})} &\leq A^{(0)} + \sum_{s=0}^{\tilde{T}} \frac{\eta q \|\bmu\|_{2}^{2}}{m} [A^{(s)}]^{q-1}\\
&\leq A^{(0)} +  \frac{\eta q \|\bmu\|_{2}^{2}T_{1}^{+}2^{q-1}}{m}[A^{(0)}]^{q-1}\\
&\leq  A^{(0)} +  \frac{\eta q \|\bmu\|_{2}^{2}T_{1}^{+}2^{q-1}}{m}[\sqrt{2 \log(8m/\delta)} \cdot \sigma_0 \| \bmu \|_2]^{q-2}A^{(0)}\\
&\leq 2A^{(0)},
\end{align*}
where the second inequality is by our induction hypothesis, the third inequality is by $A_{0} \leq \sqrt{2 \log(8m/\delta)} \cdot \sigma_0 \| \bmu \|_2$ in Lemma~\ref{lemma:initialization_norms}, and the last inequality is by \eqref{eq: T1+}.
Thus we have that $A^{(t)}\leq 2A^{(0)}$ for all $t\leq T_1^{+}$. Therefore, $\max_{j,r}\gamma_{j,r}^{(t)}\leq A^{(t)} + \max_{j,r}\{|\la \wb_{j,r}^{(0)}, \bmu\ra|\} \leq 3A^{(0)}$ for all $0 \leq  t\leq T_1^{+}$.
Recall that 
\begin{align*}
\zeta_{j,r,i}^{(t+1)} &= \zeta_{j,r,i}^{(t)} - \frac{\eta}{nm} \cdot \ell_i'^{(t)}\cdot \sigma'(\la\wb^{(t)}_{j,r}, \bxi_{i}\ra) \cdot \ind(y_{i} = j)\|\bxi_{i}\|_{2}^{2}.   
\end{align*}
For $y_{i} = j$, Lemma~\ref{lm:oppositebound} implies that
\begin{align*}
\la \wb_{j,r}^{(t)} ,\bxi_{i} \ra 
&\geq \la\wb_{j,r}^{(0)}, \bxi_{i}\ra + \zeta_{j,r,i}^{(t)} - 8n\sqrt{\frac{\log(4n^{2}/\delta)}{d}}\alpha\\
&\geq \zeta^{(t)}_{j,r,i} + \la\wb_{j,r}^{(0)}, \bxi_{i}\ra - 0.4\bar{\beta},
\end{align*}
where the last inequality is by $\bar{\beta} \geq 20n\sqrt{\frac{\log(4n^{2}/\delta)}{d}}\alpha$. 
Now let $B_{i}^{(t)} = \max_{j = y_{i},r}\{\zeta_{j,r,i}^{(t)} + \la \wb_{j,r}^{(0)}, \bxi_{i}\ra - 0.4\bar{\beta}\}$.
For each $i$, denote by $T_{1}^{(i)}$ the last time in the period $[0, T_{1}^{+}]$ satisfying that $\zeta_{j,r,i}^{(t)} \leq 2$. Then for $t \leq T_{1}^{(i)}$, $\max_{j,r}\{ |\zeta_{j,r,i}^{(t)}|, |\omega_{j,r,i}^{(t)}|\} = O(1)$ and $\max_{j,r}\gamma_{j,r}^{(t)} \leq 3A^{(0)} = O(1)$. Therefore, by Lemmas~\ref{lm: F-yi} and~\ref{lm: Fyi}, we know that $F_{-1}(\Wb^{(t)},\xb_{i}), F_{+1}(\Wb^{(t)},\xb_{i}) = O(1)$. Thus there exists a positive constant $C_{1}$ such that $-\ell'^{(t)}_{i}  \geq C_{1} $ for all $0 \leq  t\leq T_{1}^{(i)}$. It is also easy to check that $B_{i}^{(0)} \geq 0.6 \bar{\beta} \geq 0.15\sigma_{0}\sigma_{p}\sqrt{d}$. Then we can carefully compute the growth of $B_{i}^{(t)}$,
\begin{align*}
B_{i}^{(t+1)} &\geq B_{i}^{(t)} + \frac{C_{1}\eta q\sigma_{p}^{2}d}{2nm}[B_{i}^{(t)}]^{q-1}\\
&\geq B_{i}^{(t)} +  \frac{C_{1}\eta q\sigma_{p}^{2}d}{2nm}[B_{i}^{(0)}]^{q-2}B_{i}^{(t)}\\
&\geq \bigg(1 +  \frac{C_{1}0.15^{q-2}\eta q\sigma_{0}^{q-2}(\sigma_{p}^{2}\sqrt{d})^{q}}{2nm}\bigg)B_{i}^{(t)},
\end{align*}
where the second inequality is by the non-decreasing property of $B_{i}^{(t)}$. Therefore, $B_{i}^{(t)}$ is an exponentially increasing sequence and we have that 
\begin{align*}
B_{i}^{(t)} &\geq \bigg(1 +  \frac{C_{1}0.15^{q-2}\eta q\sigma_{0}^{q-2}(\sigma_{p}^{2}\sqrt{d})^{q}}{2nm}\bigg)^{t}B_{i}^{(0)} \\
&\geq  \exp\bigg(\frac{C_{1}0.15^{q-2}\eta q\sigma_{0}^{q-2}(\sigma_{p}^{2}\sqrt{d})^{q}}{4nm}t\bigg)B_{i}^{(0)}\\
&\geq \exp\bigg(\frac{C_{1}0.15^{q-2}\eta q\sigma_{0}^{q-2}(\sigma_{p}^{2}\sqrt{d})^{q}}{4nm}t\bigg)\cdot 0.15\sigma_{0}\sigma_{p}\sqrt{d},
\end{align*}
where the second inequality is due to the fact that $1+z \geq \exp(z/2)$ for $z \leq 2$ and our conditions of $\eta$ and $\sigma_{0}$ listed in Condition~\ref{condition:d_sigma0_eta}, and the last inequality is due to $B_{i}^{(0)}\geq 0.15\sigma_{0}\sigma_{p}\sqrt{d}$. Therefore, $B_{i}^{(t)}$ will reach $3$ within $T_{1} = \frac{\log\big(20/(\sigma_{0}\sigma_{p}\sqrt{d})\big)4mn}{C_{1}0.15^{q-2}\eta q\sigma_{0}^{q-2}(\sigma_{p}^{2}\sqrt{d})^{q}}$ iterations. Since $\max_{j=y_{i}, r}\zeta_{j,r, i}^{(t)} \geq B_{i}^{(t)} - \max_{j=y_{i}, r}|\la \wb_{j,r}^{(0)}, \xi_{i} \ra| +0.4\bar{\beta} \geq B_{i}^{(t)}  - 1$,  $\max_{j=y_{i}, r}\zeta_{j,r,i}^{(t)}$ will reach $2$ within $T_{1}$ iterations. We can next verify that 
\begin{align*}
T_{1} = \frac{\log\big(20/(\sigma_{0}\sigma_{p}\sqrt{d})\big)4mn}{C_{1}0.15^{q-2}\eta q\sigma_{0}^{q-2}(\sigma_{p}^{2}\sqrt{d})^{q}} \leq \frac{m}{\eta q 2^{q} (\sqrt{2 \log(8m/\delta)})^{q-2} \sigma_0^{q-2}\|\bmu\|_{2}^q} = T_{1}^{+}/2,   
\end{align*}
where the inequality holds due to our SNR condition in \eqref{eq:explicit condition2}. Therefore, by the definition of $T_{1}^{(i)}$, we have $T_{1}^{(i)} \leq T_{1} \leq T_{1}^{+}/2$, where we use the non-decreasing property of $\zeta_{j,r,i}$. This completes the proof.
\end{proof}

\subsection{Second Stage}
By the signal-noise decompositon, at the end of the first stage, we have
\begin{align*}
\wb_{j,r}^{(T_{1})} &= \wb_{j,r}^{(0)} + j \cdot \gamma_{j,r}^{(T_{1})} \cdot \frac{\bmu}{\|\bmu\|_{2}^{2}} + \sum_{ i = 1}^n \zeta_{j,r,i}^{(T_{1})} \cdot \frac{\bxi_{i}}{\|\bxi_{i}\|_{2}^{2}} + \sum_{ i = 1}^n \omega_{j,r,i}^{(T_{1})} \cdot \frac{\bxi_{i}}{\|\bxi_{i}\|_{2}^{2}}
\end{align*}
for $j\in\{\pm 1\}$ and $r\in[m]$. 
By the results we get in the first stage, we know that at the beginning of this stage, we have following property holds:
\begin{itemize}
\item $\max_{ r}\zeta_{y_{i}, r, i}^{(T_{1})} \geq 2$ for all $i \in [n]$.
\item $\max_{j,r,i}|\omega_{j,r,i}^{(T_{1})}| = \tilde{O}(\sigma_{0}\sigma_{p}\sqrt{d})$.
\item $\max_{j,r}\gamma_{j,r}^{(T_{1})} \leq \hat{\beta}'$, where $\hat{\beta}' = \tilde{O}(\sigma_{0}\|\bmu\|_{2})$.
\end{itemize}
Note that Lemma~\ref{lemma:coefficient_iterative} implies that the learned noise $\zeta_{j,r,i}^{(t)}$ will not decrease, i.e., $\zeta_{j,r,i}^{(t+1)}\geq \zeta_{j,r,i}^{(t)}$. Therefore, for all data index $i$, we have $\max_{ r}\zeta_{y_{i}, r, i}^{(t)} \geq 2$ for all $t\geq T_{1}$. Now we choose $\Wb^{*}$ as follows
\begin{align*}
\wb^{*}_{j,r} = \wb_{j,r}^{(0)} + 2qm\log(2q/\epsilon))\bigg[\sum_{ i = 1}^n \ind(j = y_{i}) \cdot \frac{\bxi_{i}}{\|\bxi_{i}\|_{2}}\bigg].
\end{align*}
Based on the definition of $\Wb^{*}$, we have the following lemma.
\begin{lemma}\label{lm:distance2}
Under the same conditions as Theorem~\ref{thm:noise_memorization_main}, we have that $\|\Wb^{(T_{1})} - \Wb^{*}\|_{F} \leq  \tilde{O}(m^{2}n^{1/2}\sigma_{p}^{-1}d^{-1/2})$.
\end{lemma}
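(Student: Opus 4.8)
The plan is to mirror the proof of Lemma~\ref{lm:distance1}, the analogous statement from the signal-learning case, the only new ingredient being that the ``target'' perturbation now lives in the span of the $n$ noise vectors rather than in the single direction $\bmu$. By the triangle inequality,
\[
\|\Wb^{(T_1)} - \Wb^*\|_F \le \|\Wb^{(T_1)} - \Wb^{(0)}\|_F + \|\Wb^{(0)} - \Wb^*\|_F ,
\]
so it suffices to bound the two terms separately.

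For the first term I would substitute the signal-noise decomposition \eqref{eq:w_decomposition} into $\wb_{j,r}^{(T_1)} - \wb_{j,r}^{(0)}$, apply the triangle inequality inside each filter and then the crude bound $\|\Wb\|_F\le\sum_{j,r}\|\wb_{j,r}\|_2$ over the $2m$ filters, obtaining
\[
\|\Wb^{(T_1)} - \Wb^{(0)}\|_F \le \sum_{j,r}\frac{\gamma_{j,r}^{(T_1)}}{\|\bmu\|_2} + \sum_{j,r,i}\frac{\zeta_{j,r,i}^{(T_1)}}{\|\bxi_i\|_2} + \sum_{j,r,i}\frac{|\omega_{j,r,i}^{(T_1)}|}{\|\bxi_i\|_2} .
\]
Then I would plug in the coefficient bounds already established at the end of Stage~1: $\max_{j,r}\gamma_{j,r}^{(T_1)} = \tilde O(\sigma_0\|\bmu\|_2)$ and $\max_{j,r,i}|\omega_{j,r,i}^{(T_1)}| = \tilde O(\sigma_0\sigma_p\sqrt d)$ from Lemma~\ref{lemma:phase1_main_noise}, the bound $\zeta_{j,r,i}^{(T_1)} \le \alpha = \tilde O(1)$ from Proposition~\ref{Prop:noise}, and $\|\bxi_i\|_2 = \Theta(\sigma_p\sqrt d)$ from Lemma~\ref{lemma:data_innerproducts}. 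Combined with the upper bound on $\sigma_0$ from Condition~\ref{condition:d_sigma0_eta}, these three sums are $\tilde O(m\sigma_0)+\tilde O(mn\,\sigma_p^{-1}d^{-1/2})+\tilde O(mn\sigma_0)$, which is no larger than $\tilde O(m^2 n^{1/2}\sigma_p^{-1}d^{-1/2})$.

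For the second term, the definition of $\Wb^*$ gives $\wb^*_{j,r} - \wb^{(0)}_{j,r} = 2qm\log(2q/\epsilon)\sum_{i:\,y_i=j}\|\bxi_i\|_2^{-2}\bxi_i$, so $\|\Wb^{(0)} - \Wb^*\|_F$ reduces to controlling $\big\|\sum_{i:\,y_i=j}\|\bxi_i\|_2^{-2}\bxi_i\big\|_2$ for $j\in\{\pm1\}$. I would expand this squared norm, isolating the diagonal contribution $\sum_{i:\,y_i=j}\|\bxi_i\|_2^{-2} = \Theta(n/(\sigma_p^2 d))$ from the at most $n^2$ off-diagonal cross terms, each bounded by $|\la\bxi_i,\bxi_{i'}\ra|\,\|\bxi_i\|_2^{-2}\|\bxi_{i'}\|_2^{-2}$ via Lemma~\ref{lemma:data_innerproducts}. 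The over-parameterization hypothesis $d = \tilde\Omega(n^2)$ from Condition~\ref{condition:d_sigma0_eta} then makes the off-diagonal sum lower order, so $\big\|\sum_{i:\,y_i=j}\|\bxi_i\|_2^{-2}\bxi_i\big\|_2 = O(\sqrt n/(\sigma_p\sqrt d))$; multiplying by the prefactor $2qm\log(2q/\epsilon)$ and summing the resulting per-filter bound over all $2m$ neurons yields $\|\Wb^{(0)} - \Wb^*\|_F = \tilde O(m^2 n^{1/2}\sigma_p^{-1}d^{-1/2})$. Adding the two estimates completes the proof.

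The main obstacle will be precisely the near-orthogonality estimate in the last step: one must show that the Gram-type quadratic form built from the $n$ noise vectors is essentially $n$ times its diagonal, i.e., that the $O(n^2)$ inner-product cross terms do not accumulate, and this is exactly where the dimension condition $d = \tilde\Omega(n^2)$ enters; it is the only place the argument deviates non-trivially from the signal-learning analogue in Lemma~\ref{lm:distance1}. A secondary point to get right is that the Stage~1 bounds on $\zeta_{j,r,i}^{(T_1)}$ and $\omega_{j,r,i}^{(T_1)}$, together with the admissible range of $\sigma_0$ in Condition~\ref{condition:d_sigma0_eta}, are strong enough that the $\|\Wb^{(T_1)}-\Wb^{(0)}\|_F$ term does not exceed the claimed order.
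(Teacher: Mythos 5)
Your overall structure matches the paper's: a triangle inequality to split off $\|\Wb^{(0)}-\Wb^*\|_F$, the signal-noise decomposition for $\|\Wb^{(T_1)}-\Wb^{(0)}\|_F$, and near-orthogonality of the noise vectors for the $\Wb^*$ part, which you handle correctly. However, there is a genuine gap in your bound on $\|\Wb^{(T_1)}-\Wb^{(0)}\|_F$. You bound the $\zeta$-contribution by the crude per-vector triangle inequality, $\sum_{j,r,i}\zeta_{j,r,i}^{(T_1)}/\|\bxi_i\|_2 = \tilde O(mn\,\sigma_p^{-1}d^{-1/2})$, and then assert this is at most $\tilde O(m^2 n^{1/2}\sigma_p^{-1}d^{-1/2})$. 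That implication requires $n^{1/2}\lesssim m$, which Condition~\ref{condition:d_sigma0_eta} does not impose: $n$ and $m$ are only required to be $\Omega(\polylog(d))$, and $d=\tilde\Omega(m^2n^4)$ allows $n$ to be polynomially large in $d$ while $m$ stays polylogarithmic. The reason this same crude bound is harmless in Lemma~\ref{lm:distance1} is that there all $|\rho_{j,r,i}^{(T_1)}|=O(\sigma_0\sigma_p\sqrt d)$, so the factor of $n$ is absorbed by the smallness of $\sigma_0$; here, by contrast, the whole point of Stage~1 of noise memorization is that $\max_{r}\zeta_{y_i,r,i}^{(T_1)}\geq 2$ for every $i$, so the $\zeta$'s are only bounded by $\alpha=\tilde O(1)$ and nothing absorbs the extra $\sqrt n$.

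The fix is exactly the near-orthogonality estimate you already carry out for $\|\Wb^{(0)}-\Wb^*\|_F$, applied once more: for each filter bound $\big\|\sum_i\zeta_{j,r,i}^{(T_1)}\|\bxi_i\|_2^{-2}\bxi_i+\sum_i\omega_{j,r,i}^{(T_1)}\|\bxi_i\|_2^{-2}\bxi_i\big\|_2$ by expanding the square, using $|\zeta_{j,r,i}^{(T_1)}|,|\omega_{j,r,i}^{(T_1)}|\leq\alpha=\tilde O(1)$ on the diagonal and Lemma~\ref{lemma:data_innerproducts} on the cross terms (with $d=\tilde\Omega(n^2)$ making them lower order), which gives $\tilde O(\sqrt n\,\sigma_p^{-1}d^{-1/2})$ per filter; aggregating over the $2m$ filters then yields a contribution dominated by the $\Wb^*$ term. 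This is precisely what the paper does: it writes the noise part of $\|\Wb^{(T_1)}-\Wb^{(0)}\|_F$ as $O(\sqrt m)\max_{j,r}\big\|\sum_i\zeta_{j,r,i}^{(T_1)}\bxi_i/\|\bxi_i\|_2^2+\sum_i\omega_{j,r,i}^{(T_1)}\bxi_i/\|\bxi_i\|_2^2\big\|_2$ rather than summing norms over $i$. With that single change your argument goes through.
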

\begin{proof}[Proof of Lemma~\ref{lm:distance2}] We have
\begin{align*}
\|\Wb^{(T_{1})} - \Wb^{*}\|_{F} &\leq \|\Wb^{(T_{1})} - \Wb^{(0)}\|_{F} + \|\Wb^{(0)} - \Wb^{*}\|_{F} \\
&\leq \sum_{j,r}\gamma_{j,r}^{(T_{1})}\|\bmu\|_{2}^{-1} + O(\sqrt{m})\max_{j,r}\bigg\|\sum_{ i = 1}^n \zeta_{j,r,i}^{(T_{1})} \cdot \frac{\bxi_{i}}{\|\bxi_{i}\|_{2}^{2}} + \sum_{ i = 1}^n \omega_{j,r,i}^{(T_{1})} \cdot \frac{\bxi_{i}}{\|\bxi_{i}\|_{2}^{2}}\bigg\|_{2}\\
&\qquad + O(m^{3/2}n^{1/2}\log(1/\epsilon)\sigma_{p}^{-1}d^{-1/2})\\
&\leq \tilde{O}(m^{3/2}n^{1/2}\sigma_{p}^{-1}d^{-1/2}),
\end{align*}
where the first inequality is by triangle inequality, the second inequality is by our decomposition of $\Wb^{(T_{1})}, \Wb^{*}$ and  Lemma~\ref{lemma:data_innerproducts} (notice that different noises are almost orthogonal), and the last inequality is by Proposition~\ref{Prop:noise} and Lemma~\ref{lemma:phase1_main_noise}. This completes the proof.
\end{proof}

\begin{lemma}\label{lm: Gradient Stable2}
Under the same conditions as Theorem~\ref{thm:noise_memorization_main}, we have that
\begin{align*}
y_{i}\la \nabla f(\Wb^{(t)}, \xb_{i}), \Wb^{*} \ra \geq q\log(2q/\epsilon)   
\end{align*}
for all $ T_{1} \leq t\leq T^{*}$.
\end{lemma}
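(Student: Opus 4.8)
The plan is to mimic the proof of Lemma~\ref{lm: Gradient Stable} from the signal-learning case, but now with $\Wb^*$ constructed from the noise vectors rather than the signal vector. Writing out $y_i \la \nabla f(\Wb^{(t)},\xb_i), \Wb^* \ra$ explicitly using $q$-homogeneity and the definition of $\wb^*_{j,r} = \wb_{j,r}^{(0)} + 2qm\log(2q/\epsilon)\sum_{i'}\ind(j=y_{i'})\bxi_{i'}\|\bxi_{i'}\|_2^{-1}$, I would split the inner product into three groups of terms: (i) the ``main'' term coming from $\sigma'(\la \wb_{j,r}^{(t)},\bxi_i\ra)$ paired with the $\bxi_i/\|\bxi_i\|_2$ component of $\wb^*_{j,r}$ when $j=y_i$ — this should contribute roughly $2q\log(2q/\epsilon) \cdot \frac{1}{m}\sum_{r: j=y_i}\sigma'(\la\wb_{y_i,r}^{(t)},\bxi_i\ra)$; (ii) the ``cross'' terms coming from $\sigma'(\la\wb_{j,r}^{(t)},\bxi_i\ra)$ paired with $\bxi_{i'}/\|\bxi_{i'}\|_2$ for $i'\ne i$, which are controlled by the near-orthogonality bound $|\la\bxi_i,\bxi_{i'}\ra| \le 2\sigma_p^2\sqrt{d\log(4n^2/\delta)}$ of Lemma~\ref{lemma:data_innerproducts}; and (iii) the terms involving $\la\wb_{j,r}^{(0)},\bmu\ra$ and $\la\wb_{j,r}^{(0)},\bxi_i\ra$, bounded by Lemma~\ref{lemma:initialization_norms}.

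The key quantitative inputs I would invoke are: from the first-stage result Lemma~\ref{lemma:phase1_main_noise} (together with the monotonicity of $\zeta$), $\max_{r}\zeta_{y_i,r,i}^{(t)} \ge 2$ for all $t\ge T_1$, which via Lemma~\ref{lm: Fyi} and $\la\wb_{y_i,r}^{(0)},\bxi_i\ra \ge -0.5\beta$ gives $\max_r \la\wb_{y_i,r}^{(t)},\bxi_i\ra \ge 1$, hence $\max_r \sigma'(\la\wb_{y_i,r}^{(t)},\bxi_i\ra) \ge q$; and Proposition~\ref{Prop:noise} plus Lemmas~\ref{lm: mubound}, \ref{lm:oppositebound} which give $|\la\wb_{j,r}^{(t)},\bmu\ra| = \tilde O(1)$ and $|\la\wb_{j,r}^{(t)},\bxi_i\ra| = \tilde O(1)$ uniformly, so all the $\sigma'$ factors are $\tilde O(1)$. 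Then the main term is at least $2q\log(2q/\epsilon)$ (after dividing the $2qm\log(2q/\epsilon)$ factor by $m$ and noting at least one neuron among $\{r: j=y_i\}$ has $\sigma' \ge q$ while the others contribute nonnegatively), and the error terms (ii), (iii) are bounded by $\tilde O(n\cdot \sqrt{d\log(4n^2/\delta)}/(\sigma_p^2 d) \cdot \sigma_p^2 d / \|\bxi\|_2 \cdot \log(1/\epsilon)) + \tilde O(\sigma_0\sigma_p\sqrt d \log(1/\epsilon)) + \tilde O(\sigma_0\|\bmu\|_2\log(1/\epsilon))$, which are all $o(q\log(2q/\epsilon))$ under Condition~\ref{condition:d_sigma0_eta} (in particular $d = \tilde\Omega(n^4)$ kills the cross terms and $\sigma_0 \le \tilde O(m^{-2/(q-2)}n^{-1})\min\{(\sigma_p\sqrt d)^{-1},\|\bmu\|_2^{-1}\}$ kills the initialization terms). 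Subtracting the errors from $2q\log(2q/\epsilon)$ leaves at least $q\log(2q/\epsilon)$, as desired.

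The main obstacle I anticipate is bookkeeping the cross terms in group (ii) carefully: because $\wb^*_{j,r}$ now contains a sum over \emph{all} $i'$ with $y_{i'}=j$, the inner product $y_i\la\sigma'(\la\wb_{j,r}^{(t)},\bxi_i\ra)\bxi_i, \wb^*_{j,r}\ra$ picks up $\la\bxi_i,\bxi_{i'}\ra$ cross terms for every $i'\ne i$ in the same class, and one must make sure the aggregate $\sum_{i'\ne i}|\la\bxi_i,\bxi_{i'}\ra|/\|\bxi_{i'}\|_2 \le 2n\sigma_p^2\sqrt{d\log(4n^2/\delta)}/(\sigma_p\sqrt{d/2}) = \tilde O(n\sigma_p\sqrt d / \sqrt d) = \tilde O(n\sigma_p)$ times $\log(1/\epsilon)$, and that $\tilde O(n\sigma_p\log(1/\epsilon))$ is negligible compared to $q\log(2q/\epsilon)$ — this requires $d$ large enough relative to $n$, which is exactly what Condition~\ref{condition:d_sigma0_eta} guarantees. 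The rest is a routine adaptation of the signal-learning argument with $\bmu$ replaced by the noise patches, so I would keep it brief and refer back to Lemma~\ref{lm: Gradient Stable} for the structure.
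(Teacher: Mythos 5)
Your proposal follows essentially the same route as the paper's proof: expand $y_i\la\nabla f(\Wb^{(t)},\xb_i),\Wb^*\ra$ into the diagonal noise term, the cross terms $\la\bxi_i,\bxi_{i'}\ra$ for $i'\neq i$, and the initialization terms; lower-bound the diagonal term by $2q\log(2q/\epsilon)$ via $\max_r\zeta_{y_i,r,i}^{(t)}\geq 2$ and Lemma~\ref{lm:oppositebound}; and absorb the error terms using Lemmas~\ref{lemma:data_innerproducts}, \ref{lemma:initialization_norms} and Condition~\ref{condition:d_sigma0_eta}. The only wobble is in your cross-term arithmetic ($\tilde O(n\sigma_p)$ versus the paper's $\tilde O(mnd^{-1/2})$), which traces back to the normalization of $\bxi_{i'}$ in the definition of $\Wb^*$ and does not affect the validity of the argument.
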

\begin{proof}[Proof of Lemma~\ref{lm: Gradient Stable2}]
Recall that $f(\Wb^{(t)}, \xb_{i}) = (1/m) {\sum_{j,r}}j\cdot \big[\sigma(\la\wb_{j,r},y_{i}\cdot\bmu\ra) + \sigma(\la\wb_{j,r}, \bxi_{i}\ra)\big] $, so we have 
\begin{align}
&y_{i}\la \nabla f(\Wb^{(t)}, \xb_{i}), \Wb^{*} \ra \notag\\
&= \frac{1}{m}\sum_{j,r}\sigma'(\la \wb_{j,r}^{(t)}, y_{i}\bmu\ra)\la \bmu,  j\wb_{j,r}^{*}\ra + \frac{1}{m}\sum_{j,r}\sigma'(\la \wb_{j,r}^{(t)}, \bxi_{i}\ra)\la y_{i}\bxi_{i},  j\wb_{j,r}^{*}\ra \notag\\
&= \frac{1}{m}\sum_{j,r}\sum_{i'=1}^{n}\sigma'(\la \wb_{j,r}^{(t)}, \bxi_{i}\ra)2qm\log(2q/\epsilon) \ind(j = y_{i'}) \cdot \frac{\la\bxi_{i'}, \bxi_{i} \ra}{\|\bxi_{i'}\|_{2}}\notag\\
&\qquad+ \frac{1}{m}\sum_{j,r}\sigma'(\la \wb_{j,r}^{(t)}, y_{i}\bmu\ra)\la \bmu,  j\wb_{j,r}^{(0)}\ra  + \frac{1}{m}\sum_{j,r}\sigma'(\la \wb_{j,r}^{(t)}, \bxi_{i}\ra)\la y_{i}\bxi_{i},  j\wb_{j,r}^{(0)}\ra \notag\\
&\geq \frac{1}{m}\sum_{j=y_{i},r}\sigma'(\la \wb_{j,r}^{(t)}, \bxi_{i}\ra)2qm\log(2q/\epsilon) - \frac{1}{m}\sum_{j,r}\sum_{i'\not= i}\sigma'(\la \wb_{j,r}^{(t)}, \bxi_{i}\ra)2qm\log(2q/\epsilon)\frac{|\la\bxi_{i'}, \bxi_{i} \ra|}{\|\bxi_{i'}\|_{2}} \notag \\
&\qquad -  \frac{1}{m}\sum_{j,r}\sigma'(\la \wb_{j,r}^{(t)}, y_{i}\bmu\ra)\tilde{O}(\sigma_{0}\|\bmu\|_{2})  - \frac{1}{m}\sum_{j,r}\sigma'(\la \wb_{j,r}^{(t)}, \bxi_{i}\ra)\tilde{O}(\sigma_{0}\sigma_{p}\sqrt{d}) \notag\\
&\geq \frac{1}{m}\sum_{j=y_{i},r}\sigma'(\la \wb_{j,r}^{(t)}, \bxi_{i}\ra)2qm\log(2q/\epsilon) - \frac{1}{m}\sum_{j,r}\sigma'(\la \wb_{j,r}^{(t)}, \bxi_{i}\ra)\tilde{O}(mnd^{-1/2}) \notag \\
&\qquad -  \frac{1}{m}\sum_{j,r}\sigma'(\la \wb_{j,r}^{(t)}, y_{i}\bmu\ra)\tilde{O}(\sigma_{0}\|\bmu\|_{2})  - \frac{1}{m}\sum_{j,r}\sigma'(\la \wb_{j,r}^{(t)}, \bxi_{i}\ra)\tilde{O}(\sigma_{0}\sigma_{p}\sqrt{d}),
\label{eq: inner2}
\end{align}
where the first inequality is by Lemma~\ref{lemma:initialization_norms} and the last inequality is by Lemma~\ref{lemma:data_innerproducts}.  Next we will bound the inner-product terms in \eqref{eq: inner} respectively. 
By Lemma~\ref{lm: Fyi}, we have that 
\begin{align}
|\la \wb_{j,r}^{(t)}, y_{i}\bmu\ra| \leq |\la \wb_{j,r}^{(0)}, y_{i}\bmu \ra| + \gamma_{j,r}^{(t)} \leq  \tilde{O}(1), \label{eq:GS21}
\end{align}
where the last inequality is by Proposition~\ref{Prop:noise}.

For $j = y_{i}$, we can bound the inner product between the parameter and the noise as follows
\begin{align}
\max_{j,r}\la \wb_{j,r}^{(t)},  \bxi_{i}\ra &\geq \max_{j,r}\big[\la\wb_{j,r}^{(0)}, \bxi_{i}\ra + \zeta_{j,r,i}^{(t)}\big] -  8n\sqrt{\frac{\log(4n^{2}/\delta)}{d}}\alpha \geq 1, \label{eq:GS22}
\end{align}
where the first inequality is by Lemma~\ref{lm:oppositebound}, the second inequality is by Lemma~\ref{lemma:phase1_main_noise}.

For $j = -y_{i}$, we can bound the inner product between the parameter and the noise as follows
\begin{align}
\la \wb_{j,r}^{(t)},  \bxi_{i}\ra &\leq \la\wb_{j,r}^{(0)}, \bxi_{i}\ra  + 8n\sqrt{\frac{\log(4n^{2}/\delta)}{d}}\alpha \leq 1\label{eq:GS23},
\end{align}
where the first inequality is by Lemma~\ref{lm: F-yi} and the last inequality is by Lemma~\ref{lemma:initialization_norms} and the conditions of $\sigma_{0}$ and $d$ in Condition~\ref{condition:d_sigma0_eta}.
Therefore, plugging \eqref{eq:GS21}, \eqref{eq:GS22}, \eqref{eq:GS23} into \eqref{eq: inner2} gives 
\begin{align*}
y_{i}\la \nabla f(\Wb^{(t)}, \xb_{i}), \Wb^{*} \ra 
&\geq 2q\log(2q/\epsilon))  - \tilde{O}(mnd^{-1/2})  -\tilde{O}(\sigma_{0}\|\bmu\|_{2}) - \tilde{O}(\sigma_{0}\sigma_{p}\sqrt{d}) \\
&\geq q\log(2q/\epsilon),
\end{align*}
where the last inequality is by $d\geq \tilde{\Omega}(m^{2}n^{4})$ and $\sigma_{0} \leq \tilde{O}(m^{-2/(q-2)}n^{-1})\cdot\min\{(\sigma_{p}\sqrt{d})^{-1}, \|\bmu\|_{2}^{-1}\}$ in Condition~\ref{condition:d_sigma0_eta}.
\end{proof}
\begin{lemma}\label{lemma:noise_stage2_homogeneity}
Under the same conditions as Theorem~\ref{thm:noise_memorization_main}, we have that  
\begin{align*}
\|\Wb^{(t)} - \Wb^{*}\|_{F}^{2} - \|\Wb^{(t+1)} - \Wb^{*}\|_{F}^{2} \geq (2q-1)\eta L_{S}(\Wb^{(t)}) - \eta\epsilon
\end{align*}
for all $ T_{1} \leq t \leq T^{*}$.
\end{lemma}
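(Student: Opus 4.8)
The plan is to mirror exactly the argument used for the signal-learning case in Lemma~\ref{lemma:signal_stage2_homogeneity}, since the two statements are formally identical; only the choice of the comparator $\Wb^{*}$ differs (now it is built from the noise vectors $\bxi_i$ rather than from $\bmu$), and the key inequality $y_i \la \nabla f(\Wb^{(t)},\xb_i),\Wb^{*}\ra \ge q\log(2q/\epsilon)$ has already been established for this $\Wb^{*}$ in Lemma~\ref{lm: Gradient Stable2}. So the real work has been front-loaded into that lemma, and what remains here is the homogeneity-plus-convexity bookkeeping.

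First I would expand the one-step change of the squared distance to $\Wb^{*}$ using the gradient descent update $\Wb^{(t+1)} = \Wb^{(t)} - \eta \nabla L_S(\Wb^{(t)})$:
\begin{align*}
\|\Wb^{(t)} - \Wb^{*}\|_{F}^{2} - \|\Wb^{(t+1)} - \Wb^{*}\|_{F}^{2} = 2\eta \la \nabla L_{S}(\Wb^{(t)}), \Wb^{(t)} - \Wb^{*}\ra - \eta^{2}\|\nabla L_{S}(\Wb^{(t)})\|_{F}^{2}.
\end{align*}
Next I would rewrite the inner product term: since $\nabla L_S(\Wb^{(t)}) = \frac1n\sum_i \ell_i'^{(t)} y_i \nabla f(\Wb^{(t)},\xb_i)$, and since $f(\Wb,\xb)$ is $q$-homogeneous in $\Wb$ so that $\la \nabla f(\Wb^{(t)},\xb_i),\Wb^{(t)}\ra = q f(\Wb^{(t)},\xb_i)$, the inner product becomes $\frac1n\sum_i \ell_i'^{(t)}[q y_i f(\Wb^{(t)},\xb_i) - y_i \la \nabla f(\Wb^{(t)},\xb_i),\Wb^{*}\ra]$. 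Then I apply Lemma~\ref{lm: Gradient Stable2} together with $\ell_i'^{(t)} < 0$ to replace $y_i \la \nabla f(\Wb^{(t)},\xb_i),\Wb^{*}\ra$ by its lower bound $q\log(2q/\epsilon)$, giving $\frac1n\sum_i \ell_i'^{(t)}[q y_i f(\Wb^{(t)},\xb_i) - q\log(2q/\epsilon)]$. Convexity of $\ell$ (equivalently $\ell'(z)(z - z_0) \le \ell(z) - \ell(z_0)$, but applied in the form used in the signal case) turns this into $\frac{q}{n}\sum_i[\ell(y_i f(\Wb^{(t)},\xb_i)) - \ell(\log(2q/\epsilon))] \ge q L_S(\Wb^{(t)}) - \epsilon/2$, using $\ell(\log(2q/\epsilon)) = \log(1 + \epsilon/(2q)) \le \epsilon/(2q)$. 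Finally I bound the quadratic term $\eta^2 \|\nabla L_S(\Wb^{(t)})\|_F^2$ using Lemma~\ref{lm: gradient upbound}, which gives $\|\nabla L_S(\Wb^{(t)})\|_F^2 = O(\max\{\|\bmu\|_2^2,\sigma_p^2 d\}) L_S(\Wb^{(t)})$; since $\eta = \tilde O(\min\{\|\bmu\|_2^{-2}, \sigma_p^{-2}d^{-1}\})$ by Condition~\ref{condition:d_sigma0_eta}, the factor $\eta \cdot O(\max\{\|\bmu\|_2^2,\sigma_p^2 d\})$ is at most a small constant, so $\eta^2\|\nabla L_S\|_F^2 \le \eta L_S(\Wb^{(t)})$. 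Combining the three pieces yields $2\eta[q L_S(\Wb^{(t)}) - \epsilon/2] - \eta L_S(\Wb^{(t)}) = (2q-1)\eta L_S(\Wb^{(t)}) - \eta\epsilon$, which is the claim.

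I do not expect any genuine obstacle here: the only subtlety is making sure Lemma~\ref{lm: Gradient Stable2} really does hold for all $t \in [T_1, T^{*}]$ (which it does, as stated) so that the replacement step is valid throughout the range, and that the constant absorbed from Lemma~\ref{lm: gradient upbound} into $\eta$ is indeed $\le 1$ under Condition~\ref{condition:d_sigma0_eta}. Both are routine given the earlier results, so the proof is essentially a transcription of the proof of Lemma~\ref{lemma:signal_stage2_homogeneity} with Lemma~\ref{lm: Gradient Stable} replaced by Lemma~\ref{lm: Gradient Stable2}.
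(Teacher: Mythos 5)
Your proposal is correct and follows essentially the same route as the paper, which itself proves this lemma by transcribing the proof of Lemma~\ref{lemma:signal_stage2_homogeneity} with Lemma~\ref{lm: Gradient Stable} replaced by Lemma~\ref{lm: Gradient Stable2}. Your accounting of the constants (absorbing $\eta^2\|\nabla L_S\|_F^2 \le \eta L_S$ via Lemma~\ref{lm: gradient upbound} and the step-size condition, then combining to get $(2q-1)\eta L_S - \eta\epsilon$) matches the paper's argument exactly.
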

\begin{proof}[Proof of Lemma~\ref{lemma:noise_stage2_homogeneity}]
The proof is exactly same as the proof of Lemma~\ref{lemma:signal_stage2_homogeneity}.
\begin{align*}
&\|\Wb^{(t)} - \Wb^{*}\|_{F}^{2} - \|\Wb^{(t+1)} - \Wb^{*}\|_{F}^{2}\\
&\qquad =  2\eta \la \nabla L_{S}(\Wb^{(t)}), \Wb^{(t)} - \Wb^{*}\ra - \eta^{2}\|\nabla L_{S}(\Wb^{(t)})\|_{F}^{2}\\
&\qquad =  \frac{2\eta}{n}\sum_{i=1}^{n}\ell'^{(t)}_{i}[qy_{i}f(\Wb^{(t)}, \xb_{i}) - \la \nabla f(\Wb^{(t)}, \xb_{i}), \Wb^{*} \ra] - \eta^{2}\|\nabla L_{S}(\Wb^{(t)})\|_{F}^{2}\\
&\qquad \geq   \frac{2\eta}{n}\sum_{i=1}^{n}\ell'^{(t)}_{i}[qy_{i}f(\Wb^{(t)},\xb_{i}) - q\log(6/\epsilon)] - \eta^{2}\|\nabla L_{S}(\Wb^{(t)})\|_{F}^{2}\\
&\qquad \geq \frac{2q\eta}{n}\sum_{i=1}^{n}[\ell\big(y_{i}f(\Wb^{(t)},\xb_{i})\big) - \epsilon/(2q)] - \eta^{2}\|\nabla L_{S}(\Wb^{(t)})\|_{F}^{2}\\
&\qquad \geq (2q-1)\eta L_{S}(\Wb^{(t)}) - \eta\epsilon,
\end{align*}
where the first inequality is by Lemma~\ref{lm: Gradient Stable2}, the second inequality is due to the convexity of the cross entropy function and the last inequality is due to Lemma~\ref{lm: gradient upbound}.
\end{proof}

\begin{lemma}\label{thm:noise_proof}
Under the same conditions as Theorem~\ref{thm:noise_memorization_main}, let $T = T_{1} + \Big\lfloor \frac{\|\Wb^{(T_{1})} - \Wb^{*}\|_{F}^{2}}{2\eta \epsilon}
\Big\rfloor  = T_{1} + \tilde{O}(\eta^{-1}\epsilon^{-1}m^{3}nd^{-1}\sigma_{p}^{-2})$. Then we have $\max_{j,r}\gamma_{j,r}^{(t)} \leq 2\hat{\beta}'$, $\max_{j,r,i}|\omega_{j,r,i}^{(t)}| = \tilde{O}(\sigma_{0}\sigma_{p}\sqrt{d})$ for all $T_{1} \leq t \leq T$. Besides,
\begin{align*}
\frac{1}{t - T_{1} + 1}\sum_{s=T_{1}}^{t}L_{S}(\Wb^{(s)}) \leq  \frac{\|\Wb^{(T_{1})} - \Wb^{*}\|_{F}^{2}}{(2q-1) \eta(t - T_{1} + 1)} + \frac{\epsilon}{(2q-1)}
\end{align*}
for all $T_{1} \leq t \leq T$, and we can find an iterate with training loss smaller than $\epsilon$ within $T$ iterations.
\end{lemma}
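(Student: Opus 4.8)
The plan is to carry over the two ingredients developed for the signal-learning case (Lemma~\ref{thm:signal_proof}) to the noise-memorization setting, using the noise-memorization analogues proved earlier. First I would start from the one-step potential decrease of Lemma~\ref{lemma:noise_stage2_homogeneity},
\[
\|\Wb^{(s)} - \Wb^{*}\|_{F}^{2} - \|\Wb^{(s+1)} - \Wb^{*}\|_{F}^{2} \ge (2q-1)\eta L_{S}(\Wb^{(s)}) - \eta\epsilon ,
\]
valid for all $T_{1} \le s \le T^{*}$, and sum it from $s = T_{1}$ to $s = t$. Since the potential is nonnegative this gives $\sum_{s=T_{1}}^{t} L_{S}(\Wb^{(s)}) \le [\|\Wb^{(T_{1})} - \Wb^{*}\|_{F}^{2} + \eta\epsilon(t-T_{1}+1)]/[(2q-1)\eta]$, and dividing by $t - T_{1}+1$ yields exactly the claimed average-loss inequality for every $T_{1}\le t\le T$. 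Specializing to $t = T$ with $T - T_{1} = \lfloor \|\Wb^{(T_{1})} - \Wb^{*}\|_{F}^{2}/(2\eta\epsilon)\rfloor$ makes the first term smaller than $2\epsilon/(2q-1)$ (using $\lfloor x\rfloor + 1 > x$), so the average is below $3\epsilon/(2q-1) < \epsilon$ because $q>2$; hence some iterate in $[T_{1},T]$ has training loss below $\epsilon$. The stated size $T - T_{1} = \tilde O(\eta^{-1}\epsilon^{-1} m^{3} n d^{-1}\sigma_{p}^{-2})$ then follows by plugging in $\|\Wb^{(T_{1})} - \Wb^{*}\|_{F} = \tilde O(m^{3/2} n^{1/2}\sigma_{p}^{-1}d^{-1/2})$ from Lemma~\ref{lm:distance2}.

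\textbf{Coefficient bounds.} The bound $\max_{j,r,i}|\omega_{j,r,i}^{(t)}| = \tilde O(\sigma_{0}\sigma_{p}\sqrt d)$ is immediate from Proposition~\ref{Prop:noise}: for $t \le T \le T^{*}$ we have $0 \ge \omega_{j,r,i}^{(t)} \ge -\beta - 16 n\sqrt{\log(4n^{2}/\delta)/d}\,\alpha$, and $\beta = \tilde O(\sigma_{0}\sigma_{p}\sqrt d)$ dominates the second term. For $\max_{j,r}\gamma_{j,r}^{(t)} \le 2\hat\beta'$ I would induct on $t \in [T_{1},T]$: the base case is the first-stage conclusion $\max_{j,r}\gamma_{j,r}^{(T_{1})} \le \hat\beta'$ (Lemma~\ref{lemma:phase1_main_noise}). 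For the inductive step, the update rule and $|\ell_{i}'^{(t)}| \le 1$ give $\gamma_{j,r}^{(t+1)} \le \gamma_{j,r}^{(t)} + \frac{\eta}{nm}\sum_{i}|\ell_{i}'^{(t)}|\,\sigma'(\la \wb_{j,r}^{(t)}, y_{i}\bmu\ra)\|\bmu\|_{2}^{2}$; under the hypothesis $\gamma_{j,r}^{(t)}\le 2\hat\beta'$, Lemma~\ref{lm: mubound} gives $\la \wb_{j,r}^{(t)},y_{i}\bmu\ra = \la \wb_{j,r}^{(0)},y_{i}\bmu\ra + \gamma_{j,r}^{(t)} = \tilde O(\sigma_{0}\|\bmu\|_{2})$ (since $\hat\beta' = \tilde O(\sigma_{0}\|\bmu\|_{2})$ and by Lemma~\ref{lemma:initialization_norms}), so the derivative factor is $\tilde O((\sigma_{0}\|\bmu\|_{2})^{q-1})$. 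Telescoping over $s \in [T_{1},T-1]$, bounding $\sum_{i}|\ell_{i}'^{(s)}| \le \sum_{i}\ell_{i}^{(s)} = n L_{S}(\Wb^{(s)})$, and using $\sum_{s=T_{1}}^{T} L_{S}(\Wb^{(s)}) = \tilde O(\eta^{-1}m^{3} n\sigma_{p}^{-2}d^{-1})$ (from the telescoped inequality above together with Lemma~\ref{lm:distance2}), I obtain $\gamma_{j,r}^{(T)} \le \hat\beta' + \tilde O(m^{2} n\,\mathrm{SNR}^{2}(\sigma_{0}\|\bmu\|_{2})^{q-1})$; this is at most $2\hat\beta'$ as soon as $\tilde O(m^{2} n\,\mathrm{SNR}^{2}(\sigma_{0}\|\bmu\|_{2})^{q-2}) \le 1$, which holds by the upper bound $\sigma_{0} \le \tilde O(m^{-2/(q-2)}n^{-[1/(q-2)]\vee 1})\|\bmu\|_{2}^{-1}$ in Condition~\ref{condition:d_sigma0_eta}. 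This closes the induction and gives $\max_{j,r}\gamma_{j,r}^{(t)}\le 2\hat\beta'$ for all $t\in[T_{1},T]$.

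\textbf{Main obstacle.} The routine part is the potential/telescoping argument and the $\omega$-bound; the delicate step is the $\gamma$-induction, where one must ensure the signal coefficients cannot drift past the scale $2\hat\beta'$ even though the window $[T_{1},T]$ can be very long. This requires simultaneously exploiting three facts that are not obviously compatible: that the accumulated training loss $\sum_{s}L_{S}(\Wb^{(s)})$ is only polynomially large (a consequence of the stage-2 potential bound, hence of Lemma~\ref{lm:distance2}), that the per-step growth of $\gamma$ carries the tiny factor $(\sigma_{0}\|\bmu\|_{2})^{q-1}$ with $q>2$, and that the $\sigma_{0}$-window in Condition~\ref{condition:d_sigma0_eta} together with the noise-memorization SNR regime makes the product small. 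Getting the exponents to line up is the crux; the rest follows the template of Lemma~\ref{thm:signal_proof}.
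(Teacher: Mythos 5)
Your proposal is correct and follows essentially the same route as the paper: telescoping the potential decrease from Lemma~\ref{lemma:noise_stage2_homogeneity}, specializing to the chosen $T$ via Lemma~\ref{lm:distance2}, reading the $\omega$-bound off Proposition~\ref{Prop:noise}, and running the same induction on $\gamma$ using $|\ell'|\leq \ell$ and the accumulated-loss bound. The only small imprecision is attributing the closing inequality $\tilde O(m^{2}n\,\mathrm{SNR}^{2}(\sigma_{0}\|\bmu\|_{2})^{q-2})\leq 1$ to the $\sigma_{0}$ upper bound alone — the paper first uses $n^{-1}\mathrm{SNR}^{-q}=\tilde\Omega(1)$ to reduce $n\,\mathrm{SNR}^{2}$ to $n^{1-2/q}$ before invoking Condition~\ref{condition:d_sigma0_eta} — but you acknowledge the SNR regime elsewhere, so this is cosmetic.
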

\begin{proof}[Proof of Lemma~\ref{thm:noise_proof}]
By Lemma~\ref{lemma:noise_stage2_homogeneity}, for any $T_{1} \leq t \leq T$, we obtain that  
\begin{align}
\|\Wb^{(s)} - \Wb^{*}\|_{F}^{2} - \|\Wb^{(s+1)} - \Wb^{*}\|_{F}^{2} \geq (2q-1)\eta L_{S}(\Wb^{(s)}) - \eta\epsilon \label{eq: bounddistance}
\end{align}
holds for $T_1 \leq s \leq t$. Taking a summation, we have that 
\begin{align}
\sum_{s=T_{1}}^{t}L_{S}(\Wb^{(s)}) &\leq  \frac{\|\Wb^{(T_{1})} - \Wb^{*}\|_{F}^{2} + \eta\epsilon (t - T_{1} + 1)}{(2q-1) \eta} \notag\\
&\overset{(i)}{\leq} \frac{2\|\Wb^{(T_{1})} - \Wb^{*}\|_{F}^{2} }{(2q-1) \eta} \notag\\
&\overset{(ii)}{=} \tilde{O}(\eta^{-1}m^{3}nd^{-1}\sigma_{p}^{-2})\label{eq: sum2},
\end{align}
where (i) is by $t \leq T_{2}$ and (ii) is by Lemma~\ref{lm:distance2} 
Then we can use induction to prove that $\max_{j,r}\gamma_{j,r}^{(t)} \leq 2\hat{\beta}'$ for all $t \in [T_1,T]$. Clearly, by the definition of $\hat{\beta}'$, we have $\max_{j,r}\gamma_{j,r}^{(T_1)} \leq \hat{\beta}' \leq 2\hat{\beta}'$. Now suppose that there exists $\tilde{T} \in [T_1, T]$ such that $\max_{j,r}\gamma_{j,r}^{(t)} \leq 2\hat{\beta}'$ for all $t \in [T_1,\tilde{T}-1]$. Then by \eqref{eq:update_gamma2}, we have
\begin{align*}
    \gamma_{j,r}^{(\tilde{T})} &= \gamma_{j,r}^{(T_{1})} - \frac{\eta}{nm} \sum_{s=T_{1}}^{\tilde{T} - 1}\sum_{i=1}^n \ell_i'^{(t)} \cdot \sigma'(\la\wb_{j,r}^{(t)}, y_{i} \cdot \bmu\ra)\|\bmu\|_{2}^{2},\\
    &\overset{(i)}{\leq} \gamma_{j,r}^{(T_{1})}  + \frac{q3^{q-1}\eta}{nm}\|\bmu\|_{2}^{2}\hat{\beta}'^{q-1} \sum_{s=T_{1}}^{\tilde{T} - 1}\sum_{i=1}^n |\ell_i'^{(t)}|\\
    &\overset{(ii)}{\leq} \gamma_{j,r}^{(T_{1})}  + q3^{q-1}\eta m^{-1}\|\bmu\|_{2}^{2}\hat{\beta}'^{q-1} \sum_{s=T_{1}}^{\tilde{T} - 1}L_{S}(\Wb^{(s)})\\
    &\overset{(iii)}{\leq} \gamma_{j,r}^{(T_{1})}  + \hat{\beta}'^{q-1} \tilde{O}(m^{2}n\mathrm{SNR}^{2})\\
    &\overset{(iv)}{\leq} \gamma_{j,r}^{(T_{1})}  + \hat{\beta}'^{q-1} \tilde{O}(m^{2}n^{1-2/q})\\
    &\overset{(v)}{\leq} 2\hat{\beta}'
\end{align*}
for all $j\in \{\pm1\}$ and $r\in [m]$, 
where (i) is by induction hypothesis $\max_{j,r}\gamma_{j,r}^{(t)} \leq 2\hat{\beta}'$, (ii) is by $|\ell'| \leq \ell$, (iii) is by \eqref{eq: sum2}, (iv) is by $n^{-1}\mathrm{SNR}^{-q}\geq \tilde{\Omega}(1)$, and $(v)$ is by $\hat{\beta}' = \tilde{O}(\sigma_{0}\|\bmu\|_{2})$ and 
$\hat{\beta}'^{q-2}\tilde{O}(m^{2}n^{1-2/q}) = \tilde{O}(m^{2}n^{1-2/q}(\sigma_{0}\|\bmu\|_{2})^{q-2}) \leq 1$ by Condition~\ref{condition:d_sigma0_eta}. Therefore, we have $\max_{j,r}\gamma_{j,r}^{(\tilde{T})} \leq 2\hat{\beta}'$, which completes the induction.
\end{proof}

\subsection{Population Loss}
\begin{lemma}[4th statement of Theorem~\ref{thm:noise_memorization_main}]\label{lemma: noise generalization}
Under the same conditions as Theorem~\ref{thm:noise_memorization_main}, within $\tilde{O}(\eta^{-1}n\sigma_{0}^{2-q}\sigma_{p}^{-q}d^{-q/2} + \eta^{-1}\epsilon^{-1}m^{3}n\sigma_{p}^{-2}d^{-1})$ iterations, we can find $\Wb^{(T)}$ such that $L_{S}(\Wb^{(T)}) \leq \epsilon$. Besides, for any $0 \leq t\leq T$ we have that $L_{\cD}(\Wb^{(t)}) \geq 0.1$.
\end{lemma}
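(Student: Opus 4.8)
The statement splits into a convergence claim and a population-loss lower bound. The convergence part---that $L_S(\Wb^{(T)})\le\epsilon$ is reached within the stated number of iterations---is essentially a repackaging of Lemma~\ref{thm:noise_proof}: with $T = T_1 + \lfloor\|\Wb^{(T_1)}-\Wb^*\|_F^2/(2\eta\epsilon)\rfloor$ that lemma already produces some $t\in[T_1,T]$ with $L_S(\Wb^{(t)})<\epsilon$, and the overall iteration count follows by substituting $T_1$ from Lemma~\ref{lemma:phase1_main_noise} and the distance bound $\|\Wb^{(T_1)}-\Wb^*\|_F = \tilde O(m^{3/2}n^{1/2}\sigma_p^{-1}d^{-1/2})$ from Lemma~\ref{lm:distance2}. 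So the real work is the lower bound $L_{\cD}(\Wb^{(t)})\ge 0.1$, which I will establish for an arbitrary fixed $0\le t\le T$, working on the probability-$(1-d^{-1})$ event on which all the earlier high-probability statements (Section~\ref{sec: initial}, Proposition~\ref{Prop:noise}, Lemmas~\ref{lemma:phase1_main_noise} and~\ref{thm:noise_proof}) hold.

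\textbf{Reduction to the activations on a fresh noise vector.} Draw $(\xb,y)\sim\cD$; since the two patches enter $F_{\pm1}$ symmetrically we may take $\xb = [(y\bmu)^\top,\bxi^\top]^\top$ with $\bxi$ drawn independently of the training set, hence of $\Wb^{(t)}$. The plan is to show that $\max_{j,r}|\la\wb_{j,r}^{(t)},\bxi\ra|\le 1/2$ with probability at least $1/2$ over $\bxi$, while $\max_{j,r}|\la\wb_{j,r}^{(t)},y\bmu\ra|\le 1/2$ holds deterministically. On that event $0\le F_j(\Wb_j^{(t)},\xb)\le \tfrac{1}{m}\sum_r[\sigma(1/2)+\sigma(1/2)] = 2^{1-q}\le 1/2$ for $j\in\{\pm1\}$, so $|f(\Wb^{(t)},\xb)| = |F_{+1}-F_{-1}|\le 1/2$ and $\ell(yf(\Wb^{(t)},\xb))\ge\ell(1/2) = \log(1+e^{-1/2})$; since $\ell\ge0$ this gives $L_{\cD}(\Wb^{(t)}) = \EE[\ell(yf(\Wb^{(t)},\xb))]\ge\log(1+e^{-1/2})\cdot\PP_{\bxi}\big(\max_{j,r}|\la\wb_{j,r}^{(t)},\bxi\ra|\le1/2\big)\ge\tfrac{1}{2}\log(1+e^{-1/2})\ge0.1$. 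The signal activations are immediate: by the signal--noise decomposition (Definition~\ref{def:w_decomposition}), $|\la\wb_{j,r}^{(t)},y\bmu\ra|\le|\la\wb_{j,r}^{(0)},\bmu\ra| + \gamma_{j,r}^{(t)} = \tilde O(\sigma_0\|\bmu\|_2)\le1/2$, using Lemma~\ref{lemma:initialization_norms}, the bound $\gamma_{j,r}^{(t)} = \tilde O(\sigma_0\|\bmu\|_2)$ valid for all $t\le T$ by Lemmas~\ref{lemma:phase1_main_noise} and~\ref{thm:noise_proof}, and the upper bound on $\sigma_0\|\bmu\|_2$ in Condition~\ref{condition:d_sigma0_eta}.

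\textbf{The noise activations.} Since $\bxi\perp\bmu$, the decomposition gives $\la\wb_{j,r}^{(t)},\bxi\ra = \la\tilde\wb_{j,r}^{(t)},\bxi\ra$ with $\tilde\wb_{j,r}^{(t)} := \wb_{j,r}^{(t)} - j\gamma_{j,r}^{(t)}\|\bmu\|_2^{-2}\bmu$, and conditioned on the training data this is a mean-zero Gaussian in $\bxi$ of variance at most $\sigma_p^2\|\tilde\wb_{j,r}^{(t)}\|_2^2$. I would bound $\|\tilde\wb_{j,r}^{(t)}\|_2$ via $\|\wb_{j,r}^{(0)}\|_2 = \tilde O(\sigma_0\sqrt d)$ together with $\|\sum_i\rho_{j,r,i}^{(t)}\|\bxi_i\|_2^{-2}\bxi_i\|_2^2 = O(n\alpha^2\sigma_p^{-2}d^{-1})$, the latter by expanding the square and using $\|\bxi_i\|_2^2 = \Theta(\sigma_p^2 d)$, $|\la\bxi_i,\bxi_{i'}\ra| = \tilde O(\sigma_p^2\sqrt d)$ (Lemma~\ref{lemma:data_innerproducts}), $|\rho_{j,r,i}^{(t)}|\le\alpha$ (Proposition~\ref{Prop:noise}), and $d = \tilde\Omega(n^2)$ to absorb the cross terms. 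Thus the conditional standard deviation of $\la\wb_{j,r}^{(t)},\bxi\ra$ is at most $\tilde O(\sigma_0\sigma_p\sqrt d + \alpha\sqrt{n/d})$, which Condition~\ref{condition:d_sigma0_eta} forces below $1/(100\sqrt{\log m})$ since $\sigma_0\sigma_p\sqrt d = \tilde O(m^{-2/(q-2)}n^{-[1/(q-2)]\vee1})$ and $d$ is polynomially large in $n$, so the polynomial decay dominates the hidden polylog factors and $\sqrt{\log m}$. A Gaussian tail bound then gives $\PP_{\bxi}(|\la\wb_{j,r}^{(t)},\bxi\ra|>1/2)\le 2\exp(-\Omega(\log m))$, and a union bound over the $2m$ neurons keeps the failure probability below $1/2$, which is what the reduction requires.

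\textbf{Main obstacle.} The delicate step is controlling $\|\tilde\wb_{j,r}^{(t)}\|_2$---in particular its noise component $\sum_i\rho_{j,r,i}^{(t)}\|\bxi_i\|_2^{-2}\bxi_i$, whose coefficients can be as large as $\alpha = \tilde O(1)$---tightly enough that on a \emph{fresh} $\bxi$ the activations concentrate below $1/2$ with genuinely constant probability, not merely with probability $1-o(1)$ along one trajectory. This is exactly the mechanism of harmful overfitting: the large training margins are built from the memorized directions $\bxi_i$, which are nearly orthogonal to an unseen $\bxi$, so the network effectively outputs noise on new data. Balancing the bound on $\sigma_0$ against the neuron count $m$ in the final union bound---so the relevant event has constant rather than vanishing probability---is where Condition~\ref{condition:d_sigma0_eta} is used in full.
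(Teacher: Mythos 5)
Your proposal is correct and follows essentially the same route as the paper's proof: bound $\|\wb_{j,r}^{(t)}\|_2$ (equivalently, its component orthogonal to $\bmu$) via the signal--noise decomposition together with $\gamma_{j,r}^{(t)}=\tilde O(\sigma_0\|\bmu\|_2)$ and $|\rho_{j,r,i}^{(t)}|\le\alpha$, use the Gaussianity of the fresh $\bxi$ plus a union bound over the $2m$ neurons to show both $F_{\pm1}(\Wb^{(t)},\xb)=O(1)$ with probability at least $1/2$, and conclude $L_{\cD}(\Wb^{(t)})\ge \tfrac12\ell(O(1))\ge 0.1$, with the convergence claim inherited from Lemma~\ref{thm:noise_proof}. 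The only (immaterial) differences are that you tighten the norm of the noise component by expanding the square rather than using the triangle inequality, and you aim for $|f|\le 1/2$ rather than the paper's $F_j\le 1$.
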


\begin{proof}[Proof of Lemma~\ref{lemma: noise generalization}]

Given a new example $(x,y)$, we have that 
\begin{align*}
\|\wb_{j,r}^{(t)}\|_{2} &= \bigg\|\wb_{j,r}^{(0)} + j \cdot \gamma_{j,r}^{(t)} \cdot \frac{\bmu}{\|\bmu\|_{2}^{2}} + \sum_{ i = 1}^n \zeta_{j,r,i}^{(t)} \cdot \frac{\bxi_{i}}{\|\bxi_{i}\|_{2}^{2}} + \sum_{ i = 1}^n \omega_{j,r,i}^{(t)} \cdot \frac{\bxi_{i}}{\|\bxi_{i}\|_{2}^{2}}\bigg\|_{2}\\
&\overset{(i)}{\leq} \|\wb_{j,r}^{(0)}\|_{2} +  \frac{\gamma_{j,r}^{(t)}}{\|\bmu\|_{2}} + \sum_{ i = 1}^n  \frac{\zeta_{j,r,i}^{(t)}}{\|\bxi_{i}\|_{2}} + \sum_{ i = 1}^n  \frac{|\omega_{j,r,i}^{(t)}|}{\|\bxi_{i}\|_{2}}\\
&\overset{(ii)}{=} O(\sigma_{0}\sqrt{d})  + \tilde{O}(n\sigma_{p}^{-1}d^{-1/2}),
\end{align*}
where (i) is by triangle inequality and (ii) is by $\max_{j,r}\gamma_{j,r}^{(t)} = \tilde{O}(\sigma_{0}\|\bmu\|_{2})$ in Lemma~\ref{thm:noise_proof} and $\max_{i,j,r}|\rho_{j,r,i}| \leq 4\log(T^{*})$ in Proposition~\ref{Prop:main1}.

Therefore, we have that $\la\wb_{j,r}^{(t)}, \bxi\ra \sim \cN(0, \sigma_{p}^{2}\|\wb_{j,r}^{(t)}\|_{2}^{2})$. So with probability $1 - 1/(4m)$,
\begin{align*}
|\la\wb_{j,r}^{(t)}, \bxi\ra| \leq \tilde{O}(\sigma_{0}\sigma_{p}\sqrt{d} + nd^{-1/2}).
\end{align*}
Since the signal vector $\bmu$ is orthogonal to noises, by $\max_{j,r}\gamma_{j,r}^{(t)} \leq 2\hat\beta' = \tilde{O}(\sigma_{0}\|\bmu\|_{2})$ in Lemma~\ref{thm:noise_proof}, we also have that $|\la\wb_{j,r}^{(t)}, \bmu\ra| \leq |\la \wb_{j,r}^{(0)}, y_{i}\bmu\ra| + \gamma_{j,r}^{(t)} = \tilde{O}(\sigma_{0}\|\bmu\|_{2})$. 
Now by union bound, with probability at least $1-1/2$, we have that  \begin{align*}
F_{j}(\Wb_{j}^{(t)},\xb) &= \frac{1}{m}\sum_{r=1}^{m}\sigma(\la \wb_{j,r}^{(t)}, y\bmu \ra)
+ \frac{1}{m}\sum_{r=1}^{m}\sigma(\la \wb_{j,r}^{(t)}, \bxi \ra)\\
&\leq \max_{r} |\la \wb_{j,r}^{(t)}, \bmu\ra|^{q} + \max_{r} |\la \wb_{j,r}^{(t)}, \bxi\ra|^{q}\\
&\leq \tilde{O}(\sigma_{0}^{q}\sigma_{p}^{q}d^{q/2} + n^{q}d^{-q/2} + \sigma_{0}^{q}\|\bmu\|_{2}^{2})\\
&\leq 1,
\end{align*}
where the last inequality is by $\sigma_{0} \leq \tilde{O}(n^{-1}m^{-2/(q-2)}\cdot\min\{(\sigma_{p}\sqrt{d})^{-1}, \|\bmu\|_{2}^{-1}\})$ and $d \geq \tilde{\Omega}(m^{2}n^{4})$ in Condition~\ref{condition:d_sigma0_eta}. 
Therefore, with probability at least $1 - 1/2$, we have that 
\begin{align*}
\ell\big(y \cdot f(\Wb^{(t)},\xb)\big) \geq \log(1 + e^{-1}).   
\end{align*}
Thus $L_{\cD}(\Wb^{(t)}) \geq \log(1 + e^{-1})\cdot 0.5\geq 0.1$. This completes the proof.
\end{proof}

\bibliography{deeplearningreference}
\bibliographystyle{ims}

\end{document}